\documentclass{article}
\usepackage{microtype}
\usepackage{graphicx}
\usepackage{subcaption}
\usepackage{booktabs}
\usepackage{pifont}
\usepackage{csquotes}
\usepackage{enumitem}
\usepackage{aliascnt}
\PassOptionsToPackage{hyphens}{url}

\usepackage{titletoc} 
\makeatletter
\def\addcontentsline#1#2#3{%
  \addtocontents{#1}{\protect\contentsline{#2}{#3}{\thepage}{}%
}}
\makeatother %

\usepackage{hyperref}

\usepackage[preprint]{icml2026}

\usepackage{amsmath}
\usepackage{amssymb}
\usepackage{mathtools}
\usepackage{amsthm}
\usepackage{bbm}
\usepackage{siunitx}
\usepackage[capitalize,noabbrev]{cleveref}
\usepackage[disable,textsize=tiny]{todonotes}

\usepackage{amsmath,amsfonts,bm}
\usepackage{amssymb, amsthm}

\def\1{\bm{1}}

\DeclareMathAlphabet{\mathsfit}{\encodingdefault}{\sfdefault}{m}{sl}
\SetMathAlphabet{\mathsfit}{bold}{\encodingdefault}{\sfdefault}{bx}{n}

\newcommand{\E}{\mathbb{E}}

\newcommand{\R}{\mathbb{R}}

\newcommand{\KL}{D_{\mathrm{KL}}}

\newcommand{\dd}{\mathrm{d}}

\newcommand{\Z}{\mathcal{Z}}

\renewcommand{\P}{\mathbbm{P}}

\newcommand{\V}{\operatorname{Var}}

\DeclareMathOperator*{\argmin}{arg\,min}

\makeatletter
\DeclareRobustCommand{\cev}[1]{%
  \mathpalette\do@cev{#1}%
}
\newcommand{\do@cev}[2]{%
  \fix@cev{#1}{+}%
  \reflectbox{$\m@th#1\vec{\reflectbox{$\fix@cev{#1}{-}\m@th#1#2\fix@cev{#1}{+}$}}$}%
  \fix@cev{#1}{-}%
}
\newcommand{\fix@cev}[2]{%
  \ifx#1\displaystyle
    \mkern#23mu
  \else
    \ifx#1\textstyle
      \mkern#23mu
    \else
      \ifx#1\scriptstyle
        \mkern#22mu
      \else
        \mkern#22mu
      \fi
    \fi
  \fi
}

\makeatother

\DeclareMathOperator*{\argmax}{arg\,max}

\DeclareMathOperator{\Tr}{Tr}

\renewcommand{\P}{\mathbbm{P}}

\def\pv{{\widehat \varphi}}
\def\pu{{\varphi}}

\renewcommand{\P}{\mathbbm{P}}
\newcommand{\U}{\mathcal{U}}

\usepackage[table]{xcolor}  

\theoremstyle{plain}
\newtheorem{theorem}{Theorem}[section]

\newaliascnt{proposition}{theorem} 
\newtheorem{proposition}[proposition]{Proposition}  
\aliascntresetthe{proposition} 

\newaliascnt{remark}{theorem}

\aliascntresetthe{remark}

\newaliascnt{corollary}{theorem}

\aliascntresetthe{corollary}

\theoremstyle{definition}
\newaliascnt{definition}{theorem}
\newtheorem{definition}[definition]{Definition}
\aliascntresetthe{definition}

\newaliascnt{assumption}{theorem} 

\aliascntresetthe{assumption}

\newaliascnt{lemma}{theorem} 
\newtheorem{lemma}[lemma]{Lemma}
\aliascntresetthe{lemma}

\crefname{definition}{definition}{definitions}
\Crefname{definition}{Definition}{Definitions}

\renewcommand{\paragraph}[1]{\textbf{#1}}

\newcommand{\zerodisplayskips}{%
  \setlength{\abovedisplayskip}{2mm}%
  \setlength{\belowdisplayskip}{2mm}%
  \setlength{\abovedisplayshortskip}{0.5mm}%
  \setlength{\belowdisplayshortskip}{0.5mm}}
\appto{\normalsize}{\zerodisplayskips}
\appto{\small}{\zerodisplayskips}
\appto{\footnotesize}{\zerodisplayskips}
\usepackage{multirow}
\usepackage{makecell}
\icmltitlerunning{Bridge Matching Sampler: Scalable Sampling via Generalized Fixed-Point Diffusion Matching}

\begin{document}
\twocolumn[
  \icmltitle{Bridge Matching Sampler:\\Scalable Sampling via Generalized Fixed-Point Diffusion Matching}

  \icmlsetsymbol{equal}{*}

  \begin{icmlauthorlist}
\icmlauthor{Denis Blessing}{kit}
\icmlauthor{Lorenz Richter}{zuse,dida}
\icmlauthor{Julius Berner}{nvidia}
\icmlauthor{Egor Malitskiy}{kit}
\icmlauthor{Gerhard Neumann}{kit}
\end{icmlauthorlist}

\icmlaffiliation{kit}{Karlsruhe Institute of Technology}
\icmlaffiliation{nvidia}{NVIDIA}
\icmlaffiliation{dida}{dida}
\icmlaffiliation{zuse}{Zuse Institute Berlin}

\icmlcorrespondingauthor{Denis Blessing}{denis.blessing@kit.edu}

  \icmlkeywords{Machine Learning, ICML}

  \vskip 0.3in
]



\printAffiliationsAndNotice{}  

\begin{abstract}
Sampling from unnormalized densities using diffusion models has emerged as a powerful paradigm. However, while recent approaches that use least-squares `matching' objectives have improved scalability, they often necessitate significant trade-offs, such as restricting prior distributions or relying on unstable optimization schemes. By generalizing these methods as special forms of fixed-point iterations rooted in Nelson's relation, we develop a new method that addresses these limitations, called Bridge Matching Sampler (BMS). Our approach enables learning a stochastic transport map between arbitrary prior and target distributions with a single, scalable, and stable objective. Furthermore, we introduce a damped variant of this iteration that incorporates a regularization term to mitigate mode collapse and further stabilize training. Empirically, we demonstrate that our method enables sampling at unprecedented scales while preserving mode diversity, achieving state-of-the-art results on complex synthetic densities and high-dimensional molecular benchmarks.
\end{abstract}

\section{Introduction}

We consider the problem of sampling from a target distribution with density
$$p_{\mathrm{target}}(x) = \frac{\rho_{\mathrm{target}}(x)}{\mathcal{Z}},$$
where $\rho_{\mathrm{target}} \in C(\R^d, \R_{\ge 0})$ can be evaluated pointwise and $\mathcal{Z} = \int_{\R^d} \rho_{\mathrm{target}}(x)\,\mathrm{d}x$ is the associated normalizing constant, which is generally intractable. This problem is fundamental in computational science and underlies a wide range of applications, including molecular dynamics \cite{noe2019boltzmann, nam2025enhancing}, statistical physics~\cite{henin2022enhanced,faulkner2024sampling}, and Bayesian inference~\cite{neal1993probabilistic,gelman2013bayesian}.

Our goal is to learn a stochastic transport map from a tractable prior distribution $p_{\mathrm{prior}}$, e.g., a Gaussian, to the target $p_{\mathrm{target}}$. Formally, we seek a vector field $u$ such that the stochastic differential equation (SDE)
\begin{equation}
\label{eq: forward process X}
\mathrm{d} X_t = \sigma(t) u(X_t,t) \mathrm{d} t + \sigma(t) \mathrm{d} B_t, \quad X_0 \sim p_{\mathrm{prior}},
\end{equation}
yields a terminal marginal $\mathbb{P}^u_T = p_{\mathrm{target}}$, where $\mathbb{P}^u$ denotes the induced path measure.

Recent approaches based on \emph{stochastic optimal control} (SOC) address this problem by introducing a corresponding time-reversed process
\begin{equation}\label{eq: backward process X}
\mathrm{d} Y_t = - \sigma(t) v(Y_t,t) \mathrm{d} t + \sigma(t) \cev{\mathrm{d}} B_t, \quad Y_T \sim p_{\mathrm{target}},
\end{equation}
with path measure $\cev{\mathbb{P}}^v$ \citep{richter2023improved,vargas2024transport,blessing2025underdamped}. These methods attempt to align the forward and backward path measures by minimizing a divergence $D(\mathbb{P}^u,\cev{\mathbb{P}}^v)$ over the controls $u$ and $v$. This optimization yields (generally non-unique) optimal controls $u^*$ and $v^*$ that transport the prior to the target and vice versa. However, practical implementation of such objectives is often computationally prohibitive. The optimization requires the repeated simulation and storage of full trajectories of the SDE ~\eqref{eq: forward process X} to compute gradients -- a bottleneck that becomes particularly challenging in high-dimensional settings.

In contrast, approaches based on bridge processes and their associated path measures admit substantially more efficient simulation procedures.
Specifically, one can consider path measures of the form\footnote{This can be viewed as the \emph{reciprocal projection} of a process with independent coupling onto the reference process $\P$; see~\Cref{def: reciprocal class and proj}.}
\begin{equation}
\label{eq:bridge_ind_coupling}
\Pi^*(X) = p_{\mathrm{prior}}(X_0) p_{\mathrm{target}}(X_T) \P_{|0,T}\bigl(X | X_0, X_T\bigr),
\end{equation}
where $\P_{|0,T}$ denotes a tractable reference process conditioned on its endpoints, e.g., a Brownian bridge. 

The measure $\Pi^*$ can be represented as the distribution of a non-Markovian stochastic process of the form
\begin{equation}
\label{eq: general non-Markovian process}
    \dd X_t = \sigma(t) \xi(X,t) \dd t  +
    \sigma(t) \dd B_t, \quad X_0 \sim p_{\mathrm{prior}},
\end{equation}
where the functional $\xi$ may depend on the entire path, and in particular on the terminal value $X_T$, e.g., 
\begin{equation}
\label{eq:xi brownian bridge}
\textstyle
    \xi(X,t)= \sigma(t) \big(\int_{t}^T \sigma^2(s) \, \mathrm{d}s\big)^{-1} \big(X_T- X_t \big)
\end{equation} 
in the case of a Brownian bridge $\P_{|0,T}$; see,~\citet{albergo2025stochastic,shi2023diffusion,liu2023generalized}. By construction, the dependence in~\eqref{eq:bridge_ind_coupling} enforces the terminal marginal $X_T \sim p_{\mathrm{target}}$; however, the process $X$ in \eqref{eq: general non-Markovian process} can only be simulated when samples $X_T \sim p_{\mathrm{target}}$ are available.

\begin{algorithm}[tb]
\caption{Generalized Fixed-Point Diffusion Matching}
\label{alg:sampler}
\small
\begin{algorithmic}
\REQUIRE Initial $u_0$, number of iterations $I$
\FOR{$i\gets 0,\dots, I-1$}
\vspace{1.5pt} 
\STATE \emph{Simulation:} Simulate $X\sim \P^{u_i}$
\vspace{1.5pt} \\
\STATE \emph{Coupling:} Let $\Pi^i_{0,T} = \begin{cases} \P^{u_i}_{0,T}, & \text{ASBS\tiny~\cite{liu2025adjoint}} \\
\P_0\otimes\P^{u_i}_{T}, &\text{AS\tiny~\cite{havens2025adjoint}}
\\
\P^{u_i}_{0}\otimes\P^{u_i}_{T}, &\text{BMS (ours)}
\end{cases}$
\vspace{1.5pt} 
\STATE \emph{Buffer:} Store $(X_{0},X_{T})\sim \Pi^i_{0,T}$
\vspace{1.5pt} 
\STATE \emph{Reciprocal projection:} Let $\Pi^i = \Pi^i_{0,T}\P_{|0,T}$
\vspace{1.5pt} 
\STATE \emph{Markovianization:}
\begin{equation*}
    u_{i+1} = \argmin_{u \in \U} \E_{\Pi^i}\left[\int_0^T\tfrac{1}{2}\|\xi(X,t)-u(X_t,t)\|^2 \dd t\right]
\end{equation*}
\ENDFOR
 \STATE \textbf{Return} $u_{I}$
\end{algorithmic}
\end{algorithm}

The key idea pursued here is to \emph{Markovianize} (see~\Cref{def: markovian proj}) the non-Markovian dynamics in \eqref{eq: general non-Markovian process} by minimizing suitable loss functionals, yielding a Markovian control $u$ with associated path measure $\P^u$ as in \eqref{eq: forward process X}. The objective is to match the time marginals, namely $\P^{u}_t = \Pi^*_t$ for all $t \in [0,T]$.

In this work, we combine the two approaches described above and introduce a scalable algorithm for Markovianization, which we term \emph{Bridge Matching Sampler} (BMS). This method does not require samples from the target distribution, but only access to its unnormalized density. In this sense, our method can be viewed as a generalization of recently proposed sampling algorithms based on least-squares fixed-point iterations (often referred to as \emph{matching} algorithms) arising from stochastic optimal control theory~\cite{havens2025adjoint,liu2025adjoint,nam2025enhancing}. In contrast to these approaches, however, our framework accommodates general prior distributions and reference measures and avoids alternating optimization schemes,
which are frequently unstable in practice. To further enhance stability and robustness, we introduce a damped variant of the fixed-point iteration that regularizes the update steps. As a result, BMS enables diffusion-based sampling methods to scale to higher-dimensional and more challenging problems, while outperforming existing state-of-the-art samplers in both accuracy and computational efficiency.

\begin{table}[t]
\caption{
    Overview: Our framework considers target path measures of the form $\Pi^*=\Pi^*_{0,T}\P_{|0,T}$ with different couplings $\Pi^*_{0,T}$. We provide expressions for $\xi$ that satisfy $u^*(x,t) = \E_{\Pi^*}\left[\xi(X,t)|X_t = x\right]$ and are used for the proposed fixed-point iteration scheme in~\Cref{alg:sampler}. For an overview of the different expressions for $\xi$ see~\Cref{tab:overview for xi} in \Cref{app: algorithmic details}.
}
\vspace{-0.5em}
\begin{small}
\resizebox{\columnwidth}{!}
{
    \renewcommand{\arraystretch}{1.3}
    \setlength{\tabcolsep}{5pt}
    \begin{tabular}{@{} l l l c l  @{}}
        \toprule
        Coupling &  & \multicolumn{1}{c}{$\xi$} & Markovian \hspace{-0.4em} & \multicolumn{1}{c}{Algorithm} \\ 
         \midrule
        General \hspace{-0.3em} & 
        $ \Pi^*_{0,T}$ &
        Prop.~\ref{prop: general xi} 
        & \textcolor{red}{\ding{55}}
        \\
        Schr\"odinger (half) & 
        $\Pi^*_{0,T}=\P_0\otimes\Pi^*_T$ &
        Prop.~\ref{prop: xi for shb} 
        & \textcolor{green!60!black}{\ding{51}} & AS\tiny~\cite{havens2025adjoint}
        \\
        Schr\"odinger & 
        $\Pi^*_{0,T} = \Pi^{\mathrm{SB}}_{0,T}$ &
        Prop.~\ref{prop: xi asbs} 
        & \textcolor{green!60!black}{\ding{51}} & ASBS\tiny~\cite{liu2025adjoint}
        \\
        Independent \hspace{-0.3em} & 
        $\Pi^*_{0,T}=\Pi^*_{0}\otimes \Pi^*_{T}$ &
        Prop.~\ref{prop: TSI + xi independent couplings} 
        & \textcolor{red}{\ding{55}} & BMS (ours)
        \\
        \bottomrule
    \end{tabular}
}
\end{small}
\label{tab:overview}
\vspace{-0.5em}
\end{table}

\section{Generalized Fixed-Point Diffusion Matching}
\label{section: generalized fpdm}

\paragraph{Notation.}
We denote by $\mathcal{U} \subset C(\R^d \times [0,T] , \R^d)$ the set of admissible controls and by $\mathcal{P}$ the set of all probability measures on 
$C([0,T],\R^d)$. Moreover, we define $\mathcal{M}\subset \mathcal{P}$ to be the subset of Markov measures, i.e., the laws of all Markov processes. We define the path space measure $\P \in \mathcal{P}$ as the law of a $\R^d$-valued stochastic process $X = (X)_{t\in[0,T]}$ 
and we denote by $\P_{s}$ its marginal distribution at time $s$. Moreover, we denote by $\mathbb{P}_{|0,T}$ the path measure $\P$ conditioned on the random variables $X_0$ and $X_T$. We refer to~\Cref{app:assumptions} for further details on our notation and assumptions.

\paragraph{Overview.}
To provide a roadmap for the theoretical developments that follow, let us first present a high-level overview of our framework. We propose a fixed-point iteration that alternates between the construction of valid bridge processes and their Markovianization to update the control fields. Conceptually, each iteration $i$ consists of three steps (see also \Cref{alg:sampler} for a summary):

\begin{enumerate}[leftmargin=*,itemsep=0.5pt,topsep=0.5pt]
    \item \emph{Simulation and coupling}: We sample $X_0\sim p_{\mathrm{prior}}$ from the prior and simulate the SDE in~\eqref{eq: forward process X} using the current control $u_{i}$ to obtain $X_T$. Together with a sample from the prior, this defines the current coupling.
    \item \emph{Reciprocal projection and target drift}: Conditioned on the coupling, we use a reference process $\P_{|0,T}$ to define a bridge process $\Pi^i$.
    Our generalized target score identity in~\Cref{TSI general coupling} can now be used to derive the drift $\xi(X,t)$ for the process in~\eqref{eq: general non-Markovian process} that corresponds to the target measure $\Pi^*$; see~\Cref{tab:overview}.
    \item \emph{Markovianization}:
    Finally, we obtain $u_{i+1}$ by minimizing a least-squares regression objective that fits the Markovian control $u_i$ to the target drift $\xi$ evaluated at samples drawn from the bridge process.
\end{enumerate}

To present our general framework, which alternates between the construction of bridge processes -- also known as \emph{reciprocal projections} -- and the computation of Markovian projections, we first introduce the relevant definitions. 

\begin{definition}[Reciprocal class and reciprocal projection]
    \label{def: reciprocal class and proj}
    A path measure $\Pi \in \mathcal{P}$ is in the \emph{reciprocal class} $\mathcal{R}(\mathbb{P})$ of the reference process $\mathbb{P} \in \mathcal{M}$ if $\Pi = \Pi_{0,T}\mathbb{P}_{|0,T}$. We define the \emph{reciprocal projection} of $\Pi$ onto $\mathcal{R}(\mathbb{P})$ as
    \begin{equation}
        \mathrm{proj}_{\mathcal{R}(\mathbb{P})}(\Pi) \coloneqq \Pi_{0,T}\mathbb{P}_{|0,T}.
    \end{equation}
\end{definition}

It is important to note that, for suitable reference measures $\P$, the conditioned process $\P_{|0,T}$ can be simulated efficiently. For instance, when $\P$ corresponds to (scaled) Brownian motion, the conditioned measure $\P_{|0,T}$ is the classical \emph{Brownian bridge}, for which the non-Markovian drift $\xi$ admits the closed-form expression stated in~\eqref{eq:xi brownian bridge}. Moreover, for any $t \in [0,T]$, the marginal distribution $X_t \sim \P_{t|0,T}$ can be sampled directly, without integrating full trajectories. This eliminates the need to store entire paths in memory and leads to substantial computational savings, especially in high-dimensional settings.

In our general framework, we consider a (possibly non-Markovian) target path measure $\Pi^*$ subject to the boundary constraints
\begin{align}
\label{eq: boundary constraints}
    p_{\mathrm{prior}}(x_0)
    &= \int_{\R^d} \Pi^*_{0,T}(x_0,x_T)\, \dd x_T, \quad \mathrm{and}\\
    p_{\mathrm{target}}(x_T)
    &= \int_{\R^d} \Pi^*_{0,T}(x_0,x_T)\, \dd x_0 .
\end{align}

Together with the bridge process introduced above, these constraints define a target path measure
\begin{equation}
\Pi^* = \Pi^*_{0,T} \P_{|0,T} \in \mathcal{R}(\P),
\end{equation}
which can be represented as the law of the stochastic process
\begin{equation}
\label{eq: optimal non-Markov SDE}
    \dd X_t = \sigma(t) \xi(X,t) \dd t + \sigma(t) \dd B_t, \quad X_0 \sim p_{\mathrm{prior}}.
\end{equation}
Here, the drift $\xi$ is a \emph{path-dependent functional}, potentially depending on the entire trajectory $(X_t)_{t \in [0,T]}$. In the absence of samples from $p_{\mathrm{target}}$, constructing a valid $\xi$ is non-trivial. We will provide a feasible representation later.

Our objective is to approximate $\Pi^{*}$ by a Markov diffusion process of the form
\begin{equation}
\label{eq: optimal Markovian SDE}
    \dd X_t = \sigma(t) u^{*}(X_t,t) \dd t + \sigma(t) \dd B_t,
\end{equation}
whose time marginals coincide with those of the non-Markovian process \eqref{eq: optimal non-Markov SDE}. To formalize this approximation, we next introduce the notion of \emph{Markovian projections}; see, e.g.,~\citet{gyongy1986mimicking,shi2023diffusion,liu2022let,peluchetti2023non}.

\begin{definition}[Markovian projection]
\label{def: markovian proj}
    Let $\Pi^*$ be the path measure induced by the solution to the path-dependent SDE \eqref{eq: optimal non-Markov SDE}. The \emph{Markovian projection} of $\Pi^*$ is a path measure $\P^{u^*}$ corresponding to the solution of the Markovian SDE \eqref{eq: optimal Markovian SDE} such that for every $t \in [0,T]$, the marginal distributions of both measures coincide, i.e.
    \begin{equation}
        \Pi^*_t = \P^{u^*}_t.
    \end{equation}
\end{definition}

\begin{lemma}[Formula for Markov control]
\label{lem: markovian projection}
    Suppose $X$ solves the path-dependent SDE \eqref{eq: optimal non-Markov SDE} where $\xi(X, t)$ is a functional of the path $X$ and $\Pi^*$ is the associated non-Markovian path measure. The drift $u^*$ of the corresponding Markovian process is given by the conditional expectation
    \begin{equation}
    \label{eq: Markovian projection formula}
        u^*(x, t) =  \mathbb{E}_{\Pi^*}\left[ \xi(X, t) | X_t = x \right].
    \end{equation}
    Furthermore, it can be shown that
    \begin{equation}
    \label{eq: forward fixed point KL}
        u^* = \argmin_{u \in \U} \ \KL(\Pi^*|\P^u).
    \end{equation}
\end{lemma}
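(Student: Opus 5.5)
We prove the two claims separately: first that the conditional expectation in \eqref{eq: Markovian projection formula} yields a Markov process with the correct marginals, and then that it minimizes the forward KL divergence \eqref{eq: forward fixed point KL}.

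\textbf{Marginals via Fokker--Planck.} Fix a smooth, compactly supported test function $f$ and apply It\^o's formula to $f(X_t)$ along the path-dependent SDE \eqref{eq: optimal non-Markov SDE}. Taking expectations under $\Pi^*$ removes the martingale term and gives
\begin{equation*}
\tfrac{\dd}{\dd t}\E_{\Pi^*}[f(X_t)] = \E_{\Pi^*}\big[\sigma(t)\xi(X,t)\cdot\nabla f(X_t) + \tfrac{\sigma(t)^2}{2}\Delta f(X_t)\big].
\end{equation*}
By the tower property, we condition on $X_t$ inside the first term. This replaces $\xi(X,t)$ by $u^*(X_t,t)=\E_{\Pi^*}[\xi(X,t)\mid X_t]$. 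Hence the marginal flow $(\Pi^*_t)_t$ is a weak solution of the Fokker--Planck equation $\partial_t p = -\nabla\cdot(\sigma u^* p) + \tfrac{\sigma^2}{2}\Delta p$. The law $\P^{u^*}_t$ of the Markovian SDE \eqref{eq: optimal Markovian SDE} satisfies the same equation, and both start from $p_{\mathrm{prior}}$. Under the regularity assumptions of \Cref{app:assumptions}, Fokker--Planck solutions are unique, so $\Pi^*_t=\P^{u^*}_t$ for all $t$. This is the mimicking theorem of \citet{gyongy1986mimicking}. The main technical point is this uniqueness step, for which I would rely on the assumed regularity and integrability of $u^*$ rather than prove it from scratch.

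\textbf{KL characterization.} Both $\Pi^*$ and $\P^u$ have the same diffusion coefficient $\sigma$ and the same initial law. By Girsanov's theorem,
\begin{equation*}
\KL(\Pi^*|\P^u)=\E_{\Pi^*}\Big[\int_0^T \tfrac12\|\xi(X,t)-u(X_t,t)\|^2\,\dd t\Big].
\end{equation*}
The $\sigma$ factors cancel because both the drift and the noise carry $\sigma(t)$. I would verify this Novikov-type condition using the standing assumptions.

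Next, for each $t$ we decompose $\xi - u = (\xi - u^*) + (u^* - u)$. The cross term vanishes after conditioning on $X_t$, since $u^*-u$ is $X_t$-measurable and $\E_{\Pi^*}[\xi-u^*\mid X_t]=0$. This yields
\begin{equation*}
\KL(\Pi^*|\P^u)=\E_{\Pi^*}\Big[\int_0^T\tfrac12\|\xi-u^*\|^2\dd t\Big]+\int_0^T\tfrac12\E_{\Pi^*_t}\|u^*-u\|^2\dd t.
\end{equation*}
The first term does not depend on $u$, and the second is nonnegative and vanishes exactly at $u=u^*$ ($\Pi^*_t$-a.e.). This proves \eqref{eq: forward fixed point KL}.

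It also shows directly that the Markovianization step in \Cref{alg:sampler} is the $L^2$ projection onto Markovian controls.
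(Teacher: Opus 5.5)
Your Fokker--Planck argument for the marginals is fine. The paper gives no argument of its own and only cites \citet{Brunick_2013} (Cor.~3.7), whose mimicking theorem needs nothing beyond integrability of the drift. Your route instead needs uniqueness for the Fokker--Planck equation, which the local-Lipschitz/linear-growth assumption (iii)(c) of \Cref{app:assumptions} supplies. Itô's formula must be applied in whatever (possibly enlarged) filtration the SDE is posed in, but since you only use that the $\dd B$-integral has mean zero, this is harmless.

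The gap is the Girsanov step in the KL part. Both measures live on path space, so Girsanov expresses $\KL(\Pi^*|\P^u)$ through the drift of $X$ with respect to its \emph{own} filtration $\mathcal{F}_t$, generated by $(X_s)_{s\le t}$. In this paper $\xi(X,t)$ may, and typically does, depend on $X_T$ (e.g.\ the bridge drift \eqref{eq:xi brownian bridge}). It is then a drift only in an enlarged filtration, and your identity $\KL(\Pi^*|\P^u)=\E_{\Pi^*}[\int_0^T\tfrac12\|\xi(X,t)-u(X_t,t)\|^2\dd t]$ is false.

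Concretely, take $\Pi^*_{0,T}=\P_{0,T}$, so that $\Pi^*=\P$ and $\xi$ is the Brownian-bridge drift. Then $u^*=0$ and $\KL(\Pi^*|\P^{u^*})=0$, yet $\E\int_0^T\tfrac12\|\xi\|^2\dd t=\tfrac d2\int_0^T\frac{\sigma^2(t)}{\kappa(T)-\kappa(t)}\dd t=\infty$. Your decomposition then reads $\infty=\infty+(\cdots)$ and identifies nothing. When $\E\int\|\xi\|^2<\infty$, your right-hand side exceeds the true KL by the $u$-independent constant $\E_{\Pi^*}\int_0^T\tfrac12\|\xi(X,t)-\bar\xi_t\|^2\dd t$ (with $\bar\xi_t$ as below). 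So the argmin comes out right there, but the quantity you wrote is not the KL.

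The fix is to pass to the optional projection $\bar\xi_t:=\E_{\Pi^*}[\xi(X,t)\mid\mathcal{F}_t]$. By the innovations theorem (assuming $\E\int_0^T\|\xi\|\dd t<\infty$), $X$ solves $\dd X_t=\sigma(t)\bar\xi_t\dd t+\sigma(t)\dd\bar B_t$ with $\bar B$ an $\mathcal{F}$-Brownian motion. Girsanov then gives $\KL(\Pi^*|\P^u)=\E_{\Pi^*}[\int_0^T\tfrac12\|\bar\xi_t-u(X_t,t)\|^2\dd t]$. Since $X_t$ is $\mathcal{F}_t$-measurable, the tower property gives $\E_{\Pi^*}[\bar\xi_t\mid X_t]=u^*(X_t,t)$. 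Your orthogonal decomposition (\Cref{OrthProj}), applied to $\bar\xi$ instead of $\xi$, then yields $\KL(\Pi^*|\P^u)=\KL(\Pi^*|\P^{u^*})+\int_0^T\tfrac12\E_{\Pi^*_t}\|u^*-u\|^2\dd t$, which proves \eqref{eq: forward fixed point KL}. For a reciprocal $\Pi^*=\Pi^*_{0,T}\P_{|0,T}$ with $\xi$ the bridge drift, this projection is explicit: $\bar\xi_t=\sigma(t)\E_{\Pi^*}[\nabla_{X_t}\log\P_{T|t}(X_T|X_t)\mid X_0,X_t]$.
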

We refer to \citet{Brunick_2013} for a proof. While we do not have access to samples from $\Pi^*$, the Markovian projection formula \eqref{eq: Markovian projection formula} naturally suggests the fixed-point iteration
\begin{equation}
\label{eq: fixed-point iteration}
    u_{i+1} = \Phi(u_i),
\end{equation}
where
\begin{align*}
    \Phi(u_i) &:= \mathbb{E}_{\Pi^{i}} \left[\xi(X,t)| X_t \right]\\
    &=\argmin_{u \in \U} \E_{\Pi^i}\left[\int_0^T\tfrac{1}{2}\|\xi(X,t)-u(X_t,t)\|^2 \dd t\right]
\end{align*}
and
$\Pi^{i} :=\Pi^{i}_{0,T} \P_{|0,T} \in \mathcal{R}(\P)$.
In other words, the method alternates between reciprocal projections and Markovianization, see also \Cref{alg:sampler}. By construction, if $u_i = u^{*}$, then $u_{i+1} = u^{*}$, and hence $u^*$ is a fixed point of the iteration.
To make the above fixed-point iteration tractable, however, we need an expression for the non-Markovian drift $\xi$.

\subsection{General couplings}  \label{sec: general couplings}

To derive expressions for the path-dependent drift $\xi$ in the non-Markovian SDE \eqref{eq: optimal non-Markov SDE}, we first recall that the optimal Markovian drifts $u^*$ and $v^*$ satisfy Nelson's identity
\begin{equation}
\begin{split}
\label{eq: nelsons identity}
    u^*(\cdot,t) + v^*(\cdot,t) \!&=\! \sigma(t) \nabla \log \P^{u^*}_t\!= \!\sigma(t) \nabla \log \Pi^*_t,
\end{split}
\end{equation}
where $\Pi^*_t$ denotes the marginal of the target path measure at time $t$. Since the time marginals are fixed and by uniqueness of the Markovian projection, the drifts $u^*$ and $v^*$ are uniquely determined for a given coupling $\Pi^*_{0,T}$. They can be characterized explicitly as follows. We refer to \Cref{app: proofs general couplings} in the appendix for the proof.

\begin{proposition}[Markovian projections]
\label{prop: uv general coupling}
Let $\Pi^* \in \mathcal{R}(\P)$. The unique optimal drifts $u^*$ and $v^*$ corresponding to the forward and backward Markovian projections are
\begin{align*}
    u^*(x,t) &= 
    \mathbb{E}_{\Pi^*_{T | t}} \Big[ \sigma(t) \nabla_{X_t} \log \P_{T | t}(X_T | X_t)\Big| X_t \! = \! x\Big]
\end{align*}
and
\begin{align*}
    v^*(x,t) &=  \mathbb{E}_{\Pi^*_{0 | t}}\Big[  \sigma(t) \nabla_{X_t} \log \P_{t | 0}(X_t | X_0) \Big| X_t = x\Big].
\end{align*}
\end{proposition}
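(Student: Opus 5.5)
The plan is to combine the Markovian projection formula of \Cref{lem: markovian projection} with an explicit drift for the bridge $\P_{|0,T}$. The reference process is $\dd X_t = \sigma(t)\dd B_t$, possibly with a drift; if it has one, I will absorb it into the score terms below. For $\Pi^* = \Pi^*_{0,T}\P_{|0,T}$ the law is a mixture over $(X_0,X_T)\sim\Pi^*_{0,T}$ of bridges. By Doob's $h$-transform, the bridge pinned at $X_T$ solves
\begin{equation*}
\dd X_t = \sigma(t)\,\xi(X,t)\,\dd t + \sigma(t)\,\dd B_t, \qquad \xi(X,t) = \sigma(t)\,\nabla_{X_t}\log \P_{T|t}(X_T\mid X_t).
\end{equation*}
This recovers \eqref{eq:xi brownian bridge} in the Brownian case. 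Since $X_0$ is drawn from $\Pi^*_0 = p_{\mathrm{prior}}$ and, conditionally on $(X_0,X_T)$, the process is this bridge, the path-dependent SDE \eqref{eq: optimal non-Markov SDE} with this $\xi$ has law $\Pi^*$. I will verify this through the mixture representation: conditioning on $X_0$, the drift depends on $X_T$, which is enlarged-filtration information.

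Applying \Cref{lem: markovian projection} then gives $u^*(x,t)=\E_{\Pi^*}[\sigma(t)\nabla_{X_t}\log\P_{T|t}(X_T|X_t)\mid X_t=x]$. Only $X_T$ enters the integrand, so the conditional expectation reduces to one under $\Pi^*_{T|t}$, which is the first formula.

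For $v^*$ I will run the same argument in reversed time. A reversed bridge of the Markov reference started at $X_T$ and pinned at $X_0$ has backward drift $-\sigma(t)v$ in the convention of \eqref{eq: backward process X}. Its $h$-transform term is $\sigma(t)\nabla_{X_t}\log\P_{t|0}(X_t|X_0)$, plus the backward drift of the reference itself, which vanishes or is absorbed into the score convention for a driftless reference.

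As a cross-check I will verify Nelson's identity \eqref{eq: nelsons identity} directly. Write $\Pi^*_t(x)=\int \Pi^*_{0,T}(x_0,x_T)\,\P_{t|0}(x|x_0)\P_{T|t}(x_T|x)/\P_{T|0}(x_T|x_0)\,\dd x_0\dd x_T$ and differentiate $\log\Pi^*_t$. The product rule splits the gradient into exactly the two conditional expectations, $u^*+v^*=\sigma\nabla\log\Pi^*_t$. This also yields $v^*$ from $u^*$ without the time-reversal argument.

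Uniqueness follows from the uniqueness of the Markovian projection, as stated before the proposition. The main obstacle is making rigorous that the mixture of $h$-transformed bridges is exactly the solution of \eqref{eq: optimal non-Markov SDE}, with the filtration enlarged by $X_T$, and that \Cref{lem: markovian projection} applies in that setting. I would handle this by disintegrating over $X_T$, together with integrability assumptions from \Cref{app:assumptions} that justify exchanging gradient and integral.
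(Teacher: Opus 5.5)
Your proposal is correct. Its forward half is the paper's argument: Doob's $h$-transform gives the bridge drift $\sigma(t)\nabla_{X_t}\log\P_{T|t}(X_T|X_t)$, and \Cref{lem: markovian projection} yields $u^*$. The paper simply takes for granted the enlarged-filtration issue you flag.

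For $v^*$ your main route differs from the paper. The paper does exactly what you list as a cross-check:
\begin{itemize}
\item it takes Nelson's relation (\Cref{PropNelson}) as a cited input;
\item it expands $\nabla\log\Pi^*_t$ by the denoising score identity (\Cref{app: prop: dsi gen});
\item it splits $\nabla_x\log\P_{t|0,T}=\nabla_x\log\P_{t|0}+\nabla_x\log\P_{T|t}$ via the Markov factorization of the bridge;
\item it recognizes $\sigma(t)$ times the conditional expectation of the second piece as $u^*$, and reads off $v^*$.
\end{itemize}
That route needs only the forward mimicking theorem and one gradient--integral interchange. The time reversal is hidden inside Nelson's relation, so it is not really ``without time reversal''.

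Your direct reversal is symmetric and would prove Nelson's identity for $\Pi^*$ rather than assume it. Note, though, that the score computation can serve either as that verification or, with Nelson assumed, as a derivation of $v^*$, not both at once. The cost of your route is a reverse-time mimicking lemma and more careful bookkeeping.

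The bookkeeping as you wrote it needs fixing. In reverse time the $h$-function is $\P_{0|t}(X_0|X_t)$, not $\P_{t|0}$. The reversed reference carries the backward control $\sigma(t)\nabla\log\P_t$, which does not vanish for a proper initial law. By Bayes' rule these two terms sum to $\sigma(t)\nabla_{X_t}\log\P_{t|0}(X_t|X_0)$.

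Your claim that the reference's backward drift ``vanishes'' holds only if the reference is started from Lebesgue measure. That choice is permissible, since $\P_{|0,T}$ does not depend on $\P_0$. A simpler fix is to reverse the Brownian motion started at $X_0$: its backward control is directly $\sigma(t)\nabla\log\P_{t|0}(\cdot|X_0)$, and fixing the reverse-time starting point $X_T$ leaves it unchanged.

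Finally, a reference drift $\sigma(t)f$ cannot be absorbed into the score terms. It adds $f$ to $u^*$ and subtracts $f$ from $v^*$, so the stated formulas genuinely require the driftless reference.
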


Following the approach of bridge matching \cite{liu2022let, peluchetti2023non}, one might be tempted to set
\begin{equation}
    \xi(X,t) = \sigma(t) \nabla_{{X}_t} \log \P_{T | t}({X}_T | {X}_t),
\end{equation}
to perform the fixed-point iteration in \eqref{eq: fixed-point iteration}. However, $\Phi(u_i)$ evaluates the expectation with respect to $\Pi^i$ rather than the true target measure $\Pi^*$. Since the Markovian projection is marginal preserving, the coupling would remain static across all iterations, resulting in $\Pi^{i}_{0,T} = \Pi^{0}_{0,T}$ for all $i$. Intuitively, this behavior is to be expected: when the expectation is taken over $\Pi^i$, the expressions in \Cref{prop: uv general coupling} do not incorporate any information about the prior or target boundary densities.

Instead of relying on \Cref{prop: uv general coupling} alone, we leverage Nelson's identity \eqref{eq: nelsons identity} and treat $v^*$ and $\nabla \log \Pi^*_t$ separately. While $v^*$ has already been characterized in \Cref{prop: uv general coupling}, we derive an identity for $\nabla \log \Pi^*_t$ in the following proposition. We term this identity \emph{generalized target score identity} (TSI) since it allows us to recover and generalize many of the 
\emph{target score identities} in the literature~\cite{de2024target}.

\begin{proposition}[Generalized target score identity]
\label{TSI general coupling}
    Let the reference measure $\P$ be induced by scaled Brownian motion, $ \dd X_t = \sigma(t) \dd B_t$, and define $\kappa(t) \coloneqq \int_0^t \sigma^2(s)\, \dd s, \gamma(t) \coloneqq \frac{\kappa(t)}{\kappa(T)}$. Let $c \in C([0,T], (0,1])$ be an arbitrary function\footnote{The function $c\in C([0,T], (0,1])$ can be seen as a control variate; see~\Cref{appendix:control_variates_appendix_ctd} for further details.}. Then, for the target path measure $\Pi^* = \Pi^*_{0,T}\P_{|0,T}$, it holds
    \begin{align*}
        \nabla _x \log \Pi^*_t(x) =& \E_{\Pi^*_{0,T | t}}\bigg [ \frac {1-c(t)}{1-\gamma(t)}\nabla _{X_0}\log \Pi^*_{0,T}(X_0, X_T) \nonumber \\ 
        & + \frac {c(t)}{\gamma(t)}\nabla _{X_T}\log \Pi^*_{0,T}(X_0, X_T) \Big| X_t = x \bigg]. \label{eq: GTSI}
    \end{align*}
\end{proposition}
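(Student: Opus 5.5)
Since $\Pi^* = \Pi^*_{0,T}\P_{|0,T}$, the time-$t$ marginal is the mixture
\[
\Pi^*_t(x) = \int \Pi^*_{0,T}(x_0,x_T)\, \P_{t|0,T}(x|x_0,x_T)\,\dd x_0\,\dd x_T ,
\]
where, for scaled Brownian motion, the bridge marginal is Gaussian:
\[
\P_{t|0,T}(x|x_0,x_T) = \mathcal{N}\bigl(x;\ (1-\gamma(t))x_0 + \gamma(t)x_T,\ \kappa(t)(1-\gamma(t))I\bigr).
\]
The plan is to move the derivative $\nabla_x$ off the bridge kernel and onto the coupling density $\Pi^*_{0,T}$ by integration by parts. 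I will do this in two different ways, one moving it onto $x_0$ and one onto $x_T$, and then take the convex combination with weights $1-c(t)$ and $c(t)$. This is the standard derivation of target score identities (as in \citet{de2024target}), generalized to a two-sided coupling.

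\textbf{Key steps.} The starting point is
\[
\nabla_x \log \Pi^*_t(x) = \frac{1}{\Pi^*_t(x)} \int \Pi^*_{0,T}(x_0,x_T)\, \nabla_x \P_{t|0,T}(x|x_0,x_T)\,\dd x_0\,\dd x_T .
\]
The kernel depends on $(x,x_0,x_T)$ only through $x - (1-\gamma)x_0 - \gamma x_T$. This gives the two identities
\[
\nabla_x \P_{t|0,T} = -\tfrac{1}{1-\gamma(t)}\nabla_{x_0}\P_{t|0,T}, \qquad \nabla_x \P_{t|0,T} = -\tfrac{1}{\gamma(t)}\nabla_{x_T}\P_{t|0,T}.
\]
Substituting the first and integrating by parts in $x_0$ turns the integrand into $\frac{1}{1-\gamma}\,\nabla_{x_0}\Pi^*_{0,T}\,\P_{t|0,T}$. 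Writing $\nabla\Pi^*_{0,T} = \Pi^*_{0,T}\nabla\log\Pi^*_{0,T}$ and dividing by $\Pi^*_t(x)$, Bayes' rule gives the conditional expectation under $\Pi^*_{0,T|t}(\cdot|x)$ of $\frac{1}{1-\gamma(t)}\nabla_{x_0}\log\Pi^*_{0,T}$. The same argument in $x_T$ gives the expectation of $\frac{1}{\gamma(t)}\nabla_{x_T}\log\Pi^*_{0,T}$. Both expressions equal $\nabla\log\Pi^*_t(x)$, so any convex combination with weights $1-c(t)$ and $c(t)$ does too, which is exactly the claimed formula.

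\textbf{Main obstacle.} The algebra is routine; the real work is justifying the integration by parts and the interchange of $\nabla_x$ with the integral. This requires the boundary terms to vanish, i.e.\ sufficient decay and differentiability of $\Pi^*_{0,T}$, together with integrability of $\nabla\log\Pi^*_{0,T}$ under $\Pi^*$. I would take these from the standing regularity assumptions referenced in \Cref{app:assumptions}, using the Gaussian decay of the kernel for the interchange.

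One subtlety concerns the endpoints $t\in\{0,T\}$, where $\gamma(t)\in\{0,1\}$ makes one of the coefficients singular. There the statement should be read with $c(0)=0$ or $c(T)=1$ respectively, or restricted to the open interval $t\in(0,T)$. Note that $c\in(0,1]$ allows $c(T)=1$, which is the natural choice at $t=T$.
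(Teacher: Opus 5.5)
Your proposal is correct and follows essentially the same route as the paper. The paper also writes $\nabla_x\log\Pi^*_t$ as the posterior average of the bridge-kernel score (its denoising score identity), trades $\nabla_x$ for $-\tfrac{1}{1-\gamma(t)}\nabla_{x_0}$ or $-\tfrac{1}{\gamma(t)}\nabla_{x_T}$ via the Gaussian form of $\P_{t|0,T}$, integrates by parts onto $\Pi^*_{0,T}$ (using its Assumption (iv), which is stated only for $t\in(0,T)$, consistent with your endpoint caveat), and then takes the $c(t)$-weighted combination; your sign bookkeeping for the gradient relations is in fact cleaner than the paper's displayed version, which drops the minus signs.
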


Combining \Cref{prop: uv general coupling} and \Cref{TSI general coupling}, we can now get an expression for the non-Markovian drift $\xi$.

\begin{proposition}[Path-dependent drift of general target measure]
\label{prop: general xi}
Under the assumptions of~\Cref{TSI general coupling}, the path-dependent drift $\xi$ corresponding to the non-Markovian SDE \eqref{eq: optimal non-Markov SDE} that induces the target measure $\Pi^\ast$ is given by
\begin{align*}
&\sigma(t)^{-1}\xi(X,t)
= \frac{1-c(t)}{1-\gamma(t)} \, \nabla_{X_0} \log \Pi^\ast_{0,T}(X_0, X_T) \\
&\quad + \frac{c(t)}{\gamma(t)} \, \nabla_{X_T} \log \Pi^\ast_{0,T}(X_0, X_T) - \nabla_{X_t} \log \P_{t | 0}(X_t | X_0).
\end{align*}
\end{proposition}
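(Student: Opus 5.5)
The plan is to verify the two defining properties of the claimed $\xi$: its conditional expectation given $X_t = x$ under $\Pi^*$ should equal $u^*(x,t)$, so that by \Cref{lem: markovian projection} the Markovian projection of the path-dependent SDE \eqref{eq: optimal non-Markov SDE} is the correct forward process. The argument combines Nelson's identity \eqref{eq: nelsons identity} with \Cref{prop: uv general coupling} and \Cref{TSI general coupling}.

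\emph{Step 1: isolate $u^*$.} Rearrange \eqref{eq: nelsons identity} as $u^*(x,t) = \sigma(t)\nabla\log\Pi^*_t(x) - v^*(x,t)$.

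\emph{Step 2: substitute both representations.} Replace $\nabla\log\Pi^*_t(x)$ by the conditional-expectation formula of \Cref{TSI general coupling}. Replace $v^*(x,t)$ by the expression in \Cref{prop: uv general coupling}, namely $\E_{\Pi^*}[\sigma(t)\nabla_{X_t}\log\P_{t|0}(X_t|X_0)\mid X_t=x]$.

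Both are conditional expectations given $X_t=x$ under the same measure $\Pi^*$. This is because $\Pi^*_{0,T|t}$ and $\Pi^*_{0|t}$ are marginals of the conditional law of the path given $X_t$. By linearity of conditional expectation we can therefore merge them into one. Multiplying through by $\sigma(t)$ gives
\begin{equation*}
u^*(x,t) = \E_{\Pi^*}\big[\xi(X,t)\,\big|\,X_t=x\big],
\end{equation*}
with $\xi$ exactly the stated expression, which depends on the path only through $(X_0,X_t,X_T)$. This establishes the required Markovian-projection consistency.

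\emph{Step 3: justify calling $\xi$ "the" drift of a path-dependent SDE inducing $\Pi^*$.} I would interpret the proposition in the sense the paper uses throughout: $\xi$ is a (non-unique) path functional whose Markovian projection is $u^*$. Under this reading, \eqref{eq: optimal non-Markov SDE} with this $\xi$ has the same one-time marginals $\Pi^*_t$, by \Cref{lem: markovian projection} and \Cref{def: markovian proj}.

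If a stronger statement is desired, namely that the law of this SDE is exactly $\Pi^*$, I would note that $\xi$ differs from the canonical bridge drift $\sigma(t)\nabla_{X_t}\log\P_{T|t}(X_T|X_t)$. The difference is a term whose conditional expectation given $X_t$ vanishes, by the argument above applied with the bridge drift. I would then phrase the claim at the level of the marginal-matching drift rather than pathwise.

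\emph{Main obstacle.} The delicate step is the justification of Nelson's identity for the projections of a reciprocal-class measure with a general coupling. This requires $\Pi^*_t$ to have a sufficiently smooth positive density and the time reversal of $\P^{u^*}$ to have drift $-\sigma v^*$. I would invoke the regularity assumptions of \Cref{app:assumptions} here rather than reprove them. I would also check that the conditioning on $X_t$ is legitimate when $\gamma(t)\in\{0,1\}$, by restricting to $t\in(0,T)$.
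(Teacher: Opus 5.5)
Your Steps 1 and 2 are exactly the paper's proof. You rearrange Nelson's identity, substitute $v^*$ from \Cref{prop: uv general coupling} and $\nabla\log\Pi^*_t$ from \Cref{TSI general coupling}, merge the two conditional expectations under $\Pi^*$, and read off $\xi$ as the integrand. This gives $\E_{\Pi^*}[\xi(X,t)\mid X_t=x]=u^*(x,t)$, which is all the paper's proof establishes (cf.\ the caption of \Cref{tab:overview}).

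You should, however, drop the sentence in Step 3 claiming that \eqref{eq: optimal non-Markov SDE} driven by this $\xi$ has marginals $\Pi^*_t$. \Cref{lem: markovian projection} projects $\xi$ under the law of the process the SDE actually generates, not under $\Pi^*$. The $\xi$-driven SDE does not generate $\Pi^*$ in general: it anticipates $X_T$, and it differs pathwise from the bridge drift $\sigma(t)\nabla_{X_t}\log\P_{T|t}(X_T|X_t)$. Indeed, the term $\sigma(t)(X_t-X_0)/\kappa(t)$ in $\xi$ pushes paths away from $X_0$ with a strength that blows up as $t\to 0$. So there is no reason for its marginals to equal $\Pi^*_t$.

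Nothing is lost by dropping that sentence. The fixed-point regression in \Cref{alg:sampler} only needs the identity under the bridge measure, since $X_t$ is sampled from $\P_{t|0,T}$ given the coupling and the $\xi$-SDE is never simulated.
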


While one could, in principle, use \Cref{prop: general xi} to perform the fixed-point iteration suggested in \Cref{alg:sampler}, in practice we typically do not have access to the coupling scores $\nabla \log \Pi^{*}_{0,T}$. However, for certain special forms of $\Pi^*_{0,T}$, these scores can be computed explicitly, leading to tractable algorithms for learning the stochastic transport map $u^*$. The remainder of this section focuses on these tractable cases.

\subsection{Schr\"odinger half bridges} \label{sec: shbs}
\emph{Schrödinger half bridges} (SHBs) solve problems of the form
\begin{align*}
        \min_{u \in \mathcal{U}} \E_{\P^u} \left[ \int_0^T  \tfrac{1}{2}\|u(X_t,t)\|^2 \, \dd t \right] \   \text{s.t.} \ X_T \sim \Pi^*_T = p_{\mathrm{target}},
\end{align*}
i.e., they are characterized by a kinetic optimal drift given a marginal constraint\footnote{One can also consider constraining the marginal at time $t=0$.}. Using that 
\begin{equation}
    \KL(\P^u|{\P}) = \E_{\P^u} \left[  \int_0^T \tfrac{1}{2}\|u(X_t,t)\|^2 \, \dd t \right],
\end{equation}
and by the chain rule of the KL divergence
it is straightforward to see that the minimum is attained when the conditional measures match, $\P^u_{|T} = \P_{|T}$. Consequently, the optimal joint coupling is given by $\Pi^*_{0,T} = \Pi^*_T\P_{0|T}$. However, this implies that the initial marginal is $\Pi^*_0 = \E_{\Pi^*_T}[\P_{0|T}]$, which generally depends on the target $\Pi^*_T$ and differs from the reference prior $\P_0$. To ensure that the optimal model actually starts at the prior (i.e., $\Pi^*_0 = p_{\mathrm{prior}}$), the reference must satisfy the independence condition $\P_{0|T} = \P_{0}$, a property commonly known as \emph{memorylessness}~\cite{domingoenrich2025adjoint}. 
For instance, this can be enforced by using a deterministic initial condition $X_0=x_0$, i.e. $p_{\mathrm{prior}}=\delta_{x_0}$. Using the Brownian motion reference process and $x_0=0$ -- a setting that is considered frequently in the literature -- the generalized TSI in~\Cref{TSI general coupling} then becomes
\begin{equation*}
    \nabla \log \Pi^*_t(x) =  \E_{\Pi^*_{T | t}}\bigg [
 \frac{1}{\gamma (t)}\nabla _{X_T}\log \Pi^*_{T}(X_T) | X_t = x \bigg]. 
\end{equation*}
Combining this with the general results for $v^*$ in \Cref{prop: uv general coupling} yields the following result, see \Cref{app: proofs SBH} for the proof.
\begin{proposition}[Path-dependent drift for Schrödinger half bridges]
\label{prop: xi for shb}
Under the assumptions of~\Cref{TSI general coupling}, 
the path-dependent drift $\xi$ corresponding to the non-Markovian SDE \eqref{eq: optimal non-Markov SDE} that induces the target measure $\Pi^*=\Pi_T^*\P_{|T}$ is given by
    \begin{align}
       \sigma(t)^{-1} \xi(X,t) = \nabla_{X_T} \log \frac{\Pi^*_T(X_T)}{\P_T(X_T)}.
    \end{align}
\end{proposition}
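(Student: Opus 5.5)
The plan is to go through Nelson's identity \eqref{eq: nelsons identity}, written as $u^* = \sigma(t)\nabla\log\Pi^*_t - v^*$, and to compute each term in the Schrödinger half bridge setting with $\P$ scaled Brownian motion started at $x_0=0$, so that $\P_0=\delta_0$, $\Pi^*_{0,T}=\delta_0\otimes\Pi^*_T$ and $\Pi^* = \Pi^*_T\P_{|T}$. For the score term I would use the specialization of \Cref{TSI general coupling} already stated above the proposition:
\[
\nabla\log\Pi^*_t(x)=\E_{\Pi^*_{T|t}}\big[\gamma(t)^{-1}\nabla_{X_T}\log\Pi^*_T(X_T)\,\big|\,X_t=x\big].
\]
For $v^*$ I would use \Cref{prop: uv general coupling}. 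Since $X_0=0$ almost surely, $\P_{t|0}(x_t|0)=\mathcal N(x_t;0,\kappa(t)I)$, which gives $\sigma(t)\nabla_{X_t}\log\P_{t|0}(X_t|X_0)=-\sigma(t)X_t/\kappa(t)$. Hence $v^*(x,t)=-\sigma(t)x/\kappa(t)$, and this is already a function of $X_t$ alone.

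The key step is to rewrite $-\nabla_{X_t}\log\P_{t|0}(X_t|0)=X_t/\kappa(t)$ as a conditional expectation of a function of $X_T$. I would do this with the reference score identity for $\P_T=\mathcal N(0,\kappa(T)I)$. Under $\Pi^*$, conditional on $X_T$, the path is a Brownian bridge from $0$ to $X_T$, so
\[
\E_{\Pi^*}[X_t\mid X_T]=\gamma(t)X_T .
\]
Conditioning on $X_t$ in the other direction does not directly give $X_t$ as an expectation of $X_T$, though. What I would actually check is the representation at the level of the Markovian projection. I want to show that
\[
\E_{\Pi^*}\Big[\tfrac{1}{\gamma(t)}\nabla_{X_T}\log\P_T(X_T)\,\Big|\,X_t=x\Big]=\E_{\Pi^*}\Big[-\tfrac{X_T}{\kappa(t)}\,\Big|\,X_t=x\Big]
\]
coincides with $-x/\kappa(t)$ plus a term whose $\Pi^*$-conditional expectation matches. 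Equivalently, I would redo the argument with \Cref{TSI general coupling} applied twice: once to $\Pi^*$ and once to the reference $\P$ itself, where $\P_{0,T}=\delta_0\otimes\P_T$, which gives $\nabla\log\P_t(x)=-x/\kappa(t)$.

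The cleanest route is probably to avoid Nelson's identity altogether. By Lemma~\ref{lem: markovian projection} and \Cref{prop: uv general coupling}, $u^*(x,t)=\E_{\Pi^*}[\sigma(t)\nabla_{X_t}\log\P_{T|t}(X_T|X_t)\mid X_t=x]$. The Doob $h$-transform then gives directly $u^*(x,t)=\sigma(t)\nabla_x\log h_t(x)$ with
\[
h_t(x)=\E_{\P}\Big[\tfrac{\Pi^*_T}{\P_T}(X_T)\,\Big|\,X_t=x\Big].
\]
Differentiating under the Gaussian kernel $\P_{T|t}$ and using the translation structure $\nabla_x\P_{T|t}(y|x)=-\nabla_y\P_{T|t}(y|x)$, integration by parts turns $\nabla_x h_t$ into $\E_{\P}[\nabla_{X_T}(\Pi^*_T/\P_T)(X_T)\mid X_t=x]$. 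Dividing by $h_t$ reweights $\P_{T|t}$ into $\Pi^*_{T|t}$. This yields
\[
u^*(x,t)=\E_{\Pi^*}\big[\sigma(t)\nabla_{X_T}\log\tfrac{\Pi^*_T}{\P_T}(X_T)\,\big|\,X_t=x\big],
\]
which is exactly Lemma~\ref{lem: markovian projection} with the claimed $\xi$. I expect the main obstacle to be this integration by parts: it needs decay and regularity of $\Pi^*_T/\P_T$ so that the boundary terms vanish and differentiation under the integral is justified, and I would take these from the standing assumptions in \Cref{app:assumptions}. As a consistency check I would verify agreement with the Nelson-identity route by using $\E_{\Pi^*}[X_T\mid X_t]$.
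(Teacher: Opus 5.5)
Your final argument, the Doob $h$-transform computation in your third paragraph, is correct, but it is not the paper's route.

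The paper obtains the statement by setting $c=\gamma$ in \Cref{lemma: general xi for SHBs}. That lemma specializes the Schr\"odinger-bridge drift of \Cref{prop: xi for sb}, which is itself derived from \Cref{prop: general xi} (Nelson's identity, the generalized TSI and the $v^*$ formula of \Cref{prop: uv general coupling}) in four steps:
inserting $\Pi^*_{0,T}=\P_{T|0}\,\widehat\varphi_0\,\varphi_T$;
merging the transition-score terms via $\nabla_{X_T}\log\P_{T|0}=-\nabla_{X_0}\log\P_{T|0}$;
replacing $-\nabla_{X_t}\log\P_{t|0}$ by $-\nabla_{X_0}\log\widehat\varphi_0$, since both have conditional expectation $-\sigma^{-1}v^*$;
plugging in the half-bridge potentials $\widehat\varphi_T=\P_T$, $\varphi_T=\Pi^*_T/\P_T$.
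At $c=\gamma$ everything except $\nabla_{X_T}\log\varphi_T$ drops out.

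You instead use only $\dd\Pi^*/\dd\P=(\Pi^*_T/\P_T)(X_T)$ and a Gaussian integration by parts. This is exactly the $u^*$ half of \Cref{PropApp: SBuvcharacterization} with $\varphi_T=\Pi^*_T/\P_T$, applied directly.

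What your version buys:
it is shorter and needs neither Nelson's identity nor the TSI;
it uses neither memorylessness nor $X_0=0$;
it never produces the terms $\nabla_{X_0}\log\P_0$ and $\nabla_{X_0}\log\P_{0|T}$ that appear in the paper's general-$c$ lemma. Those are meaningless for a Dirac prior and vanish only because $c=\gamma$ multiplies them by zero.

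What the paper's version buys: it exhibits the AS target as one member of a $c$-indexed family of regression targets with the same Markovian projection. That is what the unified framework and the control-variate discussion need. Your argument hinges on $\Pi^*$ being an $h$-transform of $\P$, so it does not carry over to the independent coupling behind BMS.

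The boundary terms you flag are covered by Assumption~\ref{Assumption(iv)}. For this coupling $\Pi^*_{0,T}\P_{t|0,T}=\P_0\,\P_{t|0}\,\P_{T|t}\,(\Pi^*_T/\P_T)$, so decay in $x_T$ is exactly decay of $\P_{T|t}(x_T|x)\,(\Pi^*_T/\P_T)(x_T)$.

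Separately, the Nelson-identity attempt in your second paragraph is not well-posed as written. In general $\E_{\Pi^*}[X_T\mid X_t=x]\neq x$, and applying the TSI to $\P$ gives an expectation under $\P$, not under $\Pi^*$.

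If you want the consistency check, equate the denoising score identity (\Cref{app: prop: dsi gen}) with the TSI at $X_0=0$. Writing $m(x)=\E_{\Pi^*}[X_T\mid X_t=x]$, this gives $\E_{\Pi^*}[\nabla\log\Pi^*_T(X_T)\mid X_t=x]=(\gamma(t)m(x)-x)/(\kappa(T)(1-\gamma(t)))$. Substituting into $\gamma(t)^{-1}\E_{\Pi^*}[\nabla\log\Pi^*_T(X_T)\mid X_t=x]+x/\kappa(t)$ returns $\E_{\Pi^*}[\nabla\log\Pi^*_T(X_T)\mid X_t=x]+m(x)/\kappa(T)$, which is the claimed $\sigma(t)^{-1}u^*(x,t)$.
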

We note that the result in \Cref{prop: xi for shb}, together with the associated fixed-point iteration, coincides with the \emph{adjoint sampling} (AS) method introduced by \citet{havens2025adjoint}. While their derivation is based on a stochastic optimal control (SOC) formulation combined with the adjoint method, the same scheme arises in our setting as a special case of the general framework developed here.
However, as mentioned above, the memorylessness assumptions severely restricts the choice of the reference process and the prior distribution and, in practice, these choices lead to instabilities due to large diffusion coefficients $\sigma$. In particular, for the Dirac delta prior, we need a sufficiently large coefficient to guarantee initial exploration of the sampler. For a Gaussian prior, one requires sufficiently large diffusion coefficients such that the distribution of the time-reversed process approximates the prior at finite time $T$. In the following, we present a method that allows for the use of arbitrary prior distributions and reference processes, thereby generalizing the previously introduced Schrödinger half-bridges.

\subsection{Schr\"odinger bridges} \label{sec: sbs}
Similar to half-bridges, \emph{Schr\"odinger bridges} (SBs) seek a kinetically optimal drift, however, containing both boundary constraints, i.e.,
\begin{align*}
    \min_{u \in \mathcal{U}} \E \left[\int_0^T \tfrac{1}{2} \|u(X_t,t)\|^2 \, \dd t \right] \ \text{s.t.} \ \begin{cases}  X_0\sim\Pi^*_0=p_{\mathrm{prior}}, \\ X_T\sim\Pi^*_T=p_{\mathrm{target}}. \end{cases}
\end{align*}
The solution to the SB problem can be characterized using the Schr\"odinger system in the following proposition; see, e.g.,~\citet{leonard2013survey,chen2016relation}, and \Cref{app: proofs for SBs} in the appendix.
\begin{proposition}[Schr\"odinger system]\label{prop: sb system} 
Let $\pu,\pv\in C(\R^d \times [0,T]; \R)$ satisfy the Schr\"odinger system, i.e.,
\begin{align}
    \pu _t (x) &= \int \P_{T | t}(x_T| x) \pu_T (x_T) \dd x_T, \ \  \pu_0 \pv_0 = \Pi^*_0,  \label{first Schrödinger system eq}\\
    \pv _t (x) &= \int \P_{t | 0}(x| x_0) \pv_0 (x_0) \dd x_0, \ \  \pu_T \pv_T= \Pi^*_T,
\end{align}
then $u^* = \sigma\nabla\log\pu$ and $v^* = \sigma\nabla\log\pv$. Moreover, the Schr\"odinger coupling satisfies
\begin{equation}
\label{eq: sb coupling}
    \Pi^{*}_{0,T} = \Pi^{\mathrm{SB}}_{0,T} \coloneqq \P_{T|0} \pv_0 \pu_T.
\end{equation}
\end{proposition}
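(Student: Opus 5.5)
The plan is to start from the structure of the Schrödinger bridge solution, $\Pi^* = \Pi^{\mathrm{SB}}$, which lies in the reciprocal class $\mathcal{R}(\P)$. We first establish that its coupling has the product form \eqref{eq: sb coupling}. We then read off the Markovian drifts from \Cref{prop: uv general coupling}.

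\emph{Step 1: form of the coupling.} Write the SB problem as $\min \KL(\Pi|\P)$ subject to $\Pi_0=\Pi^*_0$ and $\Pi_T=\Pi^*_T$. The kinetic energy equals this KL, up to the contribution of the initial marginal, which is fixed. By the chain rule of KL, $\KL(\Pi|\P)=\KL(\Pi_{0,T}|\P_{0,T})+\E_{\Pi_{0,T}}[\KL(\Pi_{|0,T}|\P_{|0,T})]$. The optimizer therefore has $\Pi_{|0,T}=\P_{|0,T}$, and $\Pi_{0,T}$ solves the static problem $\min \KL(\Pi_{0,T}|\P_{0,T})$ under the two marginal constraints. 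Introducing Lagrange multipliers for the marginal constraints gives $\Pi_{0,T}(x_0,x_T)=\P_{0,T}(x_0,x_T)\,f(x_0)\,g(x_T)$. We set $\pu_T:=g$ and $\pv_0:=\P_0 f$. Then $\Pi_{0,T}=\P_{T|0}\pv_0\pu_T$, which is \eqref{eq: sb coupling}, and we define $\pu_t$ and $\pv_t$ by the propagation formulas in the statement. Integrating out $x_T$ gives $\Pi_0(x_0)=\pv_0(x_0)\int\P_{T|0}(x_T|x_0)\pu_T(x_T)\dd x_T=\pv_0\pu_0$. Similarly, $\Pi_T=\pu_T\int\P_{T|0}\pv_0\,\dd x_0=\pu_T\pv_T$. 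So the boundary equations of the Schrödinger system are exactly the marginal constraints. For the statement as posed we can also run the argument in the converse direction: given $(\pu,\pv)$ solving the system, the measure $\Pi^*=\P_{T|0}\pv_0\pu_T\,\P_{|0,T}$ is in $\mathcal{R}(\P)$ and has the correct marginals.

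\emph{Step 2: drifts.} By \Cref{prop: uv general coupling}, $u^*(x,t)=\sigma(t)\E_{\Pi^*}[\nabla_x\log\P_{T|t}(X_T|x)\mid X_t=x]$. Using the Markov property of $\P$, the joint density of $(X_0,X_t,X_T)$ under $\Pi^*$ is $\pv_0(x_0)\P_{t|0}(x|x_0)\P_{T|t}(x_T|x)\pu_T(x_T)$. Integrating out $x_0$ shows that the conditional law of $X_T$ given $X_t=x$ is $\P_{T|t}(x_T|x)\pu_T(x_T)/\pu_t(x)$. Hence
\[
u^*(x,t)=\sigma(t)\frac{\int \nabla_x\P_{T|t}(x_T|x)\pu_T(x_T)\dd x_T}{\pu_t(x)}=\sigma(t)\nabla\log\pu_t(x).
\]
The symmetric computation, using the conditional law of $X_0$ given $X_t$, namely $\propto\pv_0(x_0)\P_{t|0}(x|x_0)$, gives $v^*=\sigma\nabla\log\pv_t$. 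As a consistency check, the marginal is $\Pi^*_t=\pu_t\pv_t$, so Nelson's identity \eqref{eq: nelsons identity} holds.

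\emph{Main obstacle.} The delicate step is Step 1, the rigorous justification of the product form of the static optimizer. This requires existence and uniqueness of the Schrödinger potentials and the interchange of differentiation and integration. I would not reprove this; I would cite the classical theory of \citet{leonard2013survey} and \citet{chen2016relation} under the paper's regularity assumptions in \Cref{app:assumptions}. The drift computations in Step 2 are then routine, given \Cref{prop: uv general coupling}.
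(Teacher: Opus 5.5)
Your proposal is correct, and it takes a genuinely different route from the paper. The paper argues dynamically:
\begin{itemize}
\item it adjoins the Fokker--Planck equation to the kinetic energy with a multiplier $\psi$, and reads off $u^*=\sigma\nabla\psi$ and the HJB equation from formal first-order conditions (\Cref{SB: optPairLagrange});
\item it linearizes via Hopf--Cole ($\pu=e^{\psi}$, $\pv=\Pi^*_t/\pu$) into backward and forward heat equations;
\item it recovers the integral representations by Feynman--Kac and gets $v^*$ ``by symmetry'';
\item it obtains the coupling from $\dd\Pi^*_{|0}/\dd\P_{|0}=\pu_T(X_T)/\pu_0(X_0)$, which is cited from L\'eonard rather than derived.
\end{itemize}
You work statically instead. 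The KL chain rule reduces everything to entropic OT on $(X_0,X_T)$, so the product-form coupling is your starting point. The potentials are \emph{defined} by heat-kernel propagation, so the integral equations hold by construction and the boundary conditions are exactly the marginal constraints. Both drifts then follow from \Cref{prop: uv general coupling} via the explicit conditionals $\Pi^*_{T|t}=\P_{T|t}\pu_T/\pu_t$ and $\Pi^*_{0|t}=\P_{t|0}\pv_0/\pv_t$; this is the computation of \Cref{PropApp: SBuvcharacterization} run in reverse.

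What each route buys: yours avoids formal functional derivatives, the nonlinear PDE, Hopf--Cole and Feynman--Kac, treats $v^*$ as explicitly as $u^*$, and stays inside the paper's reciprocal/Markovian-projection framework. The paper's route yields the PDE characterization of the potentials and matches the SOC viewpoint from which AS and ASBS were first derived. You implicitly use that the bridge, being an $f(X_0)g(X_T)$-reweighting of the Markov measure $\P$, is itself Markov, so its drift coincides with the projection drift of \Cref{prop: uv general coupling}; say this explicitly.

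One step needs tightening, and it is the direction the statement actually asserts, since it starts from a \emph{given} solution $(\pu,\pv)$. Membership in $\mathcal{R}(\P)$ plus correct marginals does not identify the Schr\"odinger bridge. The independent coupling $\Pi^*_0\otimes\Pi^*_T\,\P_{|0,T}$ targeted by BMS has both properties and is not the SB. What you need is the product structure. Set $\Pi^{\mathrm{SB}}:=\P_{T|0}\pv_0\pu_T\,\P_{|0,T}$; then $\dd\Pi^{\mathrm{SB}}/\dd\P=\frac{\pv_0(X_0)}{\P_0(X_0)}\pu_T(X_T)$. Hence for every $\Pi$ with marginals $\Pi^*_0,\Pi^*_T$,
\[
\KL(\Pi|\P)=\KL(\Pi|\Pi^{\mathrm{SB}})+\E_{\Pi^*_0}\Big[\log\tfrac{\pv_0}{\P_0}\Big]+\E_{\Pi^*_T}\big[\log\pu_T\big].
\]
The last two terms do not depend on $\Pi$, so $\Pi^{\mathrm{SB}}$ is the unique minimizer. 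This verification argument proves the statement as posed without appealing to the existence of Lagrange multipliers; Step~1 then only explains why such potentials should exist. After it, your Step~2 applies verbatim.
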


Combining the generalized TSI in \Cref{TSI general coupling} and \eqref{eq: sb coupling}, it is straightforward to define a TSI for SBs and derive the path-dependent drift, see \Cref{app: proofs for SBs} in the appendix for further details, where we also state a more general version in \Cref{prop: xi for sb}.

\begin{proposition}[Path-dependent drift for Schrödinger bridges]
\label{prop: xi asbs}
Under the assumptions of~\Cref{TSI general coupling}, the path-dependent drift $\xi$ corresponding to the non-Markovian SDE \eqref{eq: optimal non-Markov SDE} that induces the target measure $\Pi^*=\Pi^{\mathrm{SB}}_{0,T}\P_{|0,T}$ is given by
    \begin{align}
    \label{eq: xi asbs}
        \sigma(t)^{-1}\xi(X,t) =  \nabla _{X_T} \log \frac{\Pi_T^*(X_T)}{\pv_T(X_T)}.
    \end{align}
\end{proposition}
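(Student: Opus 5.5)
The plan is to specialize \Cref{prop: general xi} to the Schr\"odinger coupling \eqref{eq: sb coupling} and then simplify with the Gaussian structure of the scaled Brownian reference. For the SB coupling, $\Pi^*_{0,T}(x_0,x_T) = \P_{T|0}(x_T|x_0)\,\pv_0(x_0)\,\pu_T(x_T)$, so
\begin{align*}
\nabla_{X_0}\log\Pi^*_{0,T} &= \nabla_{X_0}\log\P_{T|0}(X_T|X_0) + \nabla\log\pv_0(X_0),\\
\nabla_{X_T}\log\Pi^*_{0,T} &= \nabla_{X_T}\log\P_{T|0}(X_T|X_0) + \nabla\log\pu_T(X_T).
\end{align*}
For scaled Brownian motion, $\P_{T|0}(\cdot|x_0)=\mathcal N(x_0,\kappa(T) I)$. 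The two transition scores are therefore $\pm (X_T-X_0)/\kappa(T)$, and the subtracted term in \Cref{prop: general xi} is $-\nabla_{X_t}\log\P_{t|0}(X_t|X_0) = (X_t-X_0)/\kappa(t)$.

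The free function $c$ is the key. It should be chosen so that the $X_0$-dependent contributions cancel. The natural choice is $c\equiv 1$, which kills the $\nabla_{X_0}$ term entirely. The drift then reads
\[
\sigma^{-1}\xi = \frac{1}{\gamma(t)}\Big(\frac{X_T-X_0}{\kappa(T)} + \nabla\log\pu_T(X_T)\Big) - \frac{X_t-X_0}{\kappa(t)}.
\]
Since $\gamma(t)\kappa(T)=\kappa(t)$, this equals $\frac{X_T-X_t}{\kappa(t)} + \frac{1}{\gamma(t)}\nabla\log\pu_T(X_T)$. This is not yet the claimed form. Two further ingredients are needed: the Schr\"odinger system relation $\pu_T = \Pi^*_T/\pv_T$, so that $\nabla\log\pu_T = \nabla\log(\Pi^*_T/\pv_T)$, and the freedom that $\xi$ only needs to be correct \emph{in conditional expectation} given $X_t$.

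The statement should therefore be read as asserting that $\sigma\nabla_{X_T}\log(\Pi^*_T/\pv_T)$ is a valid choice of $\xi$, i.e. $u^*(x,t)=\E_{\Pi^*}[\sigma\nabla\log\pu_T(X_T)\,|\,X_t=x]$. I would prove this directly rather than through the generalized TSI. By \Cref{prop: sb system}, $u^*=\sigma\nabla\log\pu_t$ with $\pu_t(x)=\int\P_{T|t}(x_T|x)\pu_T(x_T)\dd x_T$. Since $\P_{T|t}(x_T|x)$ is Gaussian in $x_T-x$, its $x$-gradient equals minus its $x_T$-gradient. Integrating by parts in $x_T$ then gives
\[
\nabla\pu_t(x)=\int\P_{T|t}(x_T|x)\nabla\pu_T(x_T)\dd x_T,
\]
and hence
\[
\nabla\log\pu_t(x)=\int \frac{\P_{T|t}(x_T|x)\pu_T(x_T)}{\pu_t(x)}\nabla\log\pu_T(x_T)\dd x_T.
\]

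It remains to identify the weight $\P_{T|t}(x_T|x)\pu_T(x_T)/\pu_t(x)$ with $\Pi^*_{T|t}(x_T|x)$. Using the SB coupling and the Markov property of $\P$, the joint density is
\[
\Pi^*_{t,T}(x,x_T)=\int \pv_0(x_0)\,\P_{t|0}(x|x_0)\,\P_{T|t}(x_T|x)\,\pu_T(x_T)\dd x_0 = \pv_t(x)\,\P_{T|t}(x_T|x)\,\pu_T(x_T),
\]
and $\Pi^*_t=\pu_t\pv_t$. Dividing gives the claimed conditional, and substituting $\pu_T=\Pi^*_T/\pv_T$ yields \eqref{eq: xi asbs}.

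I expect the main obstacle to be justifying the integration by parts, which requires enough decay of $\pu_T$ against the Gaussian kernel, together with the interpretation that $\xi$ is defined up to conditional-mean-zero terms. To keep the argument within the paper's framework, I would also remark that this is consistent with \Cref{prop: general xi}: the discrepancy $\frac{X_T-X_t}{\kappa(t)}+(\gamma^{-1}-1)\nabla\log\pu_T(X_T)$ has zero conditional mean given $X_t$, by the same integration-by-parts identity applied under $\Pi^*_{T|t}$.
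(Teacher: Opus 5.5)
Your main argument is correct, but it is not the route the paper uses for this statement. The paper proves the $c$-dependent family in \Cref{prop: xi for sb} by inserting the Schr\"odinger coupling into \Cref{prop: general xi}. It then obtains \eqref{eq: xi asbs} by taking $c=\gamma$, not $c\equiv 1$. For this choice the transition scores $\nabla_{X_0}\log\P_{T|0}+\nabla_{X_T}\log\P_{T|0}$ cancel identically. The remainder $\nabla\log\pv_0(X_0)-\nabla_{X_t}\log\P_{t|0}(X_t|X_0)$ has zero conditional mean given $X_t$, because both terms are conditional-expectation representations of $\sigma^{-1}v^*$ (\Cref{prop: uv general coupling} and the $v^*$ part of \Cref{PropApp: SBuvcharacterization}).

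Your direct argument is exactly the $u^*$ part of \Cref{PropApp: SBuvcharacterization}. It is the same translation-invariance and integration-by-parts trick, applied to $\pu$ on $[t,T]$ instead of to $\pv$ on $[0,t]$.
\begin{itemize}
\item What your route buys: brevity, and independence from Nelson's identity and the generalized TSI. Your derivation of $\Pi^*_{T|t}=\P_{T|t}\pu_T/\pu_t$ from the coupling and the Markov property of $\P$ also justifies something the paper only asserts as the ``reciprocal relation''.
\item What the paper's route buys: the ASBS regression target appears as exactly the $c=\gamma$ member of the family of valid targets generated by the generalized TSI. That is the unifying point of the section, and the control-variate discussion builds on it.
\end{itemize}

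Separately, your preliminary $c\equiv 1$ computation has sign errors, which make the closing consistency remark false as written. This does not affect your main proof.
\begin{itemize}
\item First, $\nabla_{X_T}\log\P_{T|0}(X_T|X_0)=-(X_T-X_0)/\kappa(T)$, not $+(X_T-X_0)/\kappa(T)$.
\item Second, your display uses $-\frac{X_t-X_0}{\kappa(t)}$, although you had correctly stated $-\nabla_{X_t}\log\P_{t|0}(X_t|X_0)=\frac{X_t-X_0}{\kappa(t)}$.
\end{itemize}
The correct $c\equiv 1$ drift is $\frac{X_t-X_T}{\kappa(t)}+\gamma(t)^{-1}\nabla\log\pu_T(X_T)$.

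To see why the remark fails, use $\E_{\Pi^*}[X_T-X_t\,|\,X_t]=\kappa(T)(1-\gamma(t))\,\E_{\Pi^*}[\nabla\log\pu_T(X_T)\,|\,X_t]$. Your discrepancy $\frac{X_T-X_t}{\kappa(t)}+(\gamma^{-1}-1)\nabla\log\pu_T(X_T)$ then has conditional mean $2\frac{1-\gamma}{\gamma}\E_{\Pi^*}[\nabla\log\pu_T(X_T)\,|\,X_t]$, which is not zero in general. The correctly signed discrepancy, $\frac{X_t-X_T}{\kappa(t)}+(\gamma^{-1}-1)\nabla\log\pu_T(X_T)$, does have conditional mean zero.
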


The result in \Cref{prop: xi asbs}, together with the associated fixed-point iteration, recovers the \emph{Adjoint Schrödinger Bridge Sampler} (ASBS) introduced by \citet{liu2025adjoint}, which was originally derived from a stochastic optimal control perspective. While Schrödinger bridges (SBs) allow for arbitrary prior distributions $p_{\mathrm{prior}}$, their associated fixed-point iteration requires access to the terminal potential $\pv_T$, which is typically unavailable. To address this, \citet{liu2025adjoint} proposed a variant based on \emph{iterative proportional fitting} (IPF) \cite{kullback1968probability}, which alternates between updating the drift $u$ and the terminal potential $\pv_T$. Because these updates are interdependent, the resulting algorithm can be unstable. In the following, we propose a method that enables the use of arbitrary prior and target distributions without requiring such alternating optimization schemes.

\subsection{Independent couplings}

\begin{figure*}[tbp]
    \centering
    
    \begin{minipage}[b]{0.48\textwidth}
        \centering
        \includegraphics[width=0.48\linewidth]{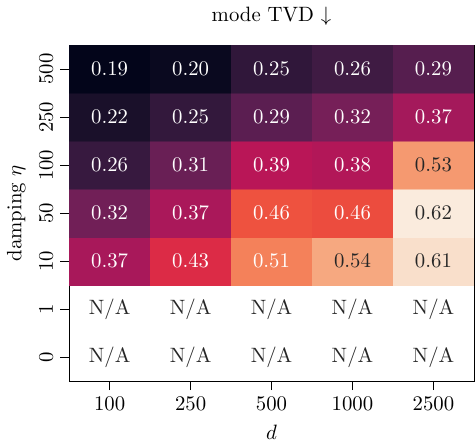}
        \hfill
        \includegraphics[width=0.48\linewidth]{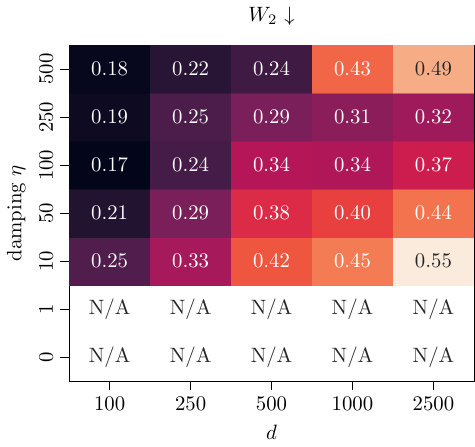}
        
        \subcaption{ASBS}
        \label{subfig:gmm ABBS}
    \end{minipage}
    \hfill 
    \begin{minipage}[b]{0.48\textwidth}
        \centering
        \includegraphics[width=0.48\linewidth]{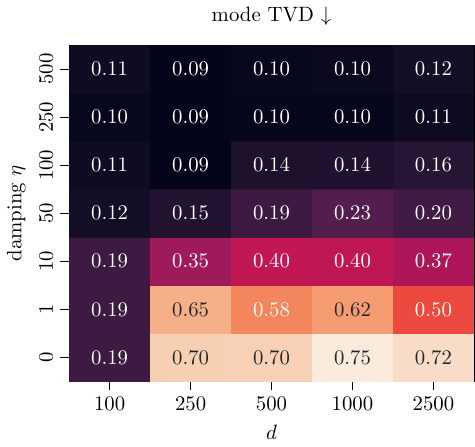}
        \hfill
        \includegraphics[width=0.48\linewidth]{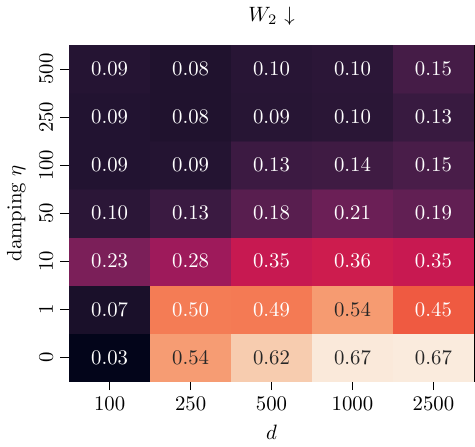}
        
        \subcaption{BMS (ours)}
        \label{subfig:gmm BMS}
    \end{minipage}

\caption{
    Comparison of \textit{mode TVD} and \textit{Wasserstein-2 ($W_2$)} values on Gaussian mixture models across varying dimensions $d$ and damping values $\eta$. 
    \Cref{subfig:gmm ABBS} shows that ASBS exhibits instability at low damping values (N/A) and performance degradation as dimensionality increases. 
    \Cref{subfig:gmm BMS} demonstrates that our method maintains stable training and consistently avoids mode collapse up to $d=2500$, significantly scaling beyond the $d=50$ range common in the recent literature. All values are averaged over three random seeds.
}
    \label{fig:gmm heatmaps}
\vspace{-0.5em}
\end{figure*}

Recalling \Cref{prop: general xi}, which states a path-dependent drift for general target measures, let us now consider the special case when the coupling is assumed to be independent,
\[
\Pi^{*}_{0,T} = \Pi^{*}_0 \otimes \Pi^{*}_T = p_{\mathrm{prior}} \otimes p_{\mathrm{target}}.
\]
In this case, the coupling scores $\nabla \log \Pi^{*}_{0,T}$ become tractable, which directly yields the following proposition, see also \Cref{app: proof independent coupling}.

\begin{proposition}[Target score and drift for independent couplings]
\label{prop: TSI + xi independent couplings}
Under the assumptions of \Cref{TSI general coupling}, and additionally assuming that the coupling $\Pi^{*}_{0,T}$ factorizes as $\Pi^{*}_{0,T} = \Pi^{*}_0 \otimes \Pi^{*}_T$, the following identity holds:
    \begin{align*}
        \nabla _x \log \Pi^*_t(x) =& \E_{\Pi^*_{0,T | t}}\bigg [ \frac {1-c(t)}{1-\gamma(t)}\nabla _{X_0}\log \Pi^*_{0}(X_0) \nonumber \\ 
        & \qquad+ \frac {c(t)}{\gamma(t)}\nabla _{X_T}\log \Pi^*_{T}(X_T) \Big| X_t = x \bigg].
    \end{align*}
Consequently, the associated drift is given by
\begin{align}
\begin{split}
&\sigma(t)^{-1}\xi(X,t)
= \frac{1-c(t)}{1-\gamma(t)} \, \nabla_{X_0} \log \Pi^*_{0}(X_0) \\
&\quad + \frac{c(t)}{\gamma(t)} \, \nabla_{X_T} \log \Pi^*_{T}(X_T) - \nabla_{X_t} \log \P_{t | 0}(X_t | X_0).
\end{split}
\end{align}
\end{proposition}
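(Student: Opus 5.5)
The plan is to specialize the generalized target score identity of \Cref{TSI general coupling} and the general drift of \Cref{prop: general xi} to the product coupling. No new analysis should be needed; the work is in checking that the factorization passes through the gradients and the conditional expectation correctly.

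\textbf{Step 1: factorize the coupling scores.} Under $\Pi^*_{0,T} = \Pi^*_0 \otimes \Pi^*_T$ we have, wherever the densities are positive,
\[
\log \Pi^*_{0,T}(x_0,x_T) = \log \Pi^*_0(x_0) + \log \Pi^*_T(x_T).
\]
Hence
\[
\nabla_{x_0} \log \Pi^*_{0,T}(x_0,x_T) = \nabla_{x_0} \log \Pi^*_0(x_0), \qquad \nabla_{x_T} \log \Pi^*_{0,T}(x_0,x_T) = \nabla_{x_T} \log \Pi^*_T(x_T).
\]
Each partial score therefore depends only on its own endpoint.

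\textbf{Step 2: substitute into the TSI.} Inserting these two scores into the integrand of \Cref{TSI general coupling} gives the first claimed identity directly. The conditional expectation is still taken under $\Pi^*_{0,T|t}$. The factorization does not make $X_0$ and $X_T$ independent given $X_t$, and we do not need it to; we only use that the integrand separates.

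\textbf{Step 3: obtain the drift.} The drift formula follows by the same substitution into \Cref{prop: general xi}. Its path-dependent expression has the same two coupling-score terms, minus $\nabla_{X_t}\log \P_{t|0}(X_t|X_0)$, and that last term does not involve the coupling.

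To make explicit why this $\xi$ is valid, I would recall how \Cref{prop: general xi} is assembled. Nelson's identity \eqref{eq: nelsons identity} gives $u^* = \sigma \nabla\log\Pi^*_t - v^*$. The formula for $v^*$ in \Cref{prop: uv general coupling} is a conditional expectation given $X_t = x$ under $\Pi^*_{0|t}$. Since $X_0$ is a function of the path, this can be rewritten as an expectation under $\Pi^*_{0,T|t}$. Combining it with the TSI yields
\[
u^*(x,t) = \E_{\Pi^*}\left[\xi(X,t) \,\middle|\, X_t = x\right].
\]
By \Cref{lem: markovian projection}, this is exactly the required property of $\xi$.

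The only step needing care is regularity. We need $\Pi^*_0$ and $\Pi^*_T$ to be positive and differentiable so the log-densities are defined. We also need the standing assumptions of \Cref{app:assumptions} to cover integrability of the scores, so the expectations are finite. I expect no genuine obstacle beyond noting these conditions.
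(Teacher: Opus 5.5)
Your proposal is correct and takes essentially the same route as the paper, which notes that under $\Pi^*_{0,T} = \Pi^*_0 \otimes \Pi^*_T$ the coupling scores reduce to $\nabla_{X_0}\log\Pi^*_0(X_0)$ and $\nabla_{X_T}\log\Pi^*_T(X_T)$, and substitutes these into \Cref{prop: general xi} (and, implicitly, into the generalized TSI). One incidental point: the paper's appendix proof restates the general drift with an overall minus sign that wrongly also multiplies the coupling-score terms, whereas the sign convention you use, which matches the main-text statement of \Cref{prop: general xi}, is the correct one.
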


With \Cref{prop: TSI + xi independent couplings}, we obtain a tractable path-dependent drift $\xi$ corresponding to the non-Markovian target measure $\Pi^{*}$, which can be directly employed in \Cref{alg:sampler}. 
This allows us to pick arbitrary prior distributions and reference processes, including arbitrary diffusion coefficients $\sigma$, while still retaining a single, scalable matching objective. Together with the damping outlined in the next section, this defines our proposed \emph{Bridge Matching Sampler} (BMS). A detailed description of the algorithm is provided in Algorithm \ref{alg:alg detailed} in \Cref{appendix:setup}, which additionally highlights how the integration of replay buffers enables data reuse, thereby significantly increasing the method's sample efficiency.

\subsection{Damped fixed-point diffusion matching} 
In order to update the control gradually and thereby improve the robustness of the fixed-point iteration, we extend \eqref{eq: fixed-point iteration} by introducing the \emph{damped fixed-point iteration}
\begin{equation}
\label{eq: damped fixed point operator}
    u_{i+1} = \alpha  \, \Phi(u_i) + (1-\alpha)u_i,
\end{equation}
where $\alpha\in(0,1]$ is the step size. This damped scheme admits a variational characterization, as formalized in the following proposition, see \Cref{app: proof damped FP} for the proof.

\begin{proposition}[Variational characterization of damped fixed-point iteration] \label{prop: var char damped} Consider the fixed-point operator $\Phi$ defined in~\eqref{eq: fixed-point iteration}.
Then, the damped fixed-point iteration \eqref{eq: damped fixed point operator} satisfies that
\begin{align*}
    u_{i+1} =& \argmin_{u \in \mathcal{U}} \Bigg\{ \E_{\Pi^i}\left[\int_0^T \tfrac{1}{2} \|\xi( X,t)-u( X_t,t)\|^2 \dd t\right] \nonumber\\&  \,\,\,+\eta \, \E_{\Pi^i}\left[\int_0^T \tfrac{1}{2} \|u_i(X_t,t)-u( X_t,t)\|^2 \dd t\right]\Bigg\},
\end{align*}
with $\Pi^i = \Pi^{i}_{0,T}\P_{|0,T}$ 
and damping parameter $\eta = \frac{1-\alpha}{\alpha}$. 
\end{proposition}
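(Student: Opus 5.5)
The plan is to reduce the combined objective to a pointwise least-squares problem in $u(x,t)$, then solve it by completing the square. Write the objective as $J(u)=J_1(u)+\eta J_2(u)$, where $J_1$ is the matching term and $J_2$ is the damping term. Both are expectations under the same measure $\Pi^i$ of time integrals of squared norms. By Fubini, I rewrite each as $\int_0^T \E_{\Pi^i_t}[\cdot]\,\dd t$ and condition on $X_t$. For the matching term I use the standard bias--variance decomposition:
\begin{equation*}
\E_{\Pi^i}\bigl[\|\xi(X,t)-u(X_t,t)\|^2\bigr]=\E_{\Pi^i_t}\bigl[\|\Phi(u_i)(X_t,t)-u(X_t,t)\|^2\bigr]+C_t,
\end{equation*}
where $C_t=\E_{\Pi^i}[\|\xi(X,t)-\Phi(u_i)(X_t,t)\|^2]$ does not depend on $u$. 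This holds because $\Phi(u_i)(x,t)=\E_{\Pi^i}[\xi(X,t)\mid X_t=x]$ by definition, so the cross term vanishes by the tower property. This is exactly the characterization of $\Phi$ as the minimizer of the matching objective stated after \eqref{eq: fixed-point iteration}.

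The damping term already depends on the path only through $X_t$. Hence, up to a $u$-independent constant,
\begin{equation*}
J(u)=\int_0^T \E_{\Pi^i_t}\Bigl[\tfrac12\|\Phi(u_i)-u\|^2+\tfrac{\eta}{2}\|u_i-u\|^2\Bigr]\dd t .
\end{equation*}
For fixed $(x,t)$, the integrand is a strictly convex quadratic in $u(x,t)$. Completing the square gives
\begin{equation*}
\tfrac12\|a-u\|^2+\tfrac{\eta}{2}\|b-u\|^2=\tfrac{1+\eta}{2}\Bigl\|u-\tfrac{a+\eta b}{1+\eta}\Bigr\|^2+\text{const}.
\end{equation*}
So the minimizer is $u=\frac{1}{1+\eta}\Phi(u_i)+\frac{\eta}{1+\eta}u_i$. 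With $\eta=\frac{1-\alpha}{\alpha}$ we have $\frac{1}{1+\eta}=\alpha$ and $\frac{\eta}{1+\eta}=1-\alpha$, which recovers \eqref{eq: damped fixed point operator}. The edge case $\alpha=1$ corresponds to $\eta=0$ and reduces to the undamped iteration.

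The main subtlety is justifying that the pointwise minimizer is the argmin over $\mathcal{U}$. There are two parts to this.
\begin{enumerate}
\item The candidate must be admissible. It is, since it is a convex combination of $u_i\in\mathcal{U}$ and $\Phi(u_i)$, which is assumed admissible because the undamped iteration is well defined. I also assume $\mathcal{U}$ is closed under convex combinations.
\item Uniqueness holds only $\Pi^i_t\otimes\dd t$-almost everywhere. Any minimizer must agree with the candidate on the support of the marginals, because the quadratic form has strict convexity constant $1+\eta>0$.
\end{enumerate}
I would state the result with this almost-everywhere qualification. Integrability of $\xi$ and $u_i$ under $\Pi^i$, from the standing assumptions in \Cref{app:assumptions}, ensures that all expectations above are finite and that Fubini applies.
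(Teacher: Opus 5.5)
Your proposal is correct and follows essentially the same route as the paper. The paper also uses the orthogonal-projection (bias--variance) decomposition of \Cref{OrthProj} to replace $\xi$ by $\Phi(u_i)$, then minimizes the quadratic integrand pointwise via the first-order condition rather than by completing the square, obtaining $u=\frac{1}{1+\eta}\Phi(u_i)+\frac{\eta}{1+\eta}u_i$ and substituting $\eta=\frac{1-\alpha}{\alpha}$. Your remarks on admissibility of the convex combination and on $\Pi^i_t\otimes\dd t$-a.e.\ uniqueness are refinements that the paper leaves implicit.
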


This variational perspective shows that the damped iteration regularizes the current drift to remain close to the previous iterate $u_i$ in the $L^2$ sense. Similar regularization techniques are common in proximal point methods \cite{parikh2014proximal,hertrich2024importance} and trust-region methods \cite{conn2000trust,schulman2015trust}, which have recently gained traction in the diffusion model literature \cite{guo2025proximal,blessing2025trust}. 

In practice, damping is integrated into~\Cref{alg:sampler} by modifying the Markovianization step and storing the parameters of the previous iteration to evaluate $u_i$. As shown in our experiments (\Cref{sec:experiments}), this modification substantially improves the performance of both, baseline methods and our proposed approach, particularly in high-dimensional settings.
 We refer to \Cref{alg:alg detailed} for a detailed description of the algorithm, including the damped fixed-point iteration.

\begin{table*}[t]
    \centering
    \begin{minipage}[c]{0.32\textwidth}
        \centering
        \includegraphics[width=0.8\linewidth]{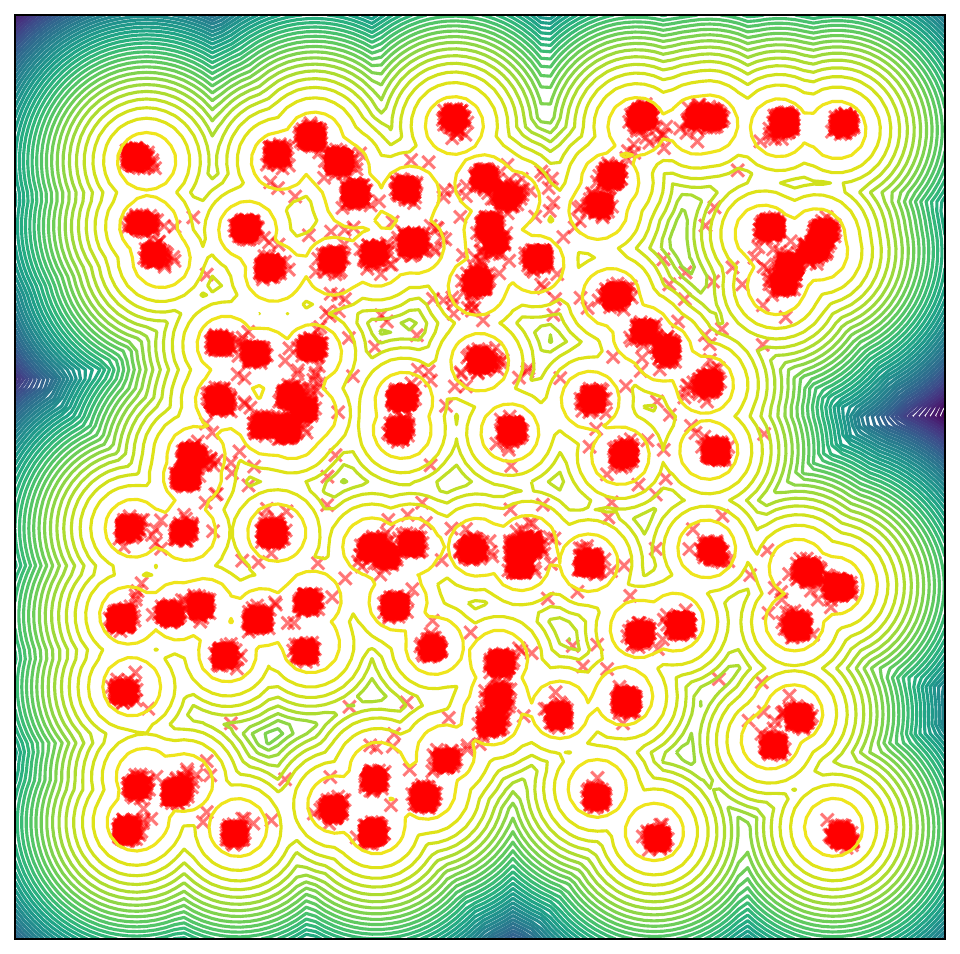} 
        \captionof{figure}{Samples obtained with BMS, projected on the first two dimensions of a GMM with $d=100$ and 100 modes.}
        \label{fig:gmm bms}
    \end{minipage}%
    \hfill 
    \begin{minipage}[c]{0.65\textwidth}
        \caption{
            Comparison on $n$-body particle systems across varying dimensions $d$. 
            We report the Wasserstein-2 distance between model and long run MCMC samples ($W_2$) and energy values ($E(\cdot)\ W_2$). 
            Results are averaged over three random seeds, with the best performing method highlighted in {bold}. 
            Baseline results are quoted from \citet{liu2025adjoint}.
        }
        \centering
        \resizebox{\linewidth}{!}{%
            \renewcommand{\arraystretch}{1.2}
            \setlength{\tabcolsep}{5pt}
            \begin{tabular}{@{} l cc cc cc @{}}
            \toprule
             & \multicolumn{2}{c}{DW-4 $(d=8)$} & \multicolumn{2}{c}{LJ-13 $(d=39)$} & \multicolumn{2}{c}{LJ-55 $(d=165)$} \\
            \cmidrule(lr){2-3} \cmidrule(lr){4-5} \cmidrule(lr){6-7} 
            Method & $W_2 \downarrow$ & $E(\cdot)\ W_2 \downarrow$ & $W_2 \downarrow$ & $E(\cdot)\ W_2 \downarrow$ & $W_2 \downarrow$ & $E(\cdot)\ W_2 \downarrow$ \\
             \midrule
            PIS {\scriptsize\cite{zhang2021path}} & 0.68{\color{gray}\tiny$\pm$0.28} & 0.65{\color{gray}\tiny$\pm$0.25} & 1.93{\color{gray}\tiny$\pm$0.07} & 18.02{\color{gray}\tiny$\pm$1.12} & 4.79{\color{gray}\tiny$\pm$0.45} & 228.70{\color{gray}\tiny$\pm$131.27} \\
            DDS {\scriptsize\cite{vargas2023denoising}} & 0.92{\color{gray}\tiny$\pm$0.11} & 0.90{\color{gray}\tiny$\pm$0.37} & 1.99{\color{gray}\tiny$\pm$0.13} & 24.61{\color{gray}\tiny$\pm$8.99} & 4.60{\color{gray}\tiny$\pm$0.09} & 173.09{\color{gray}\tiny$\pm$18.01} \\
            iDEM {\tiny\cite{akhound2024iterated}} & 0.70{\color{gray}\tiny$\pm$0.06} & 0.55{\color{gray}\tiny$\pm$0.14} & 1.61{\color{gray}\tiny$\pm$0.01} & 30.78{\color{gray}\tiny$\pm$24.46} & 4.69{\color{gray}\tiny$\pm$1.52} & \phantom{0}93.53{\color{gray}\tiny$\pm$16.31} \\
            AS {\tiny\cite{havens2025adjoint}} & 0.62{\color{gray}\tiny$\pm$0.06} & 0.55{\color{gray}\tiny$\pm$0.12} & 1.67{\color{gray}\tiny$\pm$0.01} & \phantom{0}2.40{\color{gray}\tiny$\pm$1.25} & 4.50{\color{gray}\tiny$\pm$0.05} & \phantom{0}58.04{\color{gray}\tiny$\pm$20.98} \\
            ASBS {\tiny\cite{liu2025adjoint}} & 0.43{\color{gray}\tiny$\pm$0.05} & \textbf{0.20}{\color{gray}\tiny$\pm$0.11} & 1.59{\color{gray}\tiny$\pm$0.03} & \phantom{0}1.99{\color{gray}\tiny$\pm$1.01} & 4.00{\color{gray}\tiny$\pm$0.03} & \phantom{0}28.10{\color{gray}\tiny$\pm$8.15} \\
             \midrule
            BMS (ours) & \textbf{0.38}{\color{gray}\tiny$\pm$0.01} & 0.21{\color{gray}\tiny$\pm$0.12} & \textbf{1.54}{\color{gray}\tiny$\pm$0.00} & \phantom{0}\textbf{0.53}{\color{gray}\tiny$\pm$0.04} & \textbf{3.80{\color{gray}\tiny$\pm$0.01}} & \phantom{00}\textbf{2.36{\color{gray}\tiny$\pm$0.45}} \\
            \bottomrule
            \end{tabular}
        }
        \label{tab:n_body}
    \end{minipage}
\vspace{-0.5em}
\end{table*}

\section{Related work}
\paragraph{Diffusion-based sampling.}
Sampling from unnormalized densities has traditionally relied on Markov Chain Monte Carlo (MCMC) methods. Early learnable diffusion-based approaches utilized Schrödinger-Föllmer diffusions~\cite{zhang2021path,richter2021thesis,vargas2023bayesian} or adaptations of denoising diffusion models to the energy-based setting~\cite{berner2022optimal,vargas2023denoising,doucet2022annealed,huang2023monte}. From the perspective of stochastic optimal control, many of these approaches can be viewed as finding controls that minimize divergences between forward and backward processes~\cite{richter2023improved,vargas2024transport,blessing2025underdamped}. A series of recent works improved the objectives, training schemes, and exploration of diffusion-based samplers~\cite{zhangdiffusion,he2024training,ouyang2024bnem,erivescontinuously,wang2025importance,yoon2025value,sanokowski2025rethinking,gritsaev2025adaptive,shi2024diffusion,berner2025discrete,kim2024adaptive,kim2025scalable,chen2024sequential,zhang2025generalised,zhang2025accelerated,sendera2024improved}. 
However, while effective, these methods typically require to keep SDE simulations in memory for the gradient computation, which is computationally expensive.

To address this, \citet{havens2025adjoint} introduced \emph{adjoint sampling} (AS), a framework that leverages the adjoint method to enable memory-efficient gradient estimation. This allowed them to scale diffusion-based sampling to amortized conformer generation across many molecular system. However, AS relies on a \enquote{memoryless} condition, restricting the reference process and prior distributions to specific forms (e.g., Dirac priors). This condition necessitates large diffusion coefficients to achieve good performance, which can leads to instability.

\paragraph{Diffusion bridges.}
To enable sampling between arbitrary boundary distributions, the problem can be framed as learning a Schrödinger Bridge (SB). \citet{shi2023diffusion} introduced \emph{Diffusion Schrödinger Bridge Matching} (DSBM) for the data-driven setting, utilizing Iterative Markovian Fitting that alternates between projecting onto Markovian and reciprocal classes. Recently, \citet{liu2025adjoint} extended this to energy-based sampling with the \emph{Adjoint Schrödinger Bridge Sampler} (ASBS), which generalizes AS by relaxing the memoryless condition, allowing for arbitrary priors. However, ASBS employs an alternating optimization scheme between the control and a corrector potential, which can be empirically unstable due to the interdependence of the objectives. While our method retains the flexibility of ASBS, it only relies on a single, scalable objective. To address mode collapse in complex landscapes, \citet{nam2025enhancing} proposed \emph{Well-Tempered ASBS} (WT-ASBS), which augments diffusion samplers with metadynamics-inspired bias potentials along collective variables. While our work focuses on the stability and scalability of the core sampling algorithm, our method is compatible with such enhanced sampling techniques.

\paragraph{Target score estimation.}
A core component of matching-based samplers is the estimation of the regression target.
\citet{de2024target} formally introduced the \emph{Target Score Identity} (TSI) and \emph{Target Score Matching} (TSM), demonstrating that score estimation can be significantly improved by leveraging the known structure of the target score $\nabla \log p_{\text{target}}$. This identity underpins the regression targets used in AS, ASBS, and in our framework. 
\begin{figure*}[tbp]
    \centering
    \includegraphics[width=\linewidth]{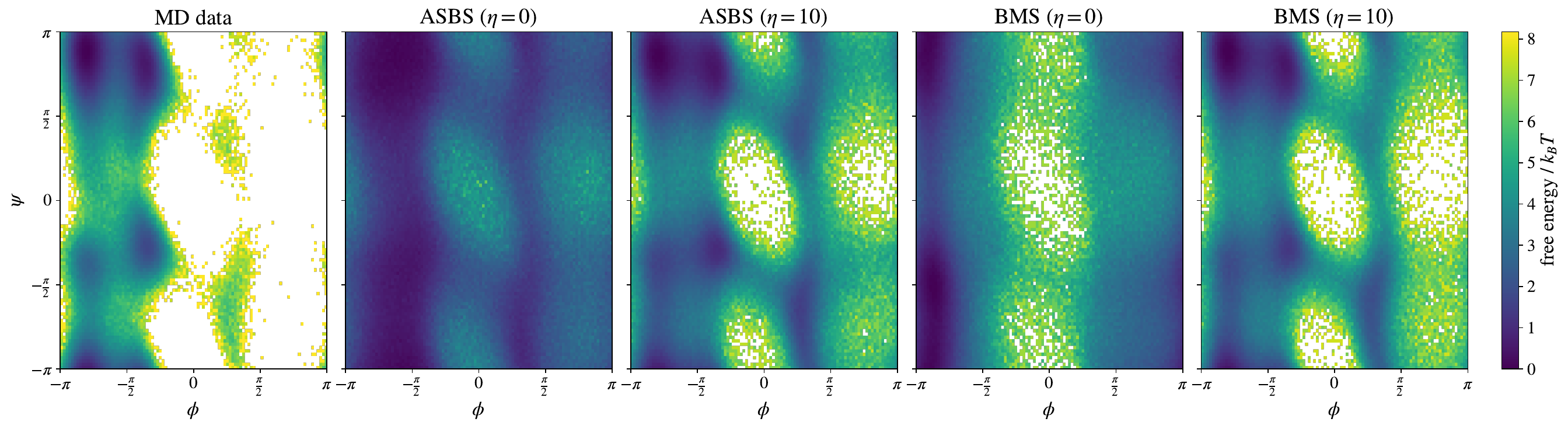}
\caption{
Ramachandran plots with $10^6$ samples for Alanine dipeptide for Molecular Dynamics (MD) data, ASBS \cite{liu2025adjoint}, and our method, BMS, for damping values $\eta\in \{0,10\}$. ASBS and BMS are trained on all-atom coordinates solely using energy evaluations.}
    \label{fig:ram ala2}
\end{figure*}
\begin{table*}[tbp]
    \centering
    \caption{
Comparison of performance metrics on Alanine Dipeptide (ALA2) and Tetrapeptide (ALA4). We report the Jensen–Shannon divergence of the joint backbone dihedral angle distribution $(\phi,\psi)$ ($D_{\mathrm{JS}}$), the root-mean-squared error between the corresponding free-energy surfaces ($\mathrm{RMSE}_{\Delta F}$), and the Wasserstein-2 distance between the first two TIC components (TICA-$W_2$). Runs marked with $\dagger$ diverged during training; for these, results from the final checkpoint are shown. All metrics are averaged over three random seeds. The best-performing method is highlighted in bold.
    }
    \label{tab:peptides}
    
    \begin{minipage}[c]{0.18\textwidth}
        \centering
        \vspace{.5cm} 
        \includegraphics[width=\linewidth]{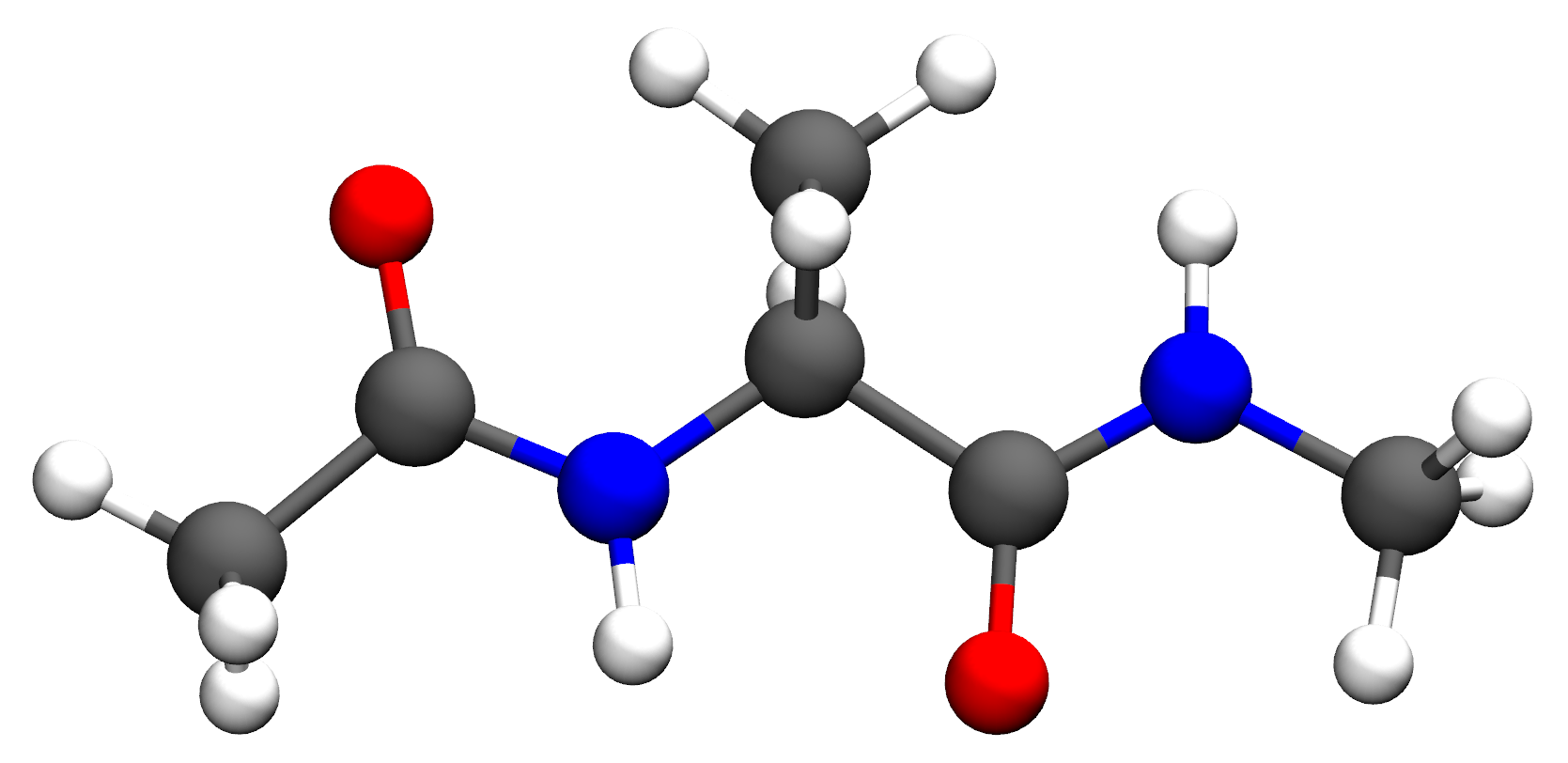} \\
        \vspace{.5cm} 
        \includegraphics[width=\linewidth]{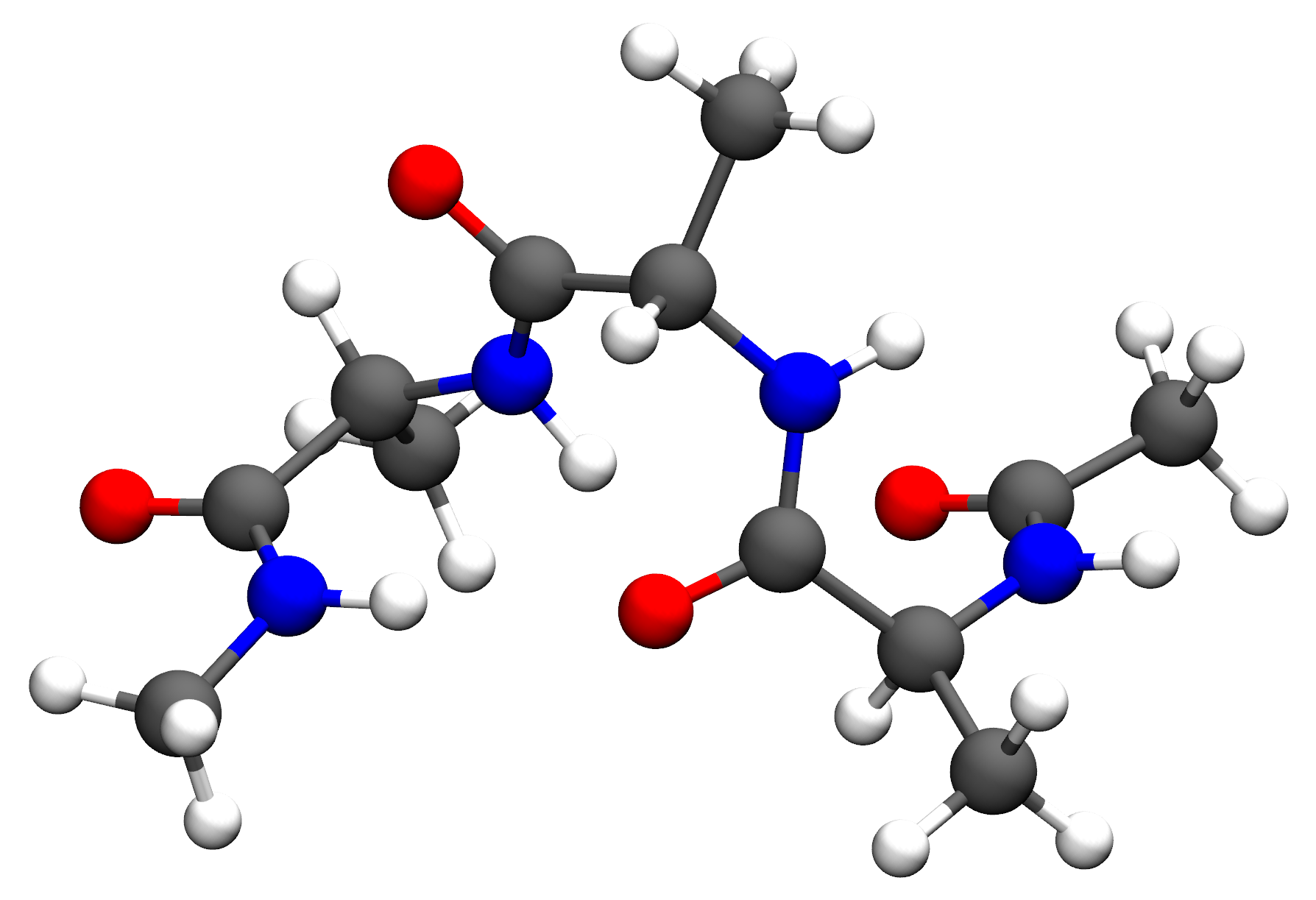} \\
    \end{minipage}
    \hfill
    \begin{minipage}[c]{0.78\textwidth}
        \resizebox{\linewidth}{!}{%
            \renewcommand{\arraystretch}{1.2}
            \setlength{\tabcolsep}{10pt}
            
            \begin{tabular}{@{} l l c c c c @{}}
                \toprule
                System & Method & $\eta$ & $(\phi,\psi)$-$D_{\mathrm{JS}}$ $\downarrow$ & $\mathrm{RMSE}_{\mathrm{\Delta F}}$ $\downarrow$ &  TICA-$W_2$ $\downarrow$ \\
                \midrule
                \multirow{4}{*}{\makecell{ALA2\\[0.15cm]($d=66$)}} 
                & ASBS {\tiny\cite{liu2025adjoint}} & 0
& 0.232{\color{gray}\tiny$\pm$0.018}
& 0.204{\color{gray}\tiny$\pm$0.003}
& 1.263{\color{gray}\tiny$\pm$0.140}
                \\
                & ASBS-damped (ours) & 10  
& 0.111{\color{gray}\tiny$\pm$0.017}
& 0.128{\color{gray}\tiny$\pm$0.008}
& 1.526{\color{gray}\tiny$\pm$0.150}
                \\
                & BMS (ours) & 0  
& 0.311{\color{gray}\tiny$\pm$0.050}
& 0.742{\color{gray}\tiny$\pm$0.514}
& 2.707{\color{gray}\tiny$\pm$0.268}
                \\
                & BMS-damped (ours) & 10 
& \textbf{0.068{\color{gray}\tiny$\pm$0.009}}
& \textbf{0.079{\color{gray}\tiny$\pm$0.000}}
& \textbf{0.900{\color{gray}\tiny$\pm$0.066}}
                \\
                \midrule
                \multirow{4}{*}{\makecell{ALA4\\[0.15cm]($d=126$)}}
                & ASBS$^\dagger$ {\tiny\cite{liu2025adjoint}} & 0  
& 0.489{\color{gray}\tiny$\pm$0.016}
& 1.368{\color{gray}\tiny$\pm$0.836}
& 474.8{\color{gray}\tiny$\pm$41.91}
                \\
                & ASBS-damped (ours) & 50 
& 0.230{\color{gray}\tiny$\pm$0.003}
& 0.457{\color{gray}\tiny$\pm$0.073}
& \textbf{1.606{\color{gray}\tiny$\pm$0.028}}
                \\
                & BMS$^\dagger$ (ours) & 0 
& 0.502{\color{gray}\tiny$\pm$0.027}
& 0.811{\color{gray}\tiny$\pm$0.158}
& 280.3{\color{gray}\tiny$\pm$1.526}
                \\
                & BMS-damped (ours) & 50 
& \textbf{0.228{\color{gray}\tiny$\pm$0.011}}
& \textbf{0.396{\color{gray}\tiny$\pm$0.012}}
& 1.697{\color{gray}\tiny$\pm$0.023}
                \\
                \bottomrule
            \end{tabular}
        }
    \end{minipage}
    
\vspace{-0.5em}
\end{table*}

\section{Numerical Experiments}
\label{sec:experiments}
We evaluate the performance of our proposed BMS by comparing it to key instances of our framework: the \textit{adjoint sampler} (AS) \cite{havens2025adjoint} and the \textit{adjoint Schrödinger bridge sampler} (ASBS) \cite{liu2025adjoint}. 
Our evaluation spans three distinct testbeds: high-dimensional synthetic Gaussian mixture models, $n$-body identical particle systems, and two molecular peptide systems.
Detailed descriptions of the experimental setup, benchmark problems, and evaluation metrics are provided in Appendix~\ref{appendix:setup}, with supplementary results available in Appendix~\ref{appendix:results}. 

\subsection{Gaussian mixture models}
In this experiment, we consider Gaussian mixture targets consisting of 20 components with unit variance and means sampled uniformly from $[-20, 20]^d$. We assess the models based on their ability to capture all modes and the accuracy of their density fitting. Specifically, we utilize \textit{mode TVD} -- the total variation distance between empirical and ground-truth mixture weights -- to quantify mode collapse. Additionally, we use the Wasserstein ($W_2$) distance to compare the model's empirical density with ground truth samples from the target.

As shown in \Cref{fig:gmm heatmaps}, our analysis focuses on a comparison with ASBS, as the standard adjoint sampler did not produce stable or competitive results in this setting. While many contemporary studies focus on dimensions ranging from $d=2$ to $50$, we push the complexity to $d=2500$. We observe that while ASBS is prone to instability at low damping values $\eta$, our method remains consistently stable, although it may exhibit some mode collapse. Notably, as the dimensionality increases, ASBS suffers from a clear degradation in performance, whereas our method maintains consistent and robust results across the entire range of tested dimensions. As sampling multimodal densities in such high dimensions was previously unattainable, we conjecture that damping will be a key ingredient in the future of generative sampling methods. We provide further results in~\Cref{appendix:results}, comparing the performance across different number of modes; see~\Cref{fig:gmm bms} for a qualitative result.

\subsection{$n$-body identical particle systems}
We extend our evaluation to $n$-body systems of increasing complexity, spanning from DW-4 ($d=8$) to the high-dimensional LJ-55 ($d=165$). The results in Table~\ref{tab:n_body} demonstrate that our approach consistently outperforms established baselines, including AS and ASBS, across both geometric Wasserstein-2 ($W_2$) and energy value distributions ($E(\cdot)\ W_2$). Critically, the most substantial benefits of our method emerge in the high-dimensional $d=165$ regime, where it maintains superior accuracy in energy modeling while other methods begin to degrade. Furthermore, our approach exhibits significantly reduced variance across random seeds, underscoring its stability compared to existing approaches.

\subsection{Molecular benchmarks: Peptide systems}
We evaluate our method on two increasingly complex molecular benchmarks: Alanine Dipeptide (ALA2; $d=66$) and Alanine Tetrapeptide (ALA4; $d=126$). Notably, we train our model directly on Cartesian coordinates, a departure from existing literature which typically relies on prior domain knowledge -- such as internal coordinates \cite{midgley2022flow, liu2025adjoint}, collective variables -- to enhance exploration \cite{nam2025enhancing}, or pretraining on molecular dynamics (MD) data with high temperatures \cite{akhound2025progressive}. To assess the quality of the generated conformations, we analyze Ramachandran plots, which represent the joint distribution of the backbone dihedral angles $\phi$ and $\psi$.  These angles characterize the local conformation of the molecular backbone and are essential for identifying stable states. We compute the Jensen-Shannon divergence of pairs of backbone dihedral angles between MD data and samples generated by the model ($D_{\mathrm{JS}}$), averaged over all residues. Similarly, we compute the root-mean-squared error between the corresponding free-energy surfaces ($\mathrm{RMSE}_{\mathrm{\Delta F}}$). Additionally, we compute the Wasserstein-2 metric between the first two TIC components (TICA-$W_2$) to ensure the capture of global conformational dependencies. Qualitative results (Ramachandran and TICA visualizations) are presented in \Cref{fig:ram ala2,fig:ala2 tica} for ALA2 and \Cref{fig:ala4} for ALA4. Quantitative results are shown \Cref{tab:peptides} and \Cref{fig:alanine jsd}. 

While our method achieves state-of-the-art performance across both systems, the results underscore the critical role of the proposed damped-fixed point iteration scheme. Specifically, omitting the damping component leads to poor performance and significant training instability in both ASBS and BMS, as evidenced in Figure \ref{fig:alanine jsd}. Note that we were not able to obtain results for ALA4 without damping due to numerical instabilities. A notable characteristic across both systems is their mode-covering behavior. Unlike traditional sampling techniques that often suffer from mode-collapse -- typically driven by the mode-seeking nature of the reverse KL divergence -- our approach successfully maintains probability mass across all target regions. We conjecture that this mass-covering property is a direct result of our fixed-point iteration; as shown in \eqref{eq: forward fixed point KL}, the fixed point corresponds to the Markovian projection, which is inherently linked to a forward KL objective.

\begin{figure}[tbp]
    \centering
        \includegraphics[width=0.6\linewidth]{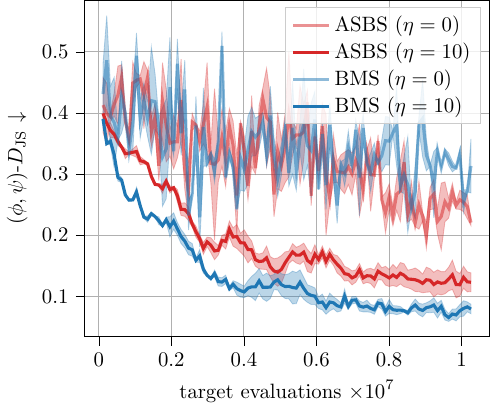}
\caption{Jensen-Shannon divergence for the joint distribution of backbone dihedral angles  $(\phi,\psi)$-$D_{\mathrm{JS}}$ for Alanine Dipeptide over the course of training for ASBS \cite{liu2025adjoint} and our proposed method, BMS, for damping values $\eta\in \{0,10\}$.}
    \label{fig:alanine jsd}
\end{figure}
\vspace{-0.5em}

\section{Conclusion}
In this work, we formulate diffusion-based sampling from unnormalized target densities as a generalized fixed-point iteration grounded in Nelson’s relation and a generalized target score identity (see~\Cref{TSI general coupling}). This perspective provides a unified theoretical framework encompassing several recently proposed matching-based algorithms, including Adjoint Sampling~\cite{havens2025adjoint} and Adjoint Schrödinger Bridge Samplers~\cite{liu2025adjoint,nam2025enhancing}.
Building on this foundation, we introduce a new scalable and stable algorithm, termed the \emph{Bridge Matching Sampler} (BMS), which addresses key limitations of existing approaches -- most notably alternating objectives and restrictive choices of reference processes or priors. To further improve robustness and avoid mode collapse in high dimensions, we propose a damped fixed-point iteration. Empirically, BMS achieves state-of-the-art performance on challenging synthetic targets and high-dimensional molecular benchmarks, and scales successfully to highly multimodal distributions in dimensions up to $d=2500$.

\paragraph{Future work.}. 
Several promising directions remain for future work. First, the performance of our sampler could be further improved by incorporating an adaptive schedule for the damping parameter~$\eta$, in the spirit of trust-region methods~\cite{blessing2025trust}, or by integrating importance weighting strategies (see \Cref{appendix: is and prob flow} for a preliminary discussion). From a theoretical perspective, establishing convergence guarantees for the proposed fixed-point iteration, as well as conditions ensuring existence and uniqueness of its solution, remains an open problem. Finally, while our focus has been on the principled development of scalable and stable diffusion-based samplers, it would be natural to extend this framework to related settings, such as the estimation of normalizing constants~\cite{guo2025complexity,he2025feat} and sampling over discrete state spaces~\citep{holderrieth2025leaps,sanokowski2025scalable,zhu2025mdnsmaskeddiffusionneural,kholkin2025sampling,so2026discrete,guo2026discrete}.

\section*{Impact Statement}
This paper presents work whose goal is to advance the field of Machine
Learning. There are many potential societal consequences of our work, none
which we feel must be specifically highlighted here.

\section*{Acknowledgements}
The authors thank Jaemoo Choi, Christopher von Klitzing, and Henrik Schopmans for the many useful discussions. D.B. acknowledges support by funding from a Google PhD fellowship in Machine Learning and ML Foundations, the pilot program Core Informatics of the Helmholtz Association (HGF) and the state of Baden-Württemberg through bwHPC, as well as the HoreKa supercomputer funded by the Ministry of Science, Research and the Arts Baden-Württemberg and by the German Federal Ministry of Education and Research. The research of L.R.~was partially funded by Deutsche Forschungsgemeinschaft (DFG) through the grant CRC 1114 ``Scaling Cascades in Complex Systems'' (project A05, project number $235221301$).  

\bibliography{references}
\bibliographystyle{icml2026}

\newpage
\appendix
\crefalias{section}{appendix}
\crefalias{subsection}{appendix}
\crefalias{subsubsection}{appendix}
\onecolumn
\startcontents[appendix]

\printcontents[appendix]{}{1}{\section*{Appendix}}
\clearpage

\section{Assumptions and auxiliary results}
\label{app:assumptions}

\subsection{Additional notation}
For vectors $v_1, v_2 \in \R^d$, we denote by $\|v\|$ the Euclidean norm and by $v_1 \cdot v_2$ the Euclidean inner product. For a real-valued matrix $A$, we denote by $\operatorname{Tr}(A)$ and $A^\top$ its trace and transpose.

For a sufficiently smooth function $f \colon \R^d \times [0,T] \to \R$, we denote by $\nabla f = \nabla_x f$ its gradient w.r.t. the spatial variables $x$ and by $\partial_t f$ and $\partial_{x_i}f$ its partial derivatives w.r.t.\@ the time coordinate $t$ and the spatial coordinate $x_i$, respectively. 

We denote by $\mathcal{N}(\mu,\Sigma)$ a multivariate normal distribution with mean $\mu \in \R^d$ and covariance matrix $\Sigma \in \R^{d \times d}$.  
For random variables $X_1$, $X_2$, we denote by $\E[X_1]$ and $\V[X_1]$ the expectation and variance of $X_1$ and by $\E[X_1|X_2]$ the conditional expectation of $X_1$ given $X_2$.

\subsection{Technical assumptions} \label{sec: technical assumptions}
Throughout our work, we make the same assumptions as in \citet{nuesken2021solving,domingoenrich2024stochastic, shi2023diffusion}, which are needed for all the objects considered to be well-defined and for the Propositions to hold. Namely, we assume that:
\begin{enumerate}[label=(\roman*)]
    \item The set $\mathcal{U}$ of \textit{admissible controls} is given by
    \begin{align}
        \mathcal{U} = \{ u \in C^1(\R^d \times [0,T] ; \R^d) \, | \, \exists C > 0, \, \forall (x,s) \in \R^d \times [0,T], \, u(x,s) \leq C(1+\|x\|) \}. 
    \end{align}
    \item The diffusion coefficient $\sigma (t)$ is a deterministic, continuously differentiable, and scalar-valued function of time, implying isotropic noise. To ensure the process is non-degenerate, we assume $\sigma (t) >0$ for all $t\in [0,T]$. 

    \item For the target path measure $\Pi^* = \Pi_{0,T}^* \P_{ | 0,T} \in \mathcal{R}(\P)$, we assume
    \begin{enumerate}
        \item For all $x_0 \in \R^d$, we have $\Pi^*_{T | 0} \ll \P_{T | 0}$.
        \item The Radon-Nikodym derivative $r_T(x_T  | x_0)= \frac{\dd \Pi^*_{T | 0}}{\dd \P_{T | 0}} (x_T)$ is bounded for all $x_T \in \R^d$ \label{toChangeIntDiff}. 
        \item The mapping $(t,x) \mapsto \E_{\Pi^*}[\xi (X, t)  | X_t = x]$ is locally Lipschitz and there exists $C>0$ and $\psi \in C([0,T], \R_+)$ such that for any $t\in [0,T]$ and $x\in \R^d$, we have
        \begin{align}
            \| \E_{\Pi^*_{T | t}}[\nabla \log \P_{T | t}(X_T | X_t)  | X_t = x] \| \leq C\psi(t) (1+ \|x\|).
        \end{align}
    \end{enumerate}

    \item The target boundary coupling $\Pi^*_{0,T}(x_0, x_T)$ is such that for any $t\in (0,T)$, the product $\Pi_{0,T}^* (x_0, x_T) \P_{t | 0,T}(x | x_0 , x_T)$ and its first-order spatial derivatives vanish at the limit as $\|x \| \to \infty$ or $\| x_T\| \to \infty$. \label{Assumption(iv)}
\end{enumerate}

\subsection{Useful identities}
Throughout this section, we consider the following stochastic processes:
\begin{enumerate}[label=(\roman*)]
    \item Reference Process $(X\sim \P)$: We fix it to a scaled Brownian motion: $\dd X_t = \sigma (t) \dd B_t$ with $X_0 \sim p_{\mathrm{prior}}$. We define the cumulative variance $\kappa (t) = \int _0^t \sigma ^2(s) \dd s$ and the relative variance $\gamma (t)= \kappa (t) / \kappa (T)$.
    \item Controlled Process $(X\sim \P^u)$: The Markovian diffusion defined by $\dd X_t = \sigma (t)u(X_t, t) \dd t + \sigma (t) \dd B_t$. 
    \item Path-Dependent Process $( X \sim \Pi^* $ or $\Pi^i$): The non-Markovian process defined by $\dd  X_t = \sigma (t)\xi ( X, t) \dd t + \sigma (t)\dd B_t$. 
\end{enumerate}

\begin{theorem}[Girsanov’s theorem for path measures] \label{theorem: girsanov}
Let \( u, v\in \mathcal{U} \). Then the Radon-Nikodym derivative between \( \mathbb{P}^u \) and \( \mathbb{P}^v \), evaluated along \( X \), is given by
\begin{align}
    \log \frac{\dd {\mathbb{P}}^{u}}{\dd {\mathbb{P}}^v}(X)  & = \int_0^T \sigma^{-1}(u-v)(X_t,t) \cdot \dd X_t- \frac{1}{2}\int_0^T \left(\| u\|^2 - \| v\|^2\right)(X_t,t) \dd t.
\end{align}
\end{theorem}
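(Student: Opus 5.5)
The plan is to identify the two path measures by their densities with respect to a common dominating measure, namely the Wiener-type measure $\mathbb{Q}$ of the driftless process $\dd X_t = \sigma(t)\dd B_t$ started from the same initial law. Because both $\P^u$ and $\P^v$ have the same diffusion coefficient $\sigma(t)$, they only differ from $\mathbb{Q}$ by a drift change. The classical Girsanov theorem therefore applies to each of them separately, and the ratio $\dd\P^u/\dd\P^v$ follows from the chain rule $\frac{\dd \P^u}{\dd \P^v} = \frac{\dd \P^u}{\dd \mathbb{Q}}\big/\frac{\dd \P^v}{\dd \mathbb{Q}}$.

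First, for a single control $u\in\mathcal{U}$, the drift of $\P^u$ is $\sigma(t)u(X_t,t)$. The Girsanov kernel is therefore $\theta_t = \sigma(t)^{-1}\cdot \sigma(t)u(X_t,t) = u(X_t,t)$, and we obtain
\[
\log\frac{\dd\P^u}{\dd\mathbb{Q}}(X) = \int_0^T u(X_t,t)\cdot \dd B^{\mathbb{Q}}_t - \tfrac12\int_0^T\|u(X_t,t)\|^2\dd t .
\]
Under $\mathbb{Q}$ we have $\dd B^{\mathbb{Q}}_t = \sigma(t)^{-1}\dd X_t$, so this equals $\int_0^T \sigma^{-1}u(X_t,t)\cdot\dd X_t - \tfrac12\int_0^T\|u\|^2(X_t,t)\dd t$. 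Written as a functional of the path $X$ alone, this expression makes sense for any path.

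Second, I will apply the same formula to $v$ and subtract. The stochastic integrals combine linearly into $\int_0^T\sigma^{-1}(u-v)(X_t,t)\cdot\dd X_t$, and the quadratic terms give $-\tfrac12\int_0^T(\|u\|^2-\|v\|^2)(X_t,t)\dd t$. This is exactly the claimed expression. The initial laws are the same for both measures, so they cancel. Positivity of $\sigma$ from assumption (ii) makes $\sigma^{-1}$ well defined, and continuity of $\sigma$ makes it bounded on $[0,T]$.

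The main obstacle is justifying that the exponential local martingales are true martingales, so that Girsanov yields genuine probability measures and mutual absolute continuity. Novikov's condition is not directly available, because $u$ only has linear growth. I would instead use the linear-growth bound $\|u(x,s)\|\le C(1+\|x\|)$ from the definition of $\mathcal{U}$, which gives a Beneš-type condition. Together with the existence of a unique strong non-explosive solution, which follows because $u\in C^1$ (hence locally Lipschitz) with linear growth, this guarantees the true-martingale property. Alternatively, one can localize with stopping times $\tau_n=\inf\{t:\|X_t\|\ge n\}$, apply Novikov on each $[0,\tau_n]$ where the drift is bounded, and pass to the limit using non-explosion under both measures. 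After this, the identity holds $\P^v$-almost surely, and equivalently $\P^u$-almost surely.
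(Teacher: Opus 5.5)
Your proposal is correct and is essentially the standard argument the paper relies on; the paper gives no proof of its own and simply cites Lemma A.1 of N\"usken and Richter. That argument writes $\P^u$ and $\P^v$ as drift changes of the driftless reference measure, applies the chain rule for Radon--Nikodym derivatives, and secures the true-martingale property under linear growth by localization or a Bene\v{s}-type bound (the latter needs $\|\int_0^t\sigma(s)\,\dd W_s\|\le(\|\sigma\|_\infty+T\|\sigma'\|_\infty)\max_{s\le t}\|W_s\|$ and conditioning on $X_0$). One small correction: $\int_0^T\sigma^{-1}u(X_t,t)\cdot\dd X_t$ is not defined for every path, only up to null sets, but it is consistently defined because $\P^u$, $\P^v$ and the reference measure are mutually equivalent, which is why the identity holds only almost surely.
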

\begin{proof}
    See, e.g., Lemma A.1 in \citet{nuesken2021solving} or Appendix E in \citet{vargas2024transport}.
\end{proof}

\begin{proposition}[Nelson's relation] \label{PropNelson}
    Let $\Pi^* \in \mathcal{P}$ be the target path measure with marginal densities $\Pi^*_t$. Let $u^*$ and $v^*$ be the unique optimal forward and backward controls such that the process $X$ driven by $\dd X_t= \sigma (t) u^*(X_t, t) \dd t+ \sigma (t) \dd B_t$ and the time-reversed process $Y$ driven by $\dd Y_t = - \sigma (t) v^*(Y_t, t) \dd t + \sigma (t) \dd \cev B_t$ satisfy $Law(X) = Law(Y)= \Pi^*$. Then, the controls satisfy the following duality relation: 
    \begin{align}
        u^*(x,t) + v^*(x,t) = \sigma (t)\nabla _x\log \Pi^*_t(x).
    \end{align}
\end{proposition}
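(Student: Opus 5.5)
The plan is to derive the relation from the Fokker--Planck equations satisfied by the common marginals $\Pi^*_t$ of the forward and backward representations. Since $X \sim \P^{u^*}$ and $Y \sim \cev{\P}^{v^*}$ both have law $\Pi^*$, their time marginals coincide with $\Pi^*_t$ for every $t$. I will write down one PDE for $\Pi^*_t$ from each representation and subtract them.

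First, the forward SDE $\dd X_t = \sigma u^* \dd t + \sigma \dd B_t$ gives
\[
\partial_t \Pi^*_t = -\nabla\cdot(\sigma u^* \Pi^*_t) + \tfrac{\sigma^2}{2}\Delta \Pi^*_t .
\]
Second, the backward SDE, with $\cev{\dd}$ denoting the backward Itô differential, evolves in reversed time $s = T-t$ with drift $\sigma v^*$ and the same diffusion coefficient. Its Fokker--Planck equation in $s$ is therefore
\[
\partial_s \Pi^*_{T-s} = -\nabla\cdot(\sigma v^* \Pi^*_{T-s}) + \tfrac{\sigma^2}{2}\Delta \Pi^*_{T-s},
\]
which in forward time reads
\[
\partial_t \Pi^*_t = -\nabla\cdot(\sigma v^* \Pi^*_t) - \tfrac{\sigma^2}{2}\Delta\Pi^*_t .
\]
Adding the two equations and using $\Delta \Pi^*_t = \nabla\cdot(\Pi^*_t \nabla\log\Pi^*_t)$ gives
\[
\nabla\cdot\Big(\Pi^*_t\big[\sigma(u^*+v^*) - \sigma^2\nabla\log\Pi^*_t\big]\Big) = 0 .
\]

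This only shows that the bracketed field is divergence-free with respect to $\Pi^*_t$, so I still need to identify it as zero. This is where the phrase ``unique optimal controls'' in the statement is used. One option is to rely on the uniqueness of the Markovian projection, together with the fact that the time reversal of a Markov diffusion has the explicit drift given by Haussmann--Pardoux time reversal. That is, for the forward process with drift $\sigma u^*$, the reversed process has drift $-\sigma u^* + \sigma^2\nabla\log\Pi^*_t$. Rewriting this in the form $-\sigma v$ identifies $v = \sigma\nabla\log\Pi^*_t - u^*$, and uniqueness of the backward control with law $\Pi^*$ forces $v^* = v$, which is exactly the claimed identity. So rather than solving the divergence equation, I will invoke the time-reversal formula and use the Fokker--Planck computation only as a consistency check. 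Under the regularity in the technical assumptions (positive marginals, growth of $u^*$, decay of $\Pi^*_t$ at infinity as in Assumption (iv)), the Haussmann--Pardoux conditions hold.

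The main obstacle is justifying the time-reversal step rigorously, namely that the reversed process is again a diffusion whose drift involves $\nabla\log\Pi^*_t$. If I want to avoid citing Haussmann--Pardoux, the first thing I would try is a direct argument: for smooth compactly supported test functions $f, g$, compute $\E[f(X_s)g(X_t)]$ for $s < t$ using Itô's formula forward from $s$ and backward from $t$, then integrate by parts in space using $\Pi^*_t$. The boundary terms vanish by the decay assumption. This identifies the backward generator and hence the backward drift, and uniqueness of $v^*$ then concludes the argument.
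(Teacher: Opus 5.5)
Your argument is correct and is essentially the standard one. The paper gives no derivation of its own and simply defers to Vargas et al.\ (2024, Prop.~2.1), i.e.\ to the classical Nelson/Haussmann--Pardoux time-reversal result. Your main step is exactly that argument: the reversal of $X$ has reversed-time drift $-\sigma u^* + \sigma^2\nabla\log\Pi^*_t$, and this must coincide with the reversed-time drift $\sigma v^*$ of $Y$ because the law determines the drift.

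Two sign slips should be fixed, though neither affects the conclusion. First, converting the reversed-time Fokker--Planck equation back to forward time flips \emph{both} terms, giving $\partial_t\Pi^*_t = \nabla\cdot(\sigma v^*\Pi^*_t) - \tfrac{\sigma^2}{2}\Delta\Pi^*_t$. It is then the difference of this equation and the forward one (not their sum) that yields $\nabla\cdot(\Pi^*_t[\sigma(u^*+v^*) - \sigma^2\nabla\log\Pi^*_t]) = 0$. Second, the Haussmann--Pardoux drift should be matched with $+\sigma v^*$, not written ``in the form $-\sigma v$''; taken literally, that would give $v = u^* - \sigma\nabla\log\Pi^*_t$. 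The formula you actually state, $v^* = \sigma\nabla\log\Pi^*_t - u^*$, is the right one.
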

\begin{proof}
    See e.g. \citet{vargas2024transport} Proposition 2.1.
\end{proof}

\begin{lemma}[Markovian Projection for path measures \citep{Brunick_2013}] \label{lem:markovian_projection_fixed} Let $\Pi^* \in \mathcal{P}$ be a (possibly non-Markovian) path measure induced by the path-dependent SDE in \eqref{eq: optimal non-Markov SDE} with drift functional $\xi(X,t)$. Then, there exists a unique Markov measure $\P^{u^*}\in \mathcal{M}$ induced by the Markovian SDE
\begin{align}
    \dd X_t = \sigma (t) u^*(X_t, t) \dd t+ \sigma (t) \dd B_t
\end{align}
with $u^*(x,t) = \E_{\Pi^*}[\xi (X, t)  | X _t = x]$. The measure $\P^{u^*}$ is the Markovian projection of $\Pi^*$ in the sense that their time-marginals coincide for all $t\in [0,T]$: $\P_t^{u^*} = \Pi_t^*$. 
\end{lemma}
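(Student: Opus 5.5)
The plan is to identify the time-marginals of $\Pi^*$ as a weak solution of the Fokker--Planck equation associated with the Markovian drift $u^*(x,t)=\E_{\Pi^*}[\xi(X,t)\mid X_t=x]$, and then to use well-posedness of that equation and of the Markovian SDE to conclude $\P^{u^*}_t=\Pi^*_t$. This is the mimicking-theorem strategy of \citet{gyongy1986mimicking,Brunick_2013}. It is simplified here by the facts that the diffusion coefficient $\sigma(t)$ is deterministic and scalar, and that only the drift has to be projected.

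\textbf{Step 1 (well-posedness of the Markovian SDE).} By assumption (iii)(c) the map $(t,x)\mapsto u^*(x,t)$ is locally Lipschitz. Since $\sigma$ is continuous and positive on $[0,T]$, (iii)(c) combined with the representation of \Cref{prop: uv general coupling} also gives a linear growth bound $\|u^*(x,t)\|\le C'(1+\|x\|)$. Local Lipschitz continuity plus linear growth yields a unique, non-exploding strong solution of $\dd X_t=\sigma(t)u^*(X_t,t)\dd t+\sigma(t)\dd B_t$ with $X_0\sim\Pi^*_0$. This defines $\P^{u^*}\in\mathcal{M}$, and its marginals $(\P^{u^*}_t)_t$ solve the Fokker--Planck equation
\begin{equation*}
\partial_t p_t=-\nabla\cdot\big(\sigma(t)u^*(\cdot,t)p_t\big)+\tfrac{\sigma^2(t)}{2}\Delta p_t
\end{equation*}
in the weak sense.

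\textbf{Step 2 (the marginals of $\Pi^*$ solve the same equation).} Take a test function $f\in C_c^\infty(\R^d)$ and apply Itô's formula to $f(X_t)$ under $\Pi^*$, where $\dd X_t=\sigma\xi\,\dd t+\sigma\,\dd B_t$. Taking expectations removes the martingale term, which is bounded because $f$ has compact support, and gives
\begin{equation*}
\frac{\dd}{\dd t}\E_{\Pi^*}[f(X_t)]=\E_{\Pi^*}\big[\sigma(t)\xi(X,t)\cdot\nabla f(X_t)+\tfrac{\sigma^2(t)}{2}\Delta f(X_t)\big].
\end{equation*}
The tower property, conditioning on $X_t$, replaces $\xi(X,t)$ by $u^*(X_t,t)$. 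The only integrability needed is $\E_{\Pi^*}\int_0^T\|\xi(X,t)\|\dd t<\infty$ on compact sets, which follows from the assumptions on $\Pi^*$. Hence $(\Pi^*_t)_t$ is a weak solution of the same Fokker--Planck equation, with the same initial datum $\Pi^*_0$.

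\textbf{Step 3 (uniqueness, the main obstacle).} It remains to show that the linear Fokker--Planck equation has a unique probability-measure-valued solution with initial condition $\Pi^*_0$. The drift is only locally Lipschitz with linear growth, and the diffusion is non-degenerate and spatially constant. I would first try the Figalli/Trevisan superposition principle. It represents any narrowly continuous weak solution satisfying $\int_0^T\!\int\|u^*\|\,\dd\Pi^*_t\,\dd t<\infty$ (guaranteed by linear growth and a second-moment bound) as the marginals of a solution of the martingale problem for $u^*$. Uniqueness of that martingale problem follows from pathwise uniqueness in Step 1 via Yamada--Watanabe. This gives $\Pi^*_t=\P^{u^*}_t$ for all $t$.

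\textbf{Step 4 (uniqueness of the projection).} Any other Markov diffusion of the form \eqref{eq: optimal Markovian SDE} with drift $\tilde u\in\U$ and the same marginals has $(\Pi^*_t)_t$ as a solution of its own Fokker--Planck equation. Subtracting the two equations gives $\nabla\cdot\big((u^*-\tilde u)\Pi^*_t\big)=0$. Within the class of drifts determined as $L^2(\Pi^*_t)$ conditional expectations, this forces $\tilde u=u^*$ $\Pi^*_t$-a.e. I expect the delicate points to be the integrability and moment bounds needed to invoke the superposition principle; I would verify these via a Grönwall estimate on $\E\|X_t\|^2$ using the linear growth of $u^*$.
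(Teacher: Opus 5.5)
Steps 1--3 give a correct proof, but by a different route from the paper. The paper offers no argument of its own and simply cites Corollary 3.7 of \citet{Brunick_2013}. That corollary is a mimicking theorem. Under the integrability condition $\E_{\Pi^*}\int_0^T\|\xi(X,t)\|\,\dd t<\infty$ alone, with no regularity of $u^*$, it produces \emph{some} weak solution of the Markovian SDE with drift $\sigma u^*$ whose one-dimensional marginals agree with those of $\Pi^*$. It does not address uniqueness.

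Your Steps 2--3 reprove this existence statement on the PDE side. It\^o's formula plus the tower property give the Fokker--Planck equation for $(\Pi^*_t)_t$, and the superposition principle lifts this curve to a solution of the martingale problem. Your Step 1 then supplies what the citation leaves implicit: local Lipschitz continuity and linear growth from (iii)(c), pathwise uniqueness, and Yamada--Watanabe show that this solution is the unique law $\P^{u^*}$. Your version therefore shows that the regularity assumptions are needed only for this identification, not for matching marginals. The price is invoking superposition, which is itself a mimicking-type result; the citation is shorter.

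Three small refinements:
\begin{itemize}
\item The integrability needed for superposition follows from Jensen, $\int\|u^*(\cdot,t)\|\,\dd\Pi^*_t\le\E_{\Pi^*}\|\xi(X,t)\|$, so you do not need a Gr\"onwall moment estimate. For the Brownian-bridge drift the singular part of $\E\|\xi(X,t)\|$ is $O((T-t)^{-1/2})$, which is integrable given first moments of prior and target.
\item Read the linear growth of $u^*$ off (iii)(c) directly, since $\xi$ is the $h$-transform drift $\sigma(t)\nabla_{X_t}\log\P_{T|t}(X_T|X_t)$. Citing \Cref{prop: uv general coupling} for this is mildly circular, because its proof uses the present lemma.
\item Because $\xi$ may depend on $X_T$, Step 2 needs $B$ to be a Brownian motion in a filtration to which $\xi(X,t)$ is adapted, e.g.\ the initial enlargement by $(X_0,X_T)$. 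Brunick--Shreve need the same.
\end{itemize}

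Step 4, however, is wrong and should be dropped. The identity $\nabla\cdot\big((u^*-\tilde u)\Pi^*_t\big)=0$ does not force $\tilde u=u^*$. Adding any field $w$ with $\nabla\cdot(w\,\Pi^*_t)=0$ gives a different Markov diffusion with the same marginals; for instance $w(x)=Jx$ with $J$ skew-symmetric works when $\Pi^*_t$ is rotationally invariant. Restricting to drifts ``determined as conditional expectations'' just makes the conclusion tautological. The uniqueness asserted in the lemma is only uniqueness in law for the SDE with this specific drift $u^*$, which your Step 1 already provides.
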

\begin{proof}
    See e.g., Corollary 3.7 in \citet{Brunick_2013}.
\end{proof}

\begin{lemma}[Fokker-Planck Equation]
    Let $X=(X_t)_{t\in [0,T]}$ be a Markov process satisfying the SDE
    \begin{align}
        \dd X_t = \sigma (t) u(X_t, t)\dd t + \sigma (t) \dd B_t, \quad X_0 \sim \Pi^*_0,
    \end{align}
    where $u\in \mathcal{U}$ is an admissible control and $\sigma (t) \in C^1([0,T], \R_{>0})$ is a deterministic diffusion coefficient. Then. under Assumption \ref{Assumption(iv)} the evolution of the marginal density $\Pi_t^*$ is governed by the Fokker-Planck equation
    \begin{align}
        \partial _t \Pi_t^*(x) = - \nabla \cdot (\Pi^*_t(x)\sigma (t) u(x,t)) + \frac 12 \sigma (t)^2 \Delta \Pi^*_t(x).
    \end{align}
\end{lemma}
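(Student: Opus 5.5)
We prove the Fokker--Planck equation in its weak form and then pass to the strong form by integrating by parts. Assumption~\ref{Assumption(iv)} is what removes the boundary terms.

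\emph{Step 1: Itô's formula.} Fix a test function $\varphi \in C_c^\infty(\R^d)$ and apply Itô's formula to $\varphi(X_t)$ along the SDE:
\begin{equation*}
\dd \varphi(X_t) = \Big(\sigma(t)\, u(X_t,t)\cdot\nabla\varphi(X_t) + \tfrac12 \sigma(t)^2 \Delta\varphi(X_t)\Big)\dd t + \sigma(t)\nabla\varphi(X_t)\cdot \dd B_t .
\end{equation*}
The stochastic integral is a true martingale, because $\nabla\varphi$ is bounded and $\sigma$ is continuous on $[0,T]$. Taking expectations gives
\begin{equation*}
\frac{\dd}{\dd t}\int \varphi\, \Pi^*_t \,\dd x = \int \Big(\sigma(t)\, u\cdot\nabla\varphi + \tfrac12\sigma(t)^2\Delta\varphi\Big)\Pi^*_t \,\dd x .
\end{equation*}
Exchanging derivative and integral is justified by dominated convergence. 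The control has linear growth (the definition of $\mathcal{U}$), and $\E\sup_t\|X_t\|<\infty$, which follows from this growth bound together with Gronwall's inequality. So the integrand $u\cdot\nabla\varphi$ is dominated on the compact support of $\varphi$.

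\emph{Step 2: Integration by parts.} Move the derivatives off $\varphi$ and onto $\Pi^*_t$ and $\Pi^*_t u$, integrating by parts once in the drift term and twice in the Laplacian term:
\begin{equation*}
\int \varphi\,\partial_t \Pi^*_t \,\dd x = \int \varphi \Big(-\nabla\cdot(\Pi^*_t\sigma(t) u) + \tfrac12\sigma(t)^2\Delta\Pi^*_t\Big)\dd x .
\end{equation*}
Since $\varphi$ is arbitrary, the strong equation follows, provided $\Pi^*_t$ is regular enough to take these derivatives.

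\emph{Step 3: Regularity and decay (main obstacle).} We need that $\Pi^*_t$ is $C^1$ in $t$ and $C^2$ in $x$ for $t\in(0,T)$, and that the boundary terms vanish when one works with non-compactly supported quantities. For this we use the representation of $\Pi^*_t$ as the reciprocal mixture
\begin{equation*}
\Pi^*_t(x) = \int \Pi^*_{0,T}(x_0,x_T)\,\P_{t|0,T}(x\mid x_0,x_T)\,\dd x_0\,\dd x_T .
\end{equation*}
\begin{itemize}
\item The Brownian bridge kernel $\P_{t|0,T}$ is a Gaussian that is smooth in $(x,t)$ for $t\in(0,T)$. Differentiating under the integral, controlled by Assumption~\ref{Assumption(iv)}, gives the required smoothness of $\Pi^*_t$.
\item The same assumption states that the product $\Pi^*_{0,T}\P_{t|0,T}$ and its first spatial derivatives vanish as $\|x\|\to\infty$. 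This kills the surface terms in the divergence theorem applied on balls of radius $R\to\infty$.
\item Combined with the linear growth of $u$, it also makes the flux $\Pi^*_t u$ vanish at infinity.
\end{itemize}
The endpoints $t=0,T$ are handled by continuity, or the equation is simply understood on the open interval $(0,T)$.
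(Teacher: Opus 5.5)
The paper gives no argument here, only a pointer to \citet{berner2022optimal}, so the comparison is with the standard derivation, and your Steps 1--2 are that derivation. They are correct for an arbitrary $u\in\mathcal{U}$: testing against $\varphi\in C_c^\infty(\R^d)$, Itô's formula gives the weak Fokker--Planck equation for the law of $X_t$. One minor point: the Gronwall moment bound is not needed, since $u\cdot\nabla\varphi$ is continuous with compact support and hence bounded. As you state it, that bound would also require $\E\|X_0\|<\infty$, which is not assumed.

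The gap is in Step 3. In Steps 1--2 the density is the law of $X_t$, i.e.\ $\P^u_t$. In Step 3 you differentiate a different density, the marginal of the reciprocal measure $\Pi^*_{0,T}\P_{|0,T}$. These coincide only if $u$ reproduces the marginals of that reciprocal measure, e.g.\ $u=u^*$, its Markovian projection. That is not among the hypotheses. The lemma is stated for a general admissible $u$, and the paper uses it that way: in the Lagrangian derivation of the Schr\"odinger system, the Fokker--Planck equation of a general controlled process is imposed as a constraint.

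For general $u$ the mixture representation is false. Take $d=1$, $\sigma\equiv 1$, $T=1$, $X_0=0$ and $u(x,t)=-x\in\mathcal{U}$. Then $\V[X_{1/2}]=(1-e^{-1})/2\approx 0.316$. Mixing Brownian bridges over the law of $(X_0,X_1)$ instead gives variance $\tfrac14\cdot\tfrac{1-e^{-2}}{2}+\tfrac14\approx 0.358$ at $t=1/2$. The lemma's reuse of the symbol $\Pi^*_t$ invites this conflation, but your proof has to resolve it.

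The regularity needed to pass from the weak to the strong form must come from the SDE itself. Since $\sigma$ is continuous and positive on $[0,T]$, the noise is uniformly nondegenerate. Parabolic regularity for the Fokker--Planck operator, given enough smoothness of $u$, then yields a density of $X_t$ that is locally $C^1$ in $t$ and $C^2$ in $x$ for $t>0$. Alternatively, state the result in weak form, which is what you have actually proved.

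Note also that because $\varphi$ has compact support, Step 2 produces no boundary terms at all. So the decay conditions in Assumption~\ref{Assumption(iv)} are not what this step needs. Similarly, linear growth of $u$ combined with mere vanishing of the density would not make $\Pi^*_t u$ vanish at infinity anyway. Those decay conditions matter only later, when the equation is integrated against non-compactly supported functions such as the multiplier $\psi$.
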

\proof{
    See e.g., \citet{berner2022optimal} section 2.1. 
}

\begin{lemma}[Orthogonal Projection in $L^2$]\label{OrthProj}

    Let $\xi$ be a path-dependent random variable such that $\E_{\Pi^i}[\xi^2] < \infty$, and let $\Phi(X_t) \coloneqq \E_{\Pi^i}[\xi  | X_t]$ be its Markovian projection. For any measurable Markovian function $u(X_t, t)$, the following decomposition holds:
\begin{equation}
    \E_{\Pi^i}[\|\xi(X,t) - u(X_t, t)\|^2] = \E_{\Pi^i}[\|\xi(X,t) - \Phi(X_t)\|^2] + \E_{\Pi^i}[\|\Phi(X_t) - u(X_t, t)\|^2].
\end{equation}
\end{lemma}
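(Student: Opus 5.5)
The approach is the standard Pythagorean decomposition for conditional expectations, applied pointwise in $t$ (and componentwise in $\mathbb{R}^d$ if needed). First, expand the square by inserting $\Phi(X_t)$:
\begin{equation*}
\|\xi - u\|^2 = \|\xi - \Phi(X_t)\|^2 + \|\Phi(X_t) - u(X_t,t)\|^2 + 2\,\bigl(\xi - \Phi(X_t)\bigr)\cdot\bigl(\Phi(X_t) - u(X_t,t)\bigr).
\end{equation*}
Taking expectations under $\Pi^i$, the claim reduces to showing that the cross term has zero expectation.

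To handle the cross term, use the tower property: condition on $X_t$. The factor $\Phi(X_t) - u(X_t,t)$ is $\sigma(X_t)$-measurable, so it can be pulled out of the inner conditional expectation. This gives
\begin{equation*}
\E_{\Pi^i}\bigl[(\xi-\Phi(X_t))\cdot(\Phi(X_t)-u(X_t,t))\bigr] = \E_{\Pi^i}\bigl[\bigl(\E_{\Pi^i}[\xi\,|\,X_t]-\Phi(X_t)\bigr)\cdot(\Phi(X_t)-u(X_t,t))\bigr].
\end{equation*}
By definition of $\Phi$, the inner difference vanishes, so the right-hand side is $0$.

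The only delicate point is integrability, which is needed both to justify splitting the expectation into three terms and to pull out the measurable factor. The hypothesis $\E_{\Pi^i}[\|\xi\|^2]<\infty$ gives $\Phi(X_t)\in L^2$ by conditional Jensen. If $\E_{\Pi^i}[\|u(X_t,t)\|^2]=\infty$, then both sides of the identity are $+\infty$, since $\xi$ and $\Phi(X_t)$ are in $L^2$. This case can therefore be dismissed, and we may assume $u(X_t,t)\in L^2$. Then every factor is square integrable, Cauchy–Schwarz makes the cross term integrable, and the conditioning argument above is valid.

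If the time-integrated version is desired, as in the Markovianization objective, the same identity holds after integrating over $t\in[0,T]$, by Fubini.
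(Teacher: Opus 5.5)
Your proof is correct and follows the paper's route: insert $\Phi(X_t)$, expand the square, and eliminate the cross term by conditioning on $X_t$, using $\E_{\Pi^i}[\xi\,|\,X_t]=\Phi(X_t)$. Your integrability discussion goes further than the paper, which leaves that step implicit: you dismiss the case $u(X_t,t)\notin L^2$ and use Cauchy--Schwarz to make the cross term integrable.
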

\begin{proof}
    We begin by adding and subtracting the projection $\Phi(X_t)$ inside the norm, i.e.
    \begin{equation}
        \E_{\Pi^i}[\|\xi(X,t) - u(X_t, t)\|^2] = \E_{\Pi^i}[\|(\xi(X,t) - \Phi(X_t)) + (\Phi(X_t) - u(X_t, t))\|^2].
    \end{equation}
    Expanding the square, we obtain
    \begin{align}
    \begin{split}
        \E_{\Pi^i}[\|\xi(X,t) - u(X_t, t)\|^2] = &\E_{\Pi^i}[\|\xi(X,t) - \Phi(X_t)\|^2] + \E_{\Pi^i}[\|\Phi(X_t) - u(X_t, t)\|^2] \\
        &+ 2\E_{\Pi^i}[\langle \xi(X,t) - \Phi(X_t), \Phi(X_t) - u(X_t, t) \rangle].
    \end{split}
    \end{align}
    We now show that the cross-term vanishes using the tower property. Let $g(X_t) \coloneqq \Phi(X_t) - u(X_t, t)$, which is a function solely of the state at time $t$:
    \begin{align}
        \E_{\Pi^i}[\langle \xi(X,t) - \Phi(X_t), g(X_t) \rangle] &= \E_{\Pi^i}\big[ \E_{\Pi^i}[\langle \xi(X,t) - \Phi(X_t), g(X_t) \rangle  | X_t] \big] \nonumber \\
        &= \E_{\Pi^i}\big[ \langle \E_{\Pi^i}[\xi(X,t)  | X_t] - \Phi(X_t), g(X_t) \rangle \big].
    \end{align}
    By the definition of the projection, $\Phi(X_t) = \E_{\Pi^i}[\xi(X,t)  | X_t]$. Therefore, the inner term is
    \begin{equation}
        \E_{\Pi^i}\big[ \langle \Phi(X_t) - \Phi(X_t), g(X_t) \rangle \big] = 0.
    \end{equation}
    The cross-term vanishes, leaving only the sum of the squared distances, which concludes the proof.
\end{proof}

\begin{lemma} \label{Lemma: marginalsGen}
Let $\Pi^*, \P \in \mathcal{P}$ such that $\Pi^* \ll \P$. Let $X$ denote the path and $X_t$ the process value at time $t$. Let $\Pi^*_t$ and $\P_t$ denote the marginal distributions at time $t$. Then, $\P_t$-almost surely
\begin{equation}
    \frac{\dd \Pi^*_t}{\dd \P_t}(x) = \E_{\P} \left[\frac{\dd \Pi^*}{\dd \P}(X) \bigg|X_t=x\right].
\end{equation}
\end{lemma}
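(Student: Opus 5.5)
The plan is to verify the identity by testing both sides against an arbitrary bounded measurable function $f$ of $X_t$; this is the defining property of conditional expectation. Write $Z \coloneqq \frac{\dd \Pi^*}{\dd \P}(X)$, which exists by the assumption $\Pi^* \ll \P$, and set $h(x) \coloneqq \E_{\P}[Z \mid X_t = x]$. It suffices to show that $h$ is a version of the density of $\Pi^*_t$ with respect to $\P_t$. Equivalently, for every bounded measurable $f\colon\R^d\to\R$ we want
\begin{equation*}
    \int f(x)\,\Pi^*_t(\dd x) = \int f(x)\,h(x)\,\P_t(\dd x).
\end{equation*}

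First step: rewrite the left-hand side on path space. By the change-of-variables formula for the pushforward under the evaluation map $X\mapsto X_t$, we have $\int f\,\dd\Pi^*_t = \E_{\Pi^*}[f(X_t)]$. Changing measure then gives $\E_{\Pi^*}[f(X_t)] = \E_{\P}[Z\,f(X_t)]$.

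Second step: apply the tower property under $\P$, conditioning on $\sigma(X_t)$. Since $f(X_t)$ is $\sigma(X_t)$-measurable and bounded, and $Z\ge0$ is integrable with $\E_\P[Z]=1$, we get $\E_\P[Z f(X_t)] = \E_\P[f(X_t)\,\E_\P[Z\mid X_t]] = \E_\P[f(X_t)h(X_t)]$. This last expression equals $\int f h\,\dd\P_t$, again by the pushforward formula. Here $h$ is obtained from the Doob–Dynkin factorization of the conditional expectation as a measurable function of $X_t$.

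Conclusion: the two measures $\Pi^*_t$ and $h\,\P_t$ agree on all bounded measurable test functions, in particular on indicators. Hence $\Pi^*_t = h\,\P_t$. This also shows $\Pi^*_t\ll\P_t$, and by uniqueness of Radon–Nikodym derivatives, $\frac{\dd\Pi^*_t}{\dd\P_t}=h$ holds $\P_t$-almost surely. Positivity and integrability of $h$ are immediate from those of $Z$.

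The only mildly delicate point is measure-theoretic bookkeeping. The conditional expectation is defined only up to $\P$-null sets, and it must be identified with a function of $x$ via factorization. This is exactly why the statement is asserted only $\P_t$-almost surely. No further obstacle is expected.
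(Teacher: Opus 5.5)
Your proposal is correct and follows the paper's proof essentially step for step: test against an arbitrary bounded measurable function of $X_t$, change measure on path space, apply the tower property under $\P$, and compare the two resulting integrals. One small difference is in your favor: you derive $\Pi^*_t \ll \P_t$ from the computation itself and then invoke uniqueness of the Radon--Nikodym derivative, whereas the paper's first step simply assumes the marginal density $\frac{\dd \Pi^*_t}{\dd \P_t}$ exists.
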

\begin{proof}
    Let $h: \R^d \to \R$ be an arbitrary bounded measurable test function. We compute the expectation $\E_{\Pi^*}[h(X_t)]$ in two distinct ways. \\
    1. Via the marginal measure $\Pi^*_t$:
    Using the definition of the Radon-Nikodym derivative on the marginal space $\R^d$, we have
    \begin{equation}
        \E_{\Pi^*}[h(X_t)] = \int_{\R^d} h(x) \, \dd \Pi^*_t(x) = \int_{\R^d} h(x) \frac{\dd \Pi^*_t}{\dd \P_t}(x) \, \dd \P_t(x). \label{eq:marginal}
    \end{equation}
    2. Via the path measure $\Pi^*$:
    Using the Radon-Nikodym derivative $L = \frac{\dd \Pi^*}{\dd \P}$ on the full path space and the tower property of conditional expectation, we get
     \begin{subequations}
     \label{eq:cond_exp}
    \begin{align}
        \E_{\Pi^*}[h(X_t)] &= \E_{\P}\left[ h(X_t) \frac{\dd \Pi^*}{\dd \P}(X) \right] \\
        &= \E_{\P}\left[ \E_{\P}\left[ h(X_t) \frac{\dd \Pi^*}{\dd \P}(X) \bigg| X_t \right] \right]  \\
        &= \E_{\P}\left[ h(X_t) \, \E_{\P}\left[ \frac{\dd \Pi^*}{\dd \P}(X) \bigg| X_t \right] \right].
    \end{align}
     \end{subequations}
    Note that we pulled $h(X_t)$ out of the inner expectation because it is $\sigma(X_t)$-measurable.
    We can rewrite \eqref{eq:cond_exp} as an integral over the marginal measure $\P_t$, i.e.
    \begin{equation}
        \E_{\Pi^*}[h(X_t)] = \int_{\R^d} h(x) \, \E_{\P}\left[ \frac{\dd \Pi^*}{\dd \P} (X)\bigg| X_t = x \right] \, \dd \P_t(x). \label{eq:path_integral}
    \end{equation}
    Comparing \eqref{eq:marginal} and \eqref{eq:path_integral}, we have
    \begin{equation}
        \int_{\R^d} h(x) \frac{\dd \Pi^*_t}{\dd \P_t}(x) \, \dd \P_t(x) = \int_{\R^d} h(x) \, \E_{\P}\left[ \frac{\dd \Pi^*}{\dd \P}(X) \bigg| X_t = x \right] \, \dd \P_t(x).
    \end{equation}
    Since this equality holds for any bounded measurable $h$, the integrands must be equal $\P_t$-almost everywhere.
\end{proof}

\section{Proofs} \label{appendix: proofs}

In this section we state the proofs of the statements from the main part of the paper and add some additional statements as well.

\subsection{Proofs for general couplings}
\label{app: proofs general couplings}

\begin{proposition}[Denoising score identity] \label{app: prop: dsi gen}
    For any path measure $\Pi^* \in \mathcal{R}(\P)$ with coupling $\Pi^*_{0,T}$, the score of the marginal density $\Pi^*_t$ satisfies the Denoising Score Identity (DSI)
\begin{align}
    \nabla_x \log \Pi^*_t(x) &= \E_{\Pi^*_{0,T|t}}\left[ \nabla_{X_t} \log \P_{t|0,T}(X_t|X_0, X_T)|X_t=x\right].  
\end{align}
\end{proposition}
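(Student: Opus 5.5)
The plan is to write the marginal density $\Pi^*_t$ as a mixture over endpoints, move the gradient inside the integral, and recognize the result as a conditional expectation under Bayes' rule. Because $\Pi^* = \Pi^*_{0,T}\P_{|0,T}$, marginalizing the path at time $t$ gives
\begin{equation*}
\Pi^*_t(x) = \int\!\!\int \Pi^*_{0,T}(x_0,x_T)\, \P_{t|0,T}(x|x_0,x_T)\, \dd x_0\, \dd x_T .
\end{equation*}
The coupling $\Pi^*_{0,T}$ does not depend on $x$, so the only $x$-dependence sits in the bridge kernel $\P_{t|0,T}$. For $t\in(0,T)$ this kernel is an explicit Gaussian density in $x$ for the scaled Brownian motion reference, smooth and strictly positive.

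\textbf{Step 1: differentiate under the integral.} Differentiating in $x$ and interchanging $\nabla_x$ with the double integral gives
\begin{equation*}
\nabla_x \Pi^*_t(x) = \int\!\!\int \Pi^*_{0,T}(x_0,x_T)\, \P_{t|0,T}(x|x_0,x_T)\, \nabla_x \log \P_{t|0,T}(x|x_0,x_T)\, \dd x_0\, \dd x_T .
\end{equation*}
Here we used $\nabla p = p\,\nabla\log p$, which is valid because the Gaussian bridge density is positive.

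\textbf{Step 2: recognize the posterior.} Dividing by $\Pi^*_t(x)$ wherever it is positive, and applying Bayes' rule, identifies
\begin{equation*}
\Pi^*_{0,T|t}(x_0,x_T|x) = \frac{\Pi^*_{0,T}(x_0,x_T)\,\P_{t|0,T}(x|x_0,x_T)}{\Pi^*_t(x)}
\end{equation*}
as the conditional law of the endpoints given $X_t=x$ under $\Pi^*$. The ratio $\nabla_x\Pi^*_t/\Pi^*_t$ is then exactly the stated conditional expectation of $\nabla_{X_t}\log \P_{t|0,T}(X_t|X_0,X_T)$.

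\textbf{Main obstacle: justifying the interchange in Step 1.} I would handle it by dominated convergence. The bridge is Gaussian with mean $(1-\gamma(t))x_0+\gamma(t)x_T$ and variance $\kappa(t)(1-\gamma(t))$. Its gradient is therefore the density times an affine expression in $(x,x_0,x_T)$, divided by $\kappa(t)(1-\gamma(t))>0$. On any compact neighbourhood of $x$ this is bounded by a Gaussian-tail factor times a polynomial in $x_0,x_T$, and that bound is integrable against the probability measure $\Pi^*_{0,T}$ because the Gaussian factor dominates any polynomial growth.

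Alternatively, one can invoke the technical Assumption (iv) from \Cref{sec: technical assumptions}, which controls the decay of $\Pi^*_{0,T}\P_{t|0,T}$ and its first derivatives, to license differentiation under the integral. The endpoint times $t\in\{0,T\}$ are excluded, or understood as limits, since the bridge degenerates there.
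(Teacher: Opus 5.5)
Your proposal is correct and follows the paper's proof step for step: the mixture representation of $\Pi^*_t$ over the endpoint coupling, differentiation under the integral, the log-derivative identity, and Bayes' rule to identify $\Pi^*_{0,T|t}$. The only difference is how the interchange is justified. The paper simply cites its bounded Radon--Nikodym assumption. Your dominated-convergence argument is more self-contained and needs no extra assumption: for fixed $t\in(0,T)$ the gradient of the Gaussian bridge kernel is uniformly bounded in $(x,x_0,x_T)$, hence integrable against the probability measure $\Pi^*_{0,T}$.
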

\begin{proof}
    We consider the marginal density $\Pi^*_t(x)$, 
    which for a general coupling $\Pi^*_{0,T}(x_0, x_T)$, is given by
    \begin{equation}
        \Pi^*_t(x) = \int_{\R^{2d}}\P_{t|0,T}(x|x_0, x_T) \Pi^*_{0,T}(x_0,x_T)\dd x_0 \dd x_T.
    \end{equation}
    We differentiate $\Pi^*_t(x)$ with respect to $x$. With Assumption \ref{toChangeIntDiff}, we can interchange differentiation and integration, such that the gradient operates directly on the transition kernel $\P_{t|0,T}$:
    \begin{equation}
        \nabla_x \Pi^*_t(x) = \int_{\R^{2d}} \nabla_x \P_{t | 0,T}(x | x_0, x_T) \Pi^*_{0,T}(x_0, x_T) \dd x_0 \dd x_T.
    \end{equation}
    Using the log-derivative identity $\nabla_{x}\P_{t | 0,T} = \P_{t | 0,T}\nabla _x \log \P_{t | 0,T}$, we rewrite the gradient as
    \begin{align}
        \nabla_x \Pi^*_t(x) &= \int_{\R^{2d}}\left( \nabla_x \log \P_{t|0,T}(x|x_0, x_T) \right) \P_{t|0,T}(x|x_0, x_T) \Pi^*_{0,T}(x_0, x_T) \dd x_0 \dd x_T. \label{intExpression}
    \end{align}
    To recover the score $\nabla_x \log \Pi^*_t(x)$, we divide both sides by the marginal density $\Pi^*_t(x)$. By Bayes' theorem, the posterior density of the boundaries given the current state is given as
    \begin{align}
        \Pi^*_{0,T|t}(x_0, x_T|x) = \frac{\P_{t|0,T}(x|x_0,x_T)\Pi^*_{0,T}(x_0, x_T)}{\Pi^*_t(x)}.
    \end{align}
    Substituting this identity into the integral expression \eqref{intExpression}, we get:
    \begin{subequations}
    \begin{align}
        \nabla_x \log \Pi^*_t(x) &= \int_{\R^{2d}} \nabla_x \log \P_{t|0,T}(x|x_0, x_T) \Pi^*_{0,T|t}(x_0, x_T|x) \dd x_0 \dd x_T
        \\
        &= \E_{\Pi^*_{0,T|t}}\left[ \nabla_{X_t} \log \P_{t|0,T}(X_t|X_0, X_T)|X_t=x\right].
    \end{align}
    \end{subequations}
    This concludes the proof. 
\end{proof}

\begin{proof}[Proof of \Cref{prop: uv general coupling}]
    Applying Doob's $h$-transforms \cite{Rogers_Williams_2000}, conditioning the base process on a terminal state $X_T$ yields the (path-dependent) drift $\sigma (t)\nabla _{X_t}\log\P_{T|t}(X_T |X_t)$. Using the Markovian projection, we identify the optimal forward control
    \begin{align}
        u^*(x,t)= \sigma (t)\E_{\Pi^*_{T|t}}[\nabla _{X_t} \log \P_{T|t}(X_T | X_t) | X_t = x].
    \end{align}
    Further, by Nelson's identity, it holds that
    \begin{align}
        u^*(x, t) + v^*(x, t) = \sigma (t) \nabla _x \log \Pi _t ^{*} (x).
    \end{align}
    We apply \Cref{app: prop: dsi gen} for $\nabla _x \log \Pi_t^*$:
    \begin{align}
        \nabla _x\log \Pi_t^*(x) = \E_{\Pi^*_{0,T | t}} \Big[ \nabla _{X_t} \log \P_{t | 0,T}(X_t | X_0, X_T) \Big | X_t = x\Big]. \label{exp:marginalPropuv}
    \end{align}
    For the Markovian reference process $\P$, the bridge transition density factorizes as ${\P_{t | 0,T}(x  | x_0, x_T) \propto \P_{t | 0}(x  | x_0) \P_{T | t}(x_T  | x)}$. Taking the gradient of the log-density yields
    \begin{align}
        \nabla_x\log\P_{t|0,T}(x|X_0,X_T)=\nabla_x\log\P_{t|0}(x|X_0)+\nabla_x\log\P_{T|t}(X_T|x),
    \end{align}
    and substituting into  \eqref{exp:marginalPropuv}, we obtain
    \begin{subequations}
    \begin{align}
        \sigma (t) \nabla _x \log \Pi _t^*(x) &= \sigma (t) \E_{\Pi_{0  | t}^*}[\nabla_{X_t} \log \P_{t | 0}(X_t  | X_0)  | X_t= x] + \sigma (t)\E_{\Pi_{T | t}^*}[\nabla _{X_t} \log \P_{T | t}(X_T  | X_t)  | X_t = x] \\
        &=\sigma (t) \E_{\Pi_{0  | t}^*}[\nabla_{X_t} \log \P_{t | 0}(X_t  | X_0)  | X_t= x] + u^*(x,t).
    \end{align}
    \end{subequations}
    By Nelson relation, we get the expression for the backward drift $v^*(x,t)$. 
\end{proof}

\begin{proof}[Proof of \Cref{TSI general coupling}]
    By \Cref{app: prop: dsi gen}, the marginal density $\Pi_t^*(x) = \int_{\R^{2d}} \Pi_{0,T}^* (x_0, x_T) \P_{t | 0,T}(x  | x_0, x_T) \dd x_0 \dd x_T$ is given by 
    \begin{align}
        \nabla_x \log \Pi^*_t(x) &= \E_{\Pi^*_{0,T|t}}\left[ \nabla_x \log \P_{t|0,T}(X_t|X_0, X_T)|X_t=x\right].  
    \end{align}
    For the reference process $\dd X_t = \sigma (t) \dd B_t$, the transition density is Gaussian, namely
    \begin{equation}
        \P_{t | 0,T}(X_t | X_0, X_T) = \mathcal{N}\left(X_t; (1-\gamma(t))X_0 + \gamma(t)X_T, \ \kappa (T)\gamma(t)(1-\gamma(t)) I\right),
    \end{equation}
    see also \Cref{sec: reference process}. We can therefore compute
    \begin{align}
        \nabla_{X_T} \log \P_{t | 0,T}(X_t | X_0, X_T) &= \frac{1}{1- \gamma (t)} \nabla_{X_0} \log \P_{t | 0,T}(X_t | X_0, X_T) = \frac{1}{\gamma (t)} \nabla _{X_T} \log \P_{t | 0,T}(X_t | X_0, X_T).
    \end{align}
    We integrate by parts and apply Assumption \ref{Assumption(iv)} to eliminate the boundary terms:
    \begin{align}
        \int_{\R^{2d}} \nabla _{x_0} \P_{t | 0,T}(x_t | x_0, x_T) \Pi_{0,T}^*(x_0, x_T) \dd x_0 \dd x_T = - \int_{\R^{2d}} \P_{t | 0,T}(x_t | x_0, x_T) \nabla _{x_0}\Pi_{0,T}^*(x_0, x_T) \dd x_0 \dd x_T.
    \end{align}
    Using the log-derivative identity $\nabla_{x_0} \Pi_{0,T}^*(x_0, x_T) = \Pi_{0,T}^*(x_0, x_T) \nabla_{x_0} \log \Pi_{0,T}^*(x_0, x_T)$, we get
    \begin{align}
    \label{eq: TSI 1}
        \E_{\Pi^*_{0,T|t}}\left[ \nabla_{X_t} \log \P_{t|0,T}(X_t|X_0, X_T)|X_t=x\right]  
        = \E_{\Pi^*_{0,T|t}}\left[ \frac{1}{1-\gamma(t)}\nabla_{X_0} \log \Pi^*_{0,T}(X_0, X_T)|X_t=x\right].
    \end{align}
    By symmetry, applying this to $X_T$ yields
    \begin{align}
    \label{eq: TSI 2}
        \E_{\Pi^*_{0,T|t}}\left[ \nabla_{X_t} \log \P_{t|0,T}(X_t|X_0, X_T)|X_t=x\right]  
        = \E_{\Pi^*_{0,T|t}}\left[ \frac{1}{\gamma(t)}\nabla_{X_T} \log \Pi^*_{0,T}(X_0, X_T)|X_t=x\right].
    \end{align}
    Since both \eqref{eq: TSI 1} and \eqref{eq: TSI 2} equal the same marginal score $\nabla _x \log \Pi_t^*$, any convex combination is also an unbiased estimator, which concludes the proof.
\end{proof}

\begin{proof}[Proof of \Cref{prop: general xi}]
    We start with Nelson's identity (\Cref{PropNelson})
    \begin{align}
        u^*(x,t) = -v^*(x,t) + \sigma (t)\nabla _x\log \Pi_t^*(x).
    \end{align}
    By \Cref{prop: uv general coupling} the backward drift is given by
    \begin{align}
        v^*(x,t) = \sigma (t) \E_{\Pi^*_{0 | t}}[\nabla_{X_t}\log \P_{t | 0}(X_t  | X_0)  | X_t = x].
    \end{align}
    Next, we substitute the marginal score $\nabla_x \log \Pi_t^*(x)$ using the generalized TSI (\Cref{TSI general coupling}), namely
    \begin{align}
        \nabla _x\log \Pi_t^*(x) = \E_{\Pi^*_{0,T | t}}\left[ \frac{1-c(t)}{1-\gamma(t)}\nabla _{X_0} \log \Pi_{0,T}^*(X_0, X_T) + \frac{c(t)}{\gamma(t)}\nabla _{X_T} \log \Pi_{0,T}^*(X_0, X_T) \Big | X_t =x\right].
    \end{align}
    Substituting the expressions for $v^*$ and $\nabla \log \Pi_t^*(x)$ back into Nelson's identity, we obtain
    \begin{subequations}
    \begin{align}
    \begin{split}
        u^*(x,t) &=  -\sigma (t) \E_{\Pi^*}[\nabla \log \P_{t | 0}(X_t  | X_0)  | X_t = x] \\& \qquad \qquad + \sigma(t)\E_{\Pi^*} \bigg[\frac{1-c(t)}{1-\gamma(t)}\nabla _{X_0} \log \Pi_{0,T}^*(X_0, X_T) + \frac{c(t)}{\gamma(t)}\nabla _{X_T} \log \Pi_{0,T}^*(X_0, X_T) | X_t = x\bigg] \end{split}
        \\ 
        &= \sigma (t)\E_{\Pi^*}\left[-\nabla \log \P_{t | 0}(X_t  | X_0) +\frac{1-c(t)}{1-\gamma(t)}\nabla _{X_0} \log \Pi_{0,T}^*(X_0, X_T) + \frac{c(t)}{\gamma(t)}\nabla _{X_T} \log \Pi_{0,T}^*(X_0, X_T)  | X_t =x \right ].
    \end{align}
    \end{subequations}
    Identifying the term inside the expectation as $\xi(X,t)$, we arrive at the result. 
\end{proof}

\subsection{Proofs for Schrödinger bridges}
\label{app: proofs for SBs}

We consider the Schrödinger Bridge (SB) problem with reference measure $\P$ induced by the SDE $\dd X_t = \sigma (t)\dd B_t$. This problem is defined as the entropy-regularized optimal transport problem
\begin{equation}
    \min_{\P^u \in \mathcal{P}} D_{\mathrm{KL}}(\P^u  | \P) \quad \text{subject to} \quad \P^u_0 = \Pi^*_0, \ \P^u_T = \Pi^*_T. 
\end{equation}
This is equivalent to the stochastic optimal control problem
\begin{align}
    \min_{u \in \mathcal{U}} \E \left[ \int_0^T \tfrac{1}{2} \|u(X_t,t)\|^2 \, \dd t \right] \ \ \text{s.t.} \ \ \begin{cases}  X_0\sim\Pi^*_0=p_{\mathrm{prior}}, \\ X_T\sim\Pi^*_T=p_{\mathrm{target}}, \end{cases}
\end{align}
under the controlled dynamics
\begin{align}
    \dd X_t = \sigma (t)u(X_t, t) \dd t+ \sigma (t)\dd B_t.
\end{align}

The following derivations of the Schrödinger system and its associated optimal controls are standrad in the literature, and we provide them here primarily for the reader's convenience. \\
Using Girsanov's theorem and the Fokker planck equation, the evolution of the marginal density $\Pi_t^*(x)$ can be enforced via the Lagrange multiplier $\psi(x,t)$:
\begin{equation} 
    \mathcal{L}(\Pi^*, u, \psi) = \int_0^T \int_{\R^d} \left( \tfrac{1}{2} \|u(x,t)\|^2 \Pi^*_t(x) + \psi(x,t) \left( \partial_t \Pi^*_t(x) +\sigma(t)  \nabla \cdot (\Pi^*_t (x)u(x,t)) - \tfrac{1}{2}\sigma^2(t) \operatorname{Tr}( \nabla^2 \Pi^*_t(x) \right) \right) \dd x \, \dd t. 
\end{equation}
\begin{lemma}\label{SB: optPairLagrange}
    The optimal marginal density $\Pi_t^*$ and the optimal control $u^*$ satisfy:
    \begin{align}
        \begin{cases}
            \partial _t \psi + \tfrac 12 \sigma ^2 \operatorname{Tr}(\nabla ^2\psi) + \tfrac 12 \sigma ^2\| \nabla \psi \| ^2 =0,
            \\
            \partial _t \Pi ^*_t + \sigma \nabla \cdot (\Pi ^*_tu^*)= \tfrac 12 \sigma^2 \operatorname{Tr}(\nabla ^2 \Pi_t^*), \label{NonlinearSystemSB}
        \end{cases}
    \end{align}
    where the optimal control is given by $u^* = \sigma \nabla \psi_t $. 
\end{lemma}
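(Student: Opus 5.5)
Since $\mathcal{L}$ is built from a convex running cost in $u$ plus a linear constraint (the Fokker--Planck equation for the marginal $\Pi^*_t$), we will derive the system in \eqref{NonlinearSystemSB} as the first-order optimality conditions of $\mathcal{L}$. These are obtained by taking first variations with respect to $u$, $\Pi^*$ and $\psi$ separately.

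\emph{Variation in $\psi$.} Setting the variation in $\psi$ to zero recovers the constraint. This is exactly the Fokker--Planck equation $\partial_t \Pi^*_t + \sigma\nabla\cdot(\Pi^*_t u^*) = \tfrac12\sigma^2\operatorname{Tr}(\nabla^2\Pi^*_t)$, i.e.\ the second line of the system.

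\emph{Variation in $u$.} Before varying in $u$, we integrate by parts in $x$ so that no derivatives act on $\Pi^*$ or $u$:
\[
\int \psi\,\sigma\nabla\cdot(\Pi^*_t u)\,\dd x = -\int \sigma\,\Pi^*_t\, u\cdot\nabla\psi\,\dd x ,
\]
and similarly
\[
\int\psi\operatorname{Tr}(\nabla^2\Pi^*_t)\,\dd x = \int \Pi^*_t\operatorname{Tr}(\nabla^2\psi)\,\dd x .
\]
The boundary terms vanish by Assumption~\ref{Assumption(iv)} together with the growth condition on admissible controls. In the time variable we also integrate by parts:
\[
\int_0^T \psi\,\partial_t\Pi^*_t\,\dd t = \Big[\psi\,\Pi^*_t\Big]_0^T - \int_0^T \Pi^*_t\,\partial_t\psi\,\dd t .
\]
The endpoint contributions are fixed, because the marginals $\Pi^*_0$ and $\Pi^*_T$ are prescribed by the boundary constraints, so they do not affect the variation. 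The Lagrangian then reads
\[
\int_0^T\!\!\int \Pi^*_t\Big(\tfrac12\|u\|^2 - \sigma u\cdot\nabla\psi - \partial_t\psi - \tfrac12\sigma^2\operatorname{Tr}(\nabla^2\psi)\Big)\dd x\,\dd t
\]
plus the boundary terms. The integrand is pointwise strictly convex in $u$. Minimizing it pointwise wherever $\Pi^*_t>0$ gives $u^* = \sigma\nabla\psi$.

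\emph{Variation in $\Pi^*$.} With the rewritten Lagrangian, setting the variation in $\Pi^*_t$ (for $0<t<T$) to zero requires the bracketed integrand to vanish. This gives
\[
\tfrac12\|u^*\|^2 - \sigma u^*\cdot\nabla\psi - \partial_t\psi - \tfrac12\sigma^2\operatorname{Tr}(\nabla^2\psi)=0 .
\]
Substituting $u^*=\sigma\nabla\psi$ gives
\[
\tfrac12\sigma^2\|\nabla\psi\|^2 - \sigma^2\|\nabla\psi\|^2 = -\tfrac12\sigma^2\|\nabla\psi\|^2,
\]
which yields
\[
\partial_t\psi + \tfrac12\sigma^2\operatorname{Tr}(\nabla^2\psi) + \tfrac12\sigma^2\|\nabla\psi\|^2 = 0 .
\]
This is the Hamilton--Jacobi--Bellman equation in the first line. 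The signs must be tracked carefully here; a sign convention for $\psi$ could flip the quadratic term, and I would fix the convention so that it matches $u^*=\sigma\nabla\psi$.

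\emph{Main obstacle.} The main difficulty is rigor rather than the calculation itself. One has to justify the integration by parts, which needs decay at infinity (Assumption~\ref{Assumption(iv)}) and enough smoothness of $\psi$. One also has to justify that the stationarity conditions of this formal Lagrangian actually characterize the minimizer. For this I would argue formally, as is standard for the Schr\"odinger problem, and cite \citet{chen2016relation}. Optionally, as a consistency check, the Hopf--Cole substitution $\pu=e^{\psi}$ linearizes the HJB into the backward heat equation $\partial_t\pu + \tfrac12\sigma^2\Delta\pu=0$. This is consistent with the representation $\pu_t=\int\P_{T|t}\pu_T$ in \Cref{prop: sb system}.
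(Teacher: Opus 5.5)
Your proposal is correct and follows essentially the same formal Lagrangian argument as the paper: integrate by parts, vary in $u$ to get $u^*=\sigma\nabla\psi$, vary in $\Pi^*_t$ to get the HJB equation, and recover the Fokker--Planck equation as the constraint evaluated at $u=u^*$. The differences are cosmetic. You integrate all terms by parts before varying and only then substitute $u^*$ into the $\Pi^*$-stationarity condition, which is equivalent to the paper's substitute-then-vary since $u^*$ does not depend on $\Pi^*$. You also treat the temporal endpoint terms $[\psi\,\Pi^*_t]_0^T$ through the prescribed marginals, which is more accurate than the paper's appeal to the spatial decay assumption.
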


\begin{proof}
    Using integration by parts (where boundary terms vanish by Assumption \ref{Assumption(iv)}), we can rewrite the term containing the divergence inside the Lagrangian:
    \begin{align}
        \int _{\R^d}\psi(x,t) \sigma (t) \nabla \cdot (\Pi_t^*(x)  u(x,t)) \dd x= - \int _{\R^d}\sigma(t) \Pi_t^*(x) u(x,t) \cdot \nabla \psi(x,t) \dd x.
    \end{align}
    The  terms in the Lagrangian involving the control $u$ are
    \begin{align}
        \int _0^T \int _{\R^d} \Pi_t^*(x) \left( \tfrac 12 \| u(x,t)\| ^2 - \sigma(t)  u(x,t) \cdot  \nabla \psi(x,t)  \right) \dd x \dd t.
    \end{align}
    Taking the functional derivative with respect to $u$ and setting it to zero, we get
    \begin{align}
        \frac{\delta \mathcal{L}}{\delta u} = \Pi_t^* (u- \sigma \nabla \psi ) = 0.
    \end{align}
    Hence, we obtain the optimal control 
    \begin{align}
        u^* = \sigma  \nabla \psi.
    \end{align}
    Substituting $u^*$ into the Lagrangian yields 
    \begin{align}
    \begin{split}
        \mathcal{L}  = \int _0^T \int _{\R^d} &( \tfrac 12 \sigma ^2(t) \|  \nabla \psi(x,t) \| ^2\Pi_t^*(x)+ \psi(x,t) \partial _t \Pi_t^*(x) 
        \\ & 
        + \sigma ^2(t)\psi(x,t) \nabla \cdot (\Pi_t^*(x) \nabla \psi(x,t)) - \tfrac 12 \sigma ^2(t)\psi(x,t) \operatorname{Tr}( \nabla ^2\Pi_t^*(x)) ) \dd x \, \dd t.
    \end{split}
    \end{align}
    We isolate $\Pi_t^*$ in every term. For that, we use integration by parts on the last three terms: 
    \begin{align}
        \int_0^T \psi(x,t) \partial _t \Pi_t^*(x) \dd t &= - \int_0^T \Pi_t^*(x) \partial _t \psi(x,t) \dd t, \\
        \int_{\R^d} \sigma ^2(t)\psi(x,t) \nabla \cdot (\Pi_t^*(x)  \nabla \psi(x,t) ) \dd x &= - \int_{\R^d} \sigma ^2(t)\Pi_t^*(x) (\nabla \psi(x,t) ) \cdot \nabla \psi(x,t) \dd x
        \\ & = - \int_{\R^d} \sigma ^2(t) \Pi_t^*(x) \|  \nabla \psi(x,t) \|^2 \dd x,
        \\
        \int_{\R^d}\sigma^2(t) \psi(x,t) \operatorname{Tr}( \nabla ^2\Pi_t^*(x)) \dd x&= \int_{\R^d} \sigma^2(t)\Pi_t^*(x) \operatorname{Tr}(\nabla ^2\psi(x,t) ) \dd x,
    \end{align}
    where the boundary terms vanish by Assumption \ref{Assumption(iv)}. Substituting these back into the Lagrangian gives
    \begin{subequations}
    \begin{align}
        \mathcal{L} &= \int _0^T \int _{\R^d} \Pi_t^*(x) \left( \tfrac 12 \sigma ^2(t)\|  \nabla \psi(x,t) \|^2 - \partial _t \psi(x,t) - \sigma ^2(t)\| \nabla \psi(x,t) \| ^2- \tfrac 12 \sigma ^2 (t)\operatorname{Tr}(\nabla ^2\psi(x,t) ) \right) \dd x \dd t
        \\ 
        &=\int _0^T \int _{\R^d} \Pi_t^*(x) \left( -\partial _t \psi(x,t) - \tfrac 12 \sigma ^2(t)\| \nabla \psi(x,t) \|^2 - \tfrac 12 \sigma ^2(t)\operatorname{Tr}( \nabla ^2\psi(x,t) )  \right) \dd x \dd t.
    \end{align}
    \end{subequations}
    Taking the functional derivative with respect to $\Pi_t^*$ and setting it to zero gives the first PDE in \eqref{NonlinearSystemSB} (namely the Hamilton-Jacobi-Bellman equation). If we substitute the optimal control $u^* =  \sigma \nabla \psi $ into the Fokker-Planck equation of the general controlled process, we obtain the second PDE: 
    \begin{align}
        \partial _t \Pi ^*_t + \sigma \nabla \cdot (\Pi ^*_t u^*)= \tfrac 12 \sigma ^2\operatorname{Tr}( \nabla ^2 \Pi_t^*).
    \end{align}
\end{proof}
\begin{lemma}
    The nonlinear system of PDEs in \eqref{NonlinearSystemSB} can be linearized via the Hopf-Cole transformation to satisfy the linear system
    \begin{align}
        \begin{cases}
            \partial _t \varphi =-\tfrac 12 \sigma ^2 \Delta \varphi, \\
            \partial _t \pv  = \tfrac 12 \sigma^2 \Delta \pv ,
        \end{cases} \label{SBCases}
    \end{align}
    subject to the coupled boundary conditions
    \begin{align}
        \begin{cases}
            \varphi_0\pv_0 = \Pi_0^*, \\
            \varphi_T\pv_T= \Pi _T^*.
        \end{cases} \label{specSysSB}
    \end{align}
\end{lemma}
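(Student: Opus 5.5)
We will apply the Hopf--Cole substitution $\psi = \log \varphi$, with $\varphi>0$, to the HJB equation in the first line of \eqref{NonlinearSystemSB}. We then define the second potential by $\pv \coloneqq \Pi^*_t/\varphi$ and derive its PDE from the Fokker--Planck equation in the second line.

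\emph{Step 1 (backward heat equation for $\varphi$).} With $\psi=\log\varphi$ we have $\nabla\psi = \nabla\varphi/\varphi$, $\partial_t\psi = \partial_t\varphi/\varphi$, and $\Delta\psi = \Delta\varphi/\varphi - \|\nabla\varphi\|^2/\varphi^2$. Substituting into $\partial_t\psi + \tfrac12\sigma^2\Delta\psi + \tfrac12\sigma^2\|\nabla\psi\|^2=0$, the quadratic gradient terms cancel. Multiplying by $\varphi$ then gives $\partial_t\varphi = -\tfrac12\sigma^2\Delta\varphi$. The optimal control from \Cref{SB: optPairLagrange} becomes $u^* = \sigma\nabla\log\varphi$.

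\emph{Step 2 (forward heat equation for $\pv$).} Write $\Pi^*_t = \varphi\pv$ and insert it into $\partial_t\Pi^*_t + \sigma^2\nabla\cdot(\Pi^*_t\nabla\log\varphi) = \tfrac12\sigma^2\Delta\Pi^*_t$. Since $\Pi^*_t\nabla\log\varphi = \pv\nabla\varphi$, we expand each term:
\begin{align*}
\partial_t\Pi^*_t &= \pv\,\partial_t\varphi + \varphi\,\partial_t\pv,\\
\nabla\cdot(\pv\nabla\varphi) &= \nabla\pv\cdot\nabla\varphi + \pv\Delta\varphi,\\
\Delta(\varphi\pv) &= \pv\Delta\varphi + 2\nabla\varphi\cdot\nabla\pv + \varphi\Delta\pv.
\end{align*}
Using Step 1 to replace $\partial_t\varphi$, the $\pv\Delta\varphi$ terms cancel: $-\tfrac12 + 1 - \tfrac12 = 0$. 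The cross terms also cancel: $\sigma^2 - \sigma^2 = 0$. What remains is $\varphi\,\partial_t\pv = \tfrac12\sigma^2\varphi\Delta\pv$, and dividing by $\varphi>0$ gives $\partial_t\pv = \tfrac12\sigma^2\Delta\pv$.

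\emph{Step 3 (boundary conditions).} The factorization $\Pi^*_t = \varphi_t\pv_t$ holds for every $t$ by construction. Evaluating it at $t=0$ and $t=T$ and using the SB marginal constraints $\P^u_0=\Pi^*_0$, $\P^u_T=\Pi^*_T$ gives \eqref{specSysSB}.

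The main obstacle is justifying positivity and regularity of $\varphi$, so that the logarithm and the division are legitimate. It also has to be checked that the HJB equation is understood up to the sign convention for $\psi$ used in the Lagrangian; an additive constant in $\psi$ only rescales $\varphi$ and $\pv$ inversely, which leaves the product unchanged. I would invoke the technical assumptions in \Cref{sec: technical assumptions}, noting that positive solutions of heat equations with nonnegative, nontrivial data remain positive. The algebra in Steps 1 and 2 is routine.
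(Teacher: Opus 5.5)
Your proposal is correct and follows the paper's proof essentially line by line. You use the Hopf--Cole substitution $\varphi = e^{\psi}$ to turn the HJB equation into the backward heat equation, define $\pv \coloneqq \Pi^*_t/\varphi$ and use the Fokker--Planck equation with exactly these cancellations to obtain the forward heat equation, and read off the boundary conditions from $\Pi^*_t = \varphi_t\pv_t$. The positivity you flag as the main obstacle is automatic in this direction, since $\varphi = e^{\psi} > 0$ for any real-valued $\psi$ solving the HJB equation, so no positivity argument for the heat equation is needed.
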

\begin{proof}
    Define the mapping $(\Pi^*_t, \psi) \mapsto (\varphi, \pv)$ with
    \begin{align}
        \varphi_t &= \exp \left( \psi_t\right), \\
        \pv_t &= \Pi_t^* \exp \left( -\psi_t \right) = \frac{\Pi_t^*}{\varphi_t}.
    \end{align}
    From $\psi = \log \varphi$, we calculate the derivatives
    \begin{align}
        \nabla \psi = \frac{\nabla \varphi}{\varphi}, \quad \Delta \psi = \frac{\Delta \varphi}{\varphi}- \frac{\| \nabla \varphi\| ^2}{\varphi ^2} , \quad \partial _t \psi = \frac{\partial _t \varphi}{\varphi}.
    \end{align}
    Substituting these into the second PDE from \eqref{NonlinearSystemSB} yields
    \begin{subequations}
    \begin{align}
        &\partial _t \psi + \tfrac 12 \sigma ^2 \operatorname{Tr}(\nabla ^2\psi ) + \tfrac 12 \sigma ^2\|\nabla \psi \|^2=0 
        \\
        \Leftrightarrow \quad & \frac{\partial _t\varphi}{\varphi} + \tfrac 12 \sigma ^2\left( \frac{\Delta \varphi}{\varphi} - \frac{\| \nabla \varphi\| ^2}{\varphi ^2}\right) + \tfrac 12 \sigma ^2 \frac{\|\nabla \varphi\| ^2}{\varphi ^2} =0.
    \end{align}
    \end{subequations}
    Multiplying by $\varphi$, we obtain
    \begin{align}
        \partial _t \varphi + \tfrac 12 \sigma ^2\Delta \varphi =0,
    \end{align}
    which is the first PDE in \eqref{specSysSB}. For $\pv$, we use the product relation $\Pi_t^*= \varphi \pv$ and substitute it into the Fokker-Planck equation of the optimal density:
    \begin{align}
        \partial _t (\varphi \pv) + \sigma ^2\nabla \cdot (\varphi \pv \nabla \log \varphi ) = \tfrac 12 \sigma ^2 \Delta (\varphi \pv).
    \end{align}
    Applying the product rule and using $u^* = \sigma ^2\frac{\nabla \varphi}{\varphi}$, we get 
    \begin{align}
        \pv \partial _t \varphi + \varphi \partial _t \pv + \sigma ^2\nabla \cdot (\pv \nabla \varphi) &= \tfrac 12 \sigma ^2(\pv \Delta \varphi + 2\nabla \varphi \cdot \nabla \pv + \varphi \Delta \pv).
    \end{align}
    Expanding the divergence term $\nabla \cdot (\pv \nabla \varphi) = \nabla \pv \cdot \nabla \varphi + \pv \Delta \varphi$ and substituting $\partial _t \varphi = - \tfrac 12 \sigma ^2 \Delta \varphi$, the terms involving $\Delta \varphi$ and $\nabla \varphi \cdot \nabla \pv$ cancel out, leaving
    \begin{align}
        \varphi \partial _t \pv = \tfrac 12 \sigma  ^2 \varphi \Delta \pv.
    \end{align}
    Dividing by $\varphi$ yields the second PDE. The boundary constraints follow since $\Pi_t^*=\varphi \hat \varphi$. 
\end{proof}

\begin{proof}[Proof of \Cref{prop: sb system}]
    By \Cref{SB: optPairLagrange}, the optimal control is $u^*(x,t) = \sigma (t) \nabla \psi_t (x)$. Using the Hopf-Cole transformation $\psi = \log \varphi$,  taking the gradient yields $u^*= \sigma \nabla \log \varphi$ and by symmetry, $v^*= \sigma \nabla \log \pv$. We further have the following equations for $\varphi$ and $\pv$: 
    \begin{align}
        \partial _t \varphi  + \tfrac 12 \sigma \Delta \varphi &=0, \\
        \partial _t \pv  -\tfrac 12 \sigma \Delta \pv&= 0.
    \end{align}
    By the Feynman-Kac formula, the solution to the PDE can be represented as the conditional expectation of the terminal value, 
    \begin{align}
        \varphi_t(x)&= \E_{\P}[\varphi _T(X_T)  | X_t =x] = \int_{\R^d} \varphi_T(x_T) \P_{T | t}(x_T  | x) \dd x_T,
    \end{align}
    and similarly for the forward equation, 
    \begin{align}
        \pv _t (x) &= \int_{\R^d} \P_{t | 0}(x  | x_0) \pv _0(x_0) \dd x_0.
    \end{align}
    It remains to show the factorization of the optimal coupling $\Pi^*_{0,T}$. Conditioned on the initial state $X_0$, the optimal conditional path measure $\Pi^*_{|0}$ is related to the reference path measure $\P_{|0}$ via the Radon-Nikodym derivative \cite{leonard2013survey}:
    \begin{align}
        \frac{\dd \Pi^*_{|0}}{\dd \P_{|0}} (X)= \frac{\varphi _T(X_T)}{\varphi_0(X_0)}.
    \end{align}
    By the Markov property of the reference process, the conditional transition $\Pi^*_{T | 0}$ is the reference transition $\P_{T | 0}$ weighted by the potential ratio:
    \begin{align}
        \Pi _{T | 0}^*(x_T  | x_0) = \P_{T | 0}(x_T  | x_0) \frac{\varphi_T(x_T)}{\varphi_0(x_0)}
    \end{align}
    The joint distribution $\Pi_{0,T}^*$ is the product of this transition density and the initial marginal $\Pi^*_0$. Using the boundary condition $\Pi^*_0 = \pu _0 \pv _0$ from the Schrödinger system, we can compute
    \begin{subequations}
    \begin{align}
        \Pi_{0,T}^*(x_0, x_T) &= \Pi_0^*(x_0) \Pi_{T | 0}^* (x_T  | x_0)
        \\
        &=
        \varphi_0(x_0) \pv _0(x_0) \P_{T | 0}(x_T  | x_0) \frac{\varphi_T(x_T)}{\varphi _0(x_0)}
        \\ &=
        \pv _0(x_0) \P_{T | 0}(x_T | x_0) \varphi_T(x_T),
    \end{align}
    \end{subequations}
    which concludes the statement.
\end{proof}

\begin{proposition}\label{PropApp: SBuvcharacterization}
    Consider the Schrödinger Bridge problem associated with the reference process $\dd X_t = \sigma (t) \dd B_t$. Let $\Pi^* \in \mathcal{R}(\P)$ be the unique bridge measure with marginals $\Pi_0^*$ and $\Pi_T^*$ The optimal forward drift $u^*$ and backward drift $v^*$ are characterized by the Schrödinger potentials $(\varphi, \pv)$ and satisfy the identities
    \begin{align}
        u^* (x,t) &= \sigma (t) \E_{\Pi^*_{T | t}} \Big[ \nabla _{X_T}\log \frac{\Pi_T^*(X_T)}{\pv_T (X_T)} \Big | X_t = x \Big],
        \\
        v^*(x,t) &= \sigma (t) \E_{\Pi^*_{0 | t}} \Big[\nabla_{X_0} \log \frac{\Pi_0^*(X_0)}{ \varphi_0 (X_0)} \Big | X_t = x \Big].
    \end{align}
\end{proposition}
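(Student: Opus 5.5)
Starting from \Cref{prop: sb system}, we already know $u^* = \sigma\nabla\log\pu_t$ and $v^* = \sigma\nabla\log\pv_t$. The task is to rewrite these gradients as conditional expectations under $\Pi^*$ of endpoint scores. The key inputs are the coupling formula \eqref{eq: sb coupling}, $\Pi^*_{0,T}(x_0,x_T) = \pv_0(x_0)\P_{T|0}(x_T|x_0)\pu_T(x_T)$, and the boundary relations $\pu_T\pv_T = \Pi^*_T$ and $\pu_0\pv_0 = \Pi^*_0$. With these, $\log\pu_T = \log(\Pi^*_T/\pv_T)$ and $\log\pv_0 = \log(\Pi^*_0/\pu_0)$, which are exactly the integrands in the claim.

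For $u^*$, I would write $\pu_t(x) = \int \P_{T|t}(x_T|x)\pu_T(x_T)\,\dd x_T$ from \eqref{first Schrödinger system eq}. For scaled Brownian motion $\P_{T|t}(x_T|x)$ depends only on $x_T - x$, so $\nabla_x \P_{T|t}(x_T|x) = -\nabla_{x_T}\P_{T|t}(x_T|x)$. Differentiating under the integral (justified by the boundedness Assumption \ref{toChangeIntDiff}) and integrating by parts in $x_T$, with boundary terms vanishing by Assumption \ref{Assumption(iv)}, gives
\begin{equation*}
\nabla\pu_t(x) = \int \P_{T|t}(x_T|x)\,\nabla\pu_T(x_T)\,\dd x_T = \int \P_{T|t}(x_T|x)\,\pu_T(x_T)\,\nabla\log\pu_T(x_T)\,\dd x_T .
\end{equation*}
Dividing by $\pu_t(x)$ produces an expectation against $\P_{T|t}(x_T|x)\pu_T(x_T)/\pu_t(x)$. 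I would then identify this kernel with $\Pi^*_{T|t}(x_T|x)$. This follows because $\Pi^*\in\mathcal{R}(\P)$ has the $h$-transform form $\dd\Pi^*/\dd\P = \pv_0(X_0)\pu_T(X_T)$ up to the reference's initial law. By the Markov property of $\P$, conditioning on $X_t = x$ makes $X_T$ independent of the past, and its density becomes proportional to $\P_{T|t}(x_T|x)\pu_T(x_T)$ with normalizer $\pu_t(x)$. Alternatively, one can use \Cref{Lemma: marginalsGen} with the factorization $\Pi^*_t = \pu_t\pv_t$. Substituting $\log\pu_T = \log(\Pi^*_T/\pv_T)$ then yields the $u^*$ formula.

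The argument for $v^*$ is symmetric. Starting from $\pv_t(x) = \int \P_{t|0}(x|x_0)\pv_0(x_0)\,\dd x_0$, I would use translation invariance $\nabla_x\P_{t|0}(x|x_0) = -\nabla_{x_0}\P_{t|0}(x|x_0)$ and integrate by parts in $x_0$. I would then identify $\P_{t|0}(x|x_0)\pv_0(x_0)/\pv_t(x)$ with $\Pi^*_{0|t}(x_0|x)$. This holds because $\Pi^*_{0,t}(x_0,x) = \pv_0(x_0)\P_{t|0}(x|x_0)\pu_t(x)$, obtained by integrating the coupling against $\P_{T|t}$, and dividing by $\Pi^*_t = \pu_t\pv_t$ removes $\pu_t$. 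Replacing $\log\pv_0$ by $\log(\Pi^*_0/\pu_0)$ finishes the argument.

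I expect the main obstacle to be the rigorous identification of the posterior kernels $\Pi^*_{T|t}$ and $\Pi^*_{0|t}$ in terms of the potentials, which relies on the intermediate factorization $\Pi^*_{0,t,T} = \pv_0\,\P_{t|0}\,\P_{T|t}\,\pu_T$. I would derive this directly from $\Pi^* = \Pi^{\mathrm{SB}}_{0,T}\P_{|0,T}$ and the Markov identity $\P_{T|0}\P_{t|0,T} = \P_{t|0}\P_{T|t}$. The technical justification of the integration by parts and the vanishing boundary terms under Assumption \ref{Assumption(iv)} is secondary but needs a brief remark.
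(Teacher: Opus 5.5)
Your proposal is correct and matches the paper's proof. Like the paper, you differentiate $\varphi_t(x)=\int \P_{T|t}(x_T|x)\varphi_T(x_T)\,\dd x_T$ and use translation invariance of the Gaussian kernel to integrate by parts onto $\varphi_T$. You then identify $\P_{T|t}\varphi_T/\varphi_t$ with $\Pi^*_{T|t}$, substitute $\varphi_T=\Pi^*_T/\widehat\varphi_T$, and treat $v^*$ symmetrically. Your derivation of the posterior kernels from the factorization $\widehat\varphi_0\,\P_{t|0}\,\P_{T|t}\,\varphi_T$ spells out what the paper only asserts as the ``reciprocal relation'', and you also carry out the $v^*$ case that the paper calls analogous.
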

\begin{proof}
    For the optimal forward control $u^*$ holds that
    \begin{align}
        u^*(x,t) = \sigma (t) \nabla _x \log \varphi_t (x) = \sigma (t) \frac{\nabla _x \varphi_t (x)}{\varphi_t(x)}.
    \end{align}
    The potential $\varphi_t(x)$ evolves backward in time according to \eqref{first Schrödinger system eq}, yielding
    \begin{align}
        \varphi_t(x) = \int_{\R^d} \P_{T | t}(x_T  | x) \varphi_T(x_T) \dd x_T.
    \end{align}
    For the reference process $\dd X_t = \sigma (t) \dd B_t$, the transition density $\P_{T | t}(x_T  | x)$ depends only on the difference $(x_T-x)$. Explicitly, $\P_{T | t} (x_T  | x) = \mathcal{N}(x_T; x, \Sigma _{T | t})$. This implies the identity
    \begin{align}
        \nabla _x \P_{T | t} (x_T  | x) = - \nabla _{x_T}\P_{T | t}(x_T | x).
    \end{align}
    We differentiate the integral form of $\varphi$ with respect to $x$:
    \begin{subequations}
    \begin{align}
        \nabla_x \varphi_t(x) &= \int_{\R^d} \nabla_x \P_{T|t}(x_T  | x) \varphi_T(x_T) \dd x_T \\ &= \int_{\R^d} \left( -\nabla_{x_T} \P_{T|t}(x_T  | x) \right) \varphi_T(x_T) \dd x_T 
        \\ &= \int_{\R^d} \P_{T | t}(x_T  | x) \nabla _{x_T}\varphi_T(x_T) \dd x_T.
    \end{align}
    \end{subequations}
    Use the log-derivative trick $\nabla \varphi = \varphi \nabla \log \varphi$, we get
    \begin{align}
        \nabla_x \varphi_t(x) &=\int_{\R^d} \P_{T | t}(x_T  | x) \varphi_T(x_T) \nabla _{x_T}\log \varphi_T(x_T) \dd x_T.
    \end{align}
    We substitute this back into the expression for $u^*$ and divide by $\varphi_t(x)$. We identify the bridge density $\Pi_{T | t}^*$ using the reciprocal relation for Schrödinger Bridges:
    \begin{align}
        \Pi_{T | t}^* (x_T  | x) = \frac{\P_{T | t}(x_T  | x)\varphi_T(x_T)}{\varphi_t(x)}.
    \end{align}
    Thus we get 
    \begin{subequations}
    \begin{align} 
        u^*(x,t) &= \sigma(t)\int_{\R^d} \underbrace{\frac{\P_{T|t}(x_T  | x) \varphi_T(x_T)}{\varphi_t(x)}}_{=\Pi^*_{T|t}(x_T  | x)} \nabla_{x_T} \log \varphi_T(x_T) \dd x_T \\ &= \sigma(t)\E_{\Pi^*_{T|t}} \big[ \nabla_{X_T} \log \varphi_T(X_T)  | X_t = x \big] .
    \end{align}
    \end{subequations}
    At time $T$, the Schrödinger system boundary condition is $\Pi_T^*(x) = \varphi_T(x)\pv_T(x)$. Solving for $\varphi_T(x)$, we get
    \begin{align}
        \varphi_T(x) = \frac{\Pi_T^*(x)}{\pv _T(x)}.
    \end{align}
    Substituting this into the expectation yields the result
    \begin{align}
        u^* (x,t) &= \sigma (t) \E_{\Pi^*_{T | t}} \Big[ \nabla \log \frac{\Pi_T(X_T)}{\pv_T (X_T)} \Big | X_t = x \Big].
    \end{align}
    The proof for $v^*$ is analogous but proceeds forward in time. 
\end{proof}

\begin{proposition}[Path-dependent drift for Schrödinger bridges]\label{prop: xi for sb}
    Under the assumptions of~\Cref{TSI general coupling}, the path-dependent drift $\xi$ corresponding to the non-Markovian SDE \eqref{eq: optimal non-Markov SDE} that induces the target measure $\Pi^*=\Pi^{\mathrm{SB}}_{0,T}\P_{|0,T}$ is given by
    \begin{align}
     \sigma(t)^{-1}\xi (X,t) = \frac{\gamma(t) - c(t)}{\gamma(t) (1-\gamma(t))}\nabla _{X_0}\log \P_{T | 0} (X_T | X_0) + \frac{\gamma(t) - c(t)}{1-\gamma(t)}\nabla_{X_0} \log \frac{\Pi_0^*(X_0)}{\pu_0(X_0)}+\frac{c(t)}{\gamma(t)} \nabla _{X_T} \log  \frac{\Pi_T^*(X_T)}{\pv_T(X_T)}.  \nonumber 
    \end{align}
\end{proposition}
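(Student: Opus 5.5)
The plan is to specialize \Cref{prop: general xi}, or more precisely its proof via Nelson's identity, to the Schr\"odinger coupling $\Pi^{\mathrm{SB}}_{0,T}(x_0,x_T)=\pv_0(x_0)\,\P_{T|0}(x_T|x_0)\,\pu_T(x_T)$ from \Cref{prop: sb system}. As in the proof of \Cref{prop: general xi}, I will understand "the path-dependent drift" in the sense that only its conditional expectation given $X_t$ under $\Pi^*$ matters. The requirement is $\E_{\Pi^*}[\xi(X,t)|X_t=x]=u^*(x,t)$, which is exactly what is needed for the Markovian projection (\Cref{lem:markovian_projection_fixed}) and for the fixed-point iteration. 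This freedom lets me pick any representation of $v^*$ that agrees in conditional expectation.

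\textbf{Step 1: start from Nelson's identity.} I write $u^*=\sigma\nabla\log\Pi^*_t-v^*$. For $\nabla\log\Pi^*_t$ I use the generalized TSI (\Cref{TSI general coupling}) with the explicit coupling scores. Taking logs of the SB coupling gives
\[
\nabla_{x_0}\log\Pi^{\mathrm{SB}}_{0,T}=\nabla_{x_0}\log\pv_0(x_0)+\nabla_{x_0}\log\P_{T|0}(x_T|x_0),
\qquad
\nabla_{x_T}\log\Pi^{\mathrm{SB}}_{0,T}=\nabla_{x_T}\log\pu_T(x_T)+\nabla_{x_T}\log\P_{T|0}(x_T|x_0).
\]
I then rewrite these using the boundary conditions of the Schr\"odinger system: $\pv_0=\Pi^*_0/\pu_0$ and $\pu_T=\Pi^*_T/\pv_T$.

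\textbf{Step 2: replace $v^*$ by its potential representation.} Instead of the representation of $v^*$ from \Cref{prop: uv general coupling}, which contains $\nabla_{X_t}\log\P_{t|0}(X_t|X_0)$, I use the representation from \Cref{PropApp: SBuvcharacterization}:
\[
v^*(x,t)=\sigma(t)\,\E_{\Pi^*_{0|t}}\bigl[\nabla_{X_0}\log(\Pi^*_0/\pu_0)(X_0)\,\big|\,X_t=x\bigr].
\]
This step eliminates the $X_t$-dependence, which the target formula lacks.

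\textbf{Step 3: collect coefficients.} I use the Gaussian translation invariance $\nabla_{x_T}\log\P_{T|0}(x_T|x_0)=-\nabla_{x_0}\log\P_{T|0}(x_T|x_0)$, which holds because the kernel depends only on $x_T-x_0$. Then:
\begin{itemize}
\item The coefficient of $\nabla_{X_0}\log(\Pi^*_0/\pu_0)$ is $\frac{1-c}{1-\gamma}-1=\frac{\gamma-c}{1-\gamma}$.
\item The coefficient of $\nabla_{X_0}\log\P_{T|0}$ is $\frac{1-c}{1-\gamma}-\frac{c}{\gamma}=\frac{\gamma-c}{\gamma(1-\gamma)}$.
\item The term $\frac{c}{\gamma}\nabla_{X_T}\log(\Pi^*_T/\pv_T)$ remains unchanged.
\end{itemize}
Placing all terms inside one conditional expectation given $X_t$ yields $u^*=\E_{\Pi^*}[\xi|X_t]$ with the stated $\xi$.

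\textbf{Main obstacle.} The main subtlety is Step 2. The stated drift is not pathwise equal to the one in \Cref{prop: general xi}; it agrees only after conditioning on $X_t$. I must therefore argue carefully that the tower property under $\Pi^*$ justifies this swap. The two expressions for $v^*$ are conditional expectations, under $\Pi^*_{0|t}$, of different functions of $(X_0,X_t)$, and they coincide only in conditional mean. The technical conditions for $\nabla\log\pu_0$ and $\nabla\log\pv_T$ (integrability, vanishing boundary terms under Assumption \ref{Assumption(iv)}) would be taken from the standing assumptions. As a sanity check, setting $c=\gamma$ should recover \Cref{prop: xi asbs}.
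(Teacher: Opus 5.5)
Your proposal is correct and follows essentially the same route as the paper. The paper starts from \Cref{prop: general xi}, expands the Schr\"odinger coupling scores, and merges the $\P_{T|0}$ terms via $\nabla_{X_T}\log\P_{T|0}=-\nabla_{X_0}\log\P_{T|0}$. It then replaces $-\nabla_{X_t}\log\P_{t|0}(X_t|X_0)$ by $-\nabla_{X_0}\log\pv_0(X_0)$ in conditional expectation, by equating the two representations of $v^*$ from \Cref{prop: uv general coupling} and \Cref{PropApp: SBuvcharacterization}. This is exactly your Step 2, done after the general formula is assembled rather than inside Nelson's identity. The ``main obstacle'' you flag is just linearity of conditional expectation given $X_t$: the $v^*$ integrand depends only on $X_0$, so expectations under $\Pi^*_{0|t}$ and $\Pi^*_{0,T|t}$ coincide, and no further argument is needed.
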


\begin{proof}
    Let us start from the general identity for $\xi$, stated in \Cref{prop: general xi},
    \begin{align}
    \label{eq: general xi SB proof}
        \sigma^{-1}(t) \xi(X,t) = -\nabla _{X_t}\log \P_{t | 0}(X_t|X_0) +\frac{1-c(t)}{1-\gamma(t)} \nabla _{X_0} \log \Pi^*_{0,T}(X_0, X_T) + \frac{c(t)}{\gamma(t)} \nabla _{X_T}\log \Pi^*_{0,T}(X_0, X_T).
    \end{align}
    As stated in \Cref{prop: sb system}, for a Schrödinger Bridge, the optimal coupling factorizes as
    \begin{align}
        \Pi_{0,T}^*(X_0, X_T)= \P_{T|0}(X_T | X_0) \pv _0 (X_0) \varphi _T(X_T).
    \end{align}
    Hence, we can compute
    \begin{align}
        \nabla _{X_0}\log \Pi^*_{0,T}(X_0, X_T) &= \nabla _{X_0} \log \P_{T|0}(X_T | X_0) + \nabla _{X_0} \log \pv _0(X_0),
        \\
        \nabla _{X_T}\log \Pi^*_{0,T}(X_0, X_T) &= \nabla _{X_T} \log \P_{T|0}(X_T | X_0) + \nabla _{X_T} \log \varphi _T(X_T).
    \end{align}
    Plugging this into \eqref{eq: general xi SB proof} yields
    \begin{align}
    \begin{split}
        \sigma^{-1}(t) \xi(X, t) =&  -\nabla _{X_t}\log \P_{t | 0}(X_t|X_0) +\frac{1-c(t)}{1-\gamma(t)} \nabla _{X_0} \log \P_{T|0}(X_T|X_0)+ \frac{c(t)}{\gamma(t)}\nabla _{X_T} \log \P_{T|0}(X_T|X_0) \\& \qquad + \frac{1-c(t)}{1-\gamma(t) } \nabla _{X_0}\log \pv _0(X_0) + \frac{c(t)}{\gamma(t)} \nabla_{X_T} \log \varphi _T(X_T). \label{longxiB11}
    \end{split}
    \end{align}
    For the reference process $\dd X_t = \sigma (t) \dd B_t$, the transition density is $\P _{T|0}(X_T | X_0)=\mathcal{N}(X_T; X_0, \kappa (T)I)$. Hence, 
    \begin{align}
        \nabla_{X_T} \log \P_{T|0}(X_T| X_0) &= -\nabla_{X_0}\log \P_{T | 0}(X_T | X_0),
    \end{align}
and therefore 
    \begin{align}
        &\frac{1-c(t)}{1-\gamma(t)} \nabla _{X_0} \log \P_{T|0}(X_T| X_0) + \frac{c(t)}{\gamma(t)}\nabla _{X_T} \log \P_{T|0}(X_T| X_0)
        =
        \frac{\gamma(t) - c(t)}{\gamma(t) (1-\gamma(t))}\nabla _{X_0}\log \P_{T | 0} (X_T | X_0). \label{red2}
    \end{align}
    
    Further, using \Cref{prop: uv general coupling}, we have 
    \begin{align}
        v^* (x,t) &= \sigma(t)\E _{\Pi^*_{0 | t}}[\nabla_{X_t} \log \P_{t | 0}(X_t|X_0)  | X_t = x],
    \end{align}
    and from \Cref{PropApp: SBuvcharacterization}
    \begin{align}
        v^*(x,t) &= \sigma (t) \E_{\Pi^*_{0 | t}} \Big[\nabla_{X_0} \log \frac{\Pi^*_0(X_0)}{ \varphi_0 (X_0)} \Big | X_t = x \Big] = \sigma (t) \E_{\Pi^*_{0 | t}} \Big[\nabla_{X_0} \log \pv_0 (X_0) \Big | X_t = x \Big]. 
    \end{align}
    We can therefore compute
    \begin{align}
        \E_{\Pi^*_{0 | t}}\Big[-\nabla _{X_t}\log \P_{t | 0}(X_t|X_0)+ \frac{1-c(t)}{1-\gamma(t) } \nabla _{X_0}\log\pv_0(X_0)  | X_t = x\Big] = \E_{\Pi^*_{0 | t}}\Big[\frac{\gamma(t) - c(t)}{1-\gamma(t)}\nabla _{X_0}\log \pv_0(X_0)  | X_t = x\Big]. \label{red1}
    \end{align}
    Finally, combining \eqref{longxiB11}, \eqref{red2} and \eqref{red1}, and noting again the Markovian projection formula from \Cref{lem: markovian projection}, we obtain
    \begin{align}
        \sigma^{-1}(t)\xi (X,t) &= \frac{\gamma(t) - c(t)}{\gamma(t) (1-\gamma(t))}\nabla _{X_0}\log \P_{T|0} (X_T | X_0)  + \frac{\gamma(t) - c(t)}{1-\gamma(t)}\nabla_{X_0} \log \pv_0(X_0)+\frac{c(t)}{\gamma(t)} \nabla _{X_T} \log \varphi_T(X_T).
    \end{align}
\end{proof}

\subsection{Proofs for Schrödinger half bridges}
\label{app: proofs SBH}
Under the memoryless condition, a Schrödinger Half Bridge can be formulated as a special case of a Schrödinger Bridge. Recall that we have the following minimization problem for SHBs:
\begin{align}
    \min _{u \in \mathcal{U}}\E_{\P^u} \left [\int _0^T \tfrac 12 \| u(X_t, t)\|^2 \dd t \right ] \quad \text{s.t.} \quad X_T \sim \Pi_T^*=p_{\mathrm{target}}.
\end{align}
This optimization yields the optimal joint coupling $\Pi_{0,T}^* = \Pi_T^*\P_{0|T}$. By the memoryless condition, the backward transition of the reference process becomes independent of the terminal state and thus $\P _{0|T} = \P_0$. Consequently, we also have $\Pi_0^* = \P_0$. \\
Since both the initial and terminal marginals are determined, we can identify the corresponding potentials for the Schrödinger system $\Pi_t^* = \varphi _t \hat \varphi_t$.
\begin{itemize}
    \item \textbf{Initial Potentials: }To satisfiy $\Pi_0^* = \varphi_0 \hat \varphi _0 = \P_0$, we have the initial potentials as $\hat \varphi _0 = \P_0$ and $\varphi _0=1$.
    \item \textbf{Terminal Potentials:} Under the reference dynamics, the forward potential evolves exactly as the reference marginal, yielding $\hat \varphi _T = \P_T$. To satisfy the terminal constraint $\Pi_T^* = \varphi _T \hat \varphi _T$, the terminal backward potential must therefore be $\varphi_T=\Pi_T^* / \P_T$.
\end{itemize}

\begin{lemma}[Path-dependent drift for Schrödinger Half Bridges]\label{lemma: general xi for SHBs}
    Let $\dd X_t = \sigma(t)\dd B_t$ be the reference process with path measure $\P$. Consider the target path measure $\Pi^* \in \mathcal{R}(\P)$ defined by the Schrödinger Half Bridge coupling $\Pi_{0,T}^* = \Pi_T^*(X_T)\P_{0 |T}(X_0|X_T)$. The unique optimal control $u^*$ is given by: 
    \begin{align}
         u^*(x,t) = \sigma (t)\mathbb{E}_{\Pi^*} \left[\xi(X,t)  | X_t = x\right],
    \end{align}
    where 
    \begin{align}
        \sigma (t)^{-1}\xi(X, t)=\frac{c(t)-\gamma(t)}{\gamma (t)}\nabla_{X_0} \log \P_0(X_0)+\frac{c(t)}{\gamma(t)} \nabla _{X_T} \log  \frac{\Pi_T^*(X_T)}{\P_T(X_T)} + \frac{\gamma (t)- c(t)}{\gamma(t) (1-\gamma (t))}\nabla _{X_0}\log \P_{0 | T} (X_0 | X_T)
    \end{align}
    for any $c=c(t) \in [0,1]$. 
    
\end{lemma}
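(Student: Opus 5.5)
The plan is to specialize \Cref{prop: general xi} to the half-bridge coupling $\Pi^*_{0,T}=\Pi^*_T\,\P_{0|T}$. The derivation is algebraic. I first express the two coupling scores through $\P_{0|T}$, $\P_0$ and $\P_T$. I then replace the reference-transition term $-\nabla_{X_t}\log\P_{t|0}(X_t|X_0)$ by a boundary score that has the same conditional expectation given $X_t$, using the Schrödinger-potential characterization of $v^*$. The argument follows the proof of \Cref{prop: xi for sb}, with the potentials $\pv_0=\P_0$, $\pu_0=1$, $\pv_T=\P_T$ and $\pu_T=\Pi^*_T/\P_T$ identified just above the statement.

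\textbf{Step 1: coupling scores.} Since $\Pi^*_T$ does not depend on $X_0$, we have $\nabla_{X_0}\log\Pi^*_{0,T}=\nabla_{X_0}\log\P_{0|T}(X_0|X_T)$. For the $X_T$-score, Bayes' rule $\P_{0|T}=\P_0\P_{T|0}/\P_T$ gives
\[
\nabla_{X_T}\log\Pi^*_{0,T}=\nabla_{X_T}\log\frac{\Pi^*_T(X_T)}{\P_T(X_T)}+\nabla_{X_T}\log\P_{T|0}(X_T|X_0).
\]
For scaled Brownian motion, $\P_{T|0}(X_T|X_0)=\mathcal N(X_T;X_0,\kappa(T)I)$ depends only on $X_T-X_0$. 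Hence $\nabla_{X_T}\log\P_{T|0}=-\nabla_{X_0}\log\P_{T|0}$, and by Bayes' rule again,
\[
-\nabla_{X_0}\log\P_{T|0}=-\nabla_{X_0}\log\P_{0|T}(X_0|X_T)+\nabla_{X_0}\log\P_0(X_0).
\]

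\textbf{Step 2: the transition term.} From \Cref{prop: uv general coupling},
\[
v^*(x,t)=\sigma(t)\,\E_{\Pi^*}\!\left[\nabla_{X_t}\log\P_{t|0}(X_t|X_0)\,\middle|\,X_t=x\right].
\]
From \Cref{PropApp: SBuvcharacterization}, with $\Pi^*_0=\P_0$ and $\pu_0=1$, the same quantity equals $\sigma(t)\,\E_{\Pi^*}[\nabla_{X_0}\log\P_0(X_0)\mid X_t=x]$. Inside the conditional expectation defining $u^*$ via \Cref{lem: markovian projection}, I can therefore replace $-\nabla_{X_t}\log\P_{t|0}$ by $-\nabla_{X_0}\log\P_0(X_0)$.

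\textbf{Step 3: collecting coefficients.} I collect the coefficients of the three path functionals.
\begin{itemize}
\item $\nabla_{X_0}\log\P_{0|T}$: the coefficient is $\frac{1-c}{1-\gamma}-\frac{c}{\gamma}=\frac{\gamma-c}{\gamma(1-\gamma)}$.
\item $\nabla_{X_0}\log\P_0$: the coefficient is $\frac{c}{\gamma}-1=\frac{c-\gamma}{\gamma}$.
\item $\nabla_{X_T}\log(\Pi^*_T/\P_T)$: the coefficient is $\frac{c}{\gamma}$.
\end{itemize}
These are exactly the coefficients claimed in the lemma.

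\textbf{Main obstacle.} I expect the delicate point to be Step 2. The substitution is valid only under the conditional expectation given $X_t$, so $\xi$ is identified up to terms with zero conditional mean. This is harmless because only $\E_{\Pi^*}[\xi\mid X_t]$ enters the fixed-point iteration. Applying \Cref{PropApp: SBuvcharacterization} also requires viewing the half bridge as a Schrödinger bridge with the stated potentials, which uses the memorylessness identity $\Pi^*_0=\P_0$. I would state explicitly that the overall factor $\sigma(t)$ in $u^*$ is absorbed into the normalization $\sigma(t)^{-1}\xi$, consistent with \Cref{prop: general xi}. As a sanity check, under memorylessness $\P_{0|T}=\P_0$, and the first and third terms combine into $\frac{c-\gamma}{\gamma}\nabla\log\P_0+\frac{\gamma-c}{\gamma(1-\gamma)}\nabla\log\P_0$. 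Choosing $c=1$ kills the $X_0$-terms and should recover \Cref{prop: xi for shb}.
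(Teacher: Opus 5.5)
Your derivation is correct and is essentially the paper's: the paper specializes \Cref{prop: xi for sb} to the half-bridge potentials $\pv_0=\P_0$, $\pu_0=1$, $\pv_T=\P_T$ and then applies Bayes' rule $\nabla_{X_0}\log\P_{T|0}=\nabla_{X_0}\log\P_{0|T}-\nabla_{X_0}\log\P_0$, which is your Steps 1--3 with that intermediate proposition inlined, and your coefficients $\frac{\gamma-c}{\gamma(1-\gamma)}$, $\frac{c-\gamma}{\gamma}$, $\frac{c}{\gamma}$ check out. One slip in your closing sanity check: $c=\gamma$, not $c=1$, recovers \Cref{prop: xi for shb}, since both $X_0$-coefficients then vanish and the $X_T$-coefficient is $1$; for $c=1$ the combined $X_0$-coefficient under memorylessness is $\frac{\gamma-c}{1-\gamma}=-1$ and the $X_T$-coefficient is $\frac{1}{\gamma}$.
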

\begin{proof}
    For a general Schrödinger Bridge, the path-dependent drift $\xi$ is given by \Cref{prop: xi for sb}:
    \begin{align}
     \sigma(t)^{-1}\xi (X,t) = \frac{\gamma(t) - c(t)}{\gamma(t) (1-\gamma(t))}\nabla _{X_0}\log \P_{T | 0} (X_T | X_0) + \frac{\gamma(t) - c(t)}{1-\gamma(t)}\nabla_{X_0} \log \frac{\Pi_0^*(X_0)}{\pu_0(X_0)}+\frac{c(t)}{\gamma(t)} \nabla _{X_T} \log  \frac{\Pi_T^*(X_T)}{\pv_T(X_T)}.   
    \end{align}
    Substituting the known couplings for Half Bridges yields
    \begin{align}
        \sigma(t)^{-1}\xi (X,t) = \frac{\gamma(t) - c(t)}{\gamma(t) (1-\gamma(t))}\nabla _{X_0}\log \P_{T | 0} (X_T | X_0) + \frac{\gamma(t) - c(t)}{1-\gamma(t)}\nabla_{X_0} \log \P_0(X_0)+\frac{c(t)}{\gamma(t)} \nabla _{X_T} \log  \frac{\Pi_T^*(X_T)}{\P_T(X_T)}.   
    \end{align}
    We use Bayes' theorem to expand the score of the forward transition: $\nabla _{X_0} \log \P_{T|0}(X_T |X_0) = \nabla _{X_0} \log \P_{0|T}(X_0|X_T) - \nabla _{X_0}\log \P_0(X_0)$. After substituting and grouping the terms of $\nabla _{X_0}\log \P_0(X_0)$ together, we have
    \begin{align}
        \sigma (t)^{-1}\xi(X, t)=\frac{c(t)-\gamma(t)}{\gamma (t)}\nabla_{X_0} \log \P_0(X_0)+\frac{c(t)}{\gamma(t)} \nabla _{X_T} \log  \frac{\Pi_T^*(X_T)}{\P_T(X_T)} + \frac{\gamma (t)- c(t)}{\gamma(t) (1-\gamma (t))}\nabla _{X_0}\log \P_{0 | T} (X_0 | X_T).
    \end{align}
\end{proof}
\begin{proof}[Proof of \Cref{prop: xi for shb}]
    This directly follows from \Cref{lemma: general xi for SHBs} setting $c(t)=\gamma(t) $ $\forall t\in [0,T]$.
\end{proof}

\subsection{Proofs for independent couplings}
\label{app: proof independent coupling}
\begin{proof}[Proof of \Cref{prop: TSI + xi independent couplings}]
    This follows directly from \Cref{prop: general xi}, which for the general case states
    \begin{align}
        \xi (X,t) = - \sigma(t)\left(\nabla _x\log \P_{t | 0}(X_t|X_0)+ \frac{1-c(t)}{1-\gamma(t)}\nabla _{X_0} \log \Pi_{0,T}^* (X_0, X_T) + \frac{c(t)}{\gamma(t)} \nabla _{X_T} \log \Pi^*_{0,T} (X_0, X_T)\right).
    \end{align}
    For independent couplings, we have $\Pi^*_{0,T}(x_0, x_T) = \Pi^*_0 (x_0) \otimes \Pi_T^*(x_T)$ and can calculate 
    \begin{align}
        \nabla _{X_0}\log \Pi_{0,T}^* (X_0, X_T) &= \nabla _{X_0} \log \Pi_0^*(X_0), \\
        \nabla _{X_T} \log \Pi_{0,T}^*(X_0, X_T)&= \nabla _{X_T} \log \Pi _T^* (X_T).
    \end{align}
    Substituting these gradients back into the expression for $\xi$ concludes the proof. 
\end{proof}

\subsection{Proofs for damped fixed-point iterates}
\label{app: proof damped FP}
\begin{proof}[Proof of \Cref{prop: var char damped}]
    The operator $\Phi (u_i)$ is defined as the Markovian projection of the drift $\xi$ under the measure $\Pi^i$: 
    \begin{align}
        \Phi(u_i)(x,t) = \E_{\Pi ^i}[\xi (X,t)  | X_t = x].
    \end{align}
    Let $\mathcal J$ be the objective functional 
    \begin{align}
        \mathcal{J}(u)=\E_{\Pi^i}\left[\int_0^T\left(\frac{1}{2}\|\xi(X,t)-u(X_t,t)\|^2+\frac{\eta}{2}\|u_i(X_t,t)-u(X_t,t)\|^2\right)\dd t\right]. \label{objFunctional_damped}
    \end{align}
    Using the bias-variance decomposition (see \Cref{OrthProj}), the first term of the objective can be decomposed as:
    \begin{align}
        \E_{\Pi ^i}[\| \xi (X, t) - u(X_t, t) \|^2  | X_t = x] = \E_{\Pi ^i}[\| \xi (X, t) - \Phi (u_i)(X_t, t)\|^2 | X_t = x] + \| \Phi (u_i) (x,t) - u(x,t) \| ^2.
    \end{align}
    Since the first term does not depend on $u$, minimizing the original objective functional \Cref{objFunctional_damped} is equivalent to minimizing
    \begin{equation}
        \int_0^T  \E_{\Pi ^i} \left[ \tfrac{1}{2} \|\Phi(u_i)(X_t,t) - u(X_t,t)\|^2 + \tfrac{\eta}{2} \|u_i(X_t,t) - u(X_t,t)\|^2 \right] \dd t.
    \end{equation}
    We minimize the integrand pointwise for almost every $(x,t)$ by taking the variational derivative with respect to $u$ and setting it to zero, i.e. 
    \begin{equation}
        \nabla_u \left( \frac{1}{2} \|\Phi(u_i) - u\|^2 + \frac{\eta}{2} \|u_i - u\|^2 \right)  = -( \Phi(u_i) - u ) - \eta ( u_i - u ) \stackrel!= 0.
    \end{equation}
    Rearranging terms to isolate $u$ gives
    \begin{equation}
        u = \frac{1}{1+\eta} \Phi(u_i) + \frac{\eta}{1+\eta} u_i.
    \end{equation}
    By substituting the definition of the damping parameter $\eta=\frac{1-\alpha}{\alpha}$, we find that $\frac{1}{1+\eta}=\alpha$ and $\frac{\eta}{1+\eta}=1-\alpha$. Thus, the minimizer exactly recovers the damped fixed-point update rule:
    \begin{equation}
        u_{i+1}(x,t) = \alpha \,\Phi(u_i)(x,t) + (1-\alpha) u_i(x,t).
    \end{equation}
\end{proof}

\subsection{Proofs for control variates}
\begin{proof}[Proof of \Cref{Prop: Optimal scalar-valued cv}]
    We seek to minimize the conditional variance term from the decomposition with respect to $c(t)$. Let us isolate the integrand:
    \begin{align}
        J(c) &= \E_{\Pi^*_t}\left[\V_{\Pi^*_{|t}}\left(\sigma^{-1}(t)\xi^c(X,t|X_t)\right)\right] = \E_{\Pi^*}\left[ \| \sigma^{-1}(t)\xi^c(X,t) - \E_{\Pi^*}[\sigma^{-1}(t)\xi^c(X,t) | X_t] \|^2 \right].
    \end{align}
    Substituting our definition $\sigma^{-1}(t)\xi^c = G_0(X) - G_v(X) - c(t)(G_0(X) - G_T(X))$ into the expected conditional variance yields
    \begin{align}
        J(c) &= \E_{\Pi^*} \Bigg[ \Big\| \big(G_0(X) - G_v(X) - c(t)(G_0(X) - G_T(X))\big)  - \E_{\Pi^*}[G_0(X) - G_v(X) - c(t)(G_0(X) - G_T(X)) | X_t] \Big\|^2 \Bigg].
    \end{align}
    Crucially, because $G_0$ and $G_T$ are both unbiased estimators of the marginal score, their conditional expectations are equal. Therefore, the conditional expectation of their difference evaluates to zero:
    \begin{align}
        \E_{\Pi^*}[G_0(X) - G_T(X) | X_t] = \E_{\Pi^*}[G_0(X) | X_t] - \E_{\Pi^*}[G_T(X) | X_t] = 0.
    \end{align}
    This simplifies our objective to
    \begin{align}
        J(c) &= \E_{\Pi^*} \left[ \left\| \big(G_0(X) - G_v(X) - \E_{\Pi^*}[G_0(X) - G_v(X)|X_t]\big) - c(t)(G_0(X) - G_T(X)) \right\|^2 \right].
    \end{align}
    Expanding the squared norm using the inner product, and applying our definitions for scalar variance and covariance, we obtain a quadratic function in $c(t)$, namely
    \begin{align}
    \begin{split}
        J(c) &= \E_{\Pi^*} \left[ \| G_0(X) - G_v(X) - \E_{\Pi^*}[G_0(X) - G_v(X)|X_t] \|^2 \right] \\
        &\quad - 2c(t) \E_{\Pi^*} \left[ \langle G_0(X) - G_v(X) - \E_{\Pi^*}[G_0(X) - G_v(X)|X_t], \, G_0(X) - G_T(X) \rangle \right]  + c(t)^2 \V(G_0 - G_T).
    \end{split}
    \end{align}
    By the law of total expectation, and recalling that $\E_{\Pi^*}[G_0(X) - G_T(X)|X_t] = 0$, the cross-term simplifies to exactly the covariance between $(G_0 - G_v)$ and $(G_0 - G_T)$:
    \begin{subequations}
    \begin{align}
        &\E_{\Pi^*} \left[ \langle G_0(X) - G_v(X) - \E_{\Pi^*}[G_0(X) - G_v(X)|X_t], \, G_0(X) - G_T(X) \rangle \right] \\
        &= \mathrm{Cov}(G_0 - G_v, G_0 - G_T) - \E_{\Pi^*} \left[ \langle \E_{\Pi^*}[G_0(X) - G_v(X)|X_t], \, \E_{\Pi^*}[G_0(X) - G_T(X)|X_t] \rangle \right] \\
        &= \mathrm{Cov}(G_0 - G_v, G_0 - G_T).
    \end{align}
    \end{subequations}
    To find the minimum, we differentiate $J(c)$ with respect to the scalar $c(t)$ and set the derivative to zero:
    \begin{align}
        \frac{\partial J(c)}{\partial c(t)} = - 2 \, \mathrm{Cov}(G_0 - G_v, G_0 - G_T) + 2c(t) \V(G_0 - G_T) = 0.
    \end{align}
    Solving for $c(t)$ yields $c^*(t) = \frac{\mathrm{Cov}(G_0 - G_v, G_0 - G_T)}{\V(G_0 - G_T)}$. Finally, using the bilinearity of covariance to expand the numerator and denominator produces the full expression in the proposition.
\end{proof}

\begin{proof}[Proof of \Cref{Prop: optimal c in appD}]
    The left term on the right-hand side of the bias-variance decomposition in \eqref{eq: bias variance decomp} is independent of $c$ due to $\nabla_c \E_{\Pi^*} \left[\xi^c(X,t) | X_t \right] = 0$. Minimizing the total objective $\E _{\Pi^*}\left[\int_0^T \tfrac{1}{2} \|u(X_t,t) -\xi^c(X,t)\|^2\dd t\right]$ with respect to $c$ is therefore equivalent to minimizing the variance penalty $\E _{\Pi^*}\left[\int_0^T \tfrac{1}{2} \|\xi^c(X,t) -\E \left[\xi^c(X,t)|X_t\right]\|^2\dd t\right]$. Using the equality in \eqref{eq: cond var} concludes the proof.
\end{proof}

\subsection{Proofs for the alternative characterization of the non-Markovian drift $\xi$}
Before proceeding with the proof of~\Cref{prop: alternative xi}, we provide two auxiliary results.
\begin{proposition}[Transition score decompositions]\label{prop: score decompositions}
    Let $\P$ be the reference process induced by scaled Brownian motion $\dd  X_t = \sigma (t) \dd  B_t$. Define $\kappa(t)\coloneqq\int_0^t\sigma^2(s)\dd s$ and $\gamma(t) \coloneqq \frac{\kappa(t)}{\kappa(T)}$. The scores of the forward transition densities can be decomposed in terms of the bridge score and a boundary term as follows:
    \begin{align}
        \nabla_{X_t} \log \P_{T| t}(X_T | X_t) &= \frac{X_T - X_0}{\kappa(T)} + \gamma(t) \nabla_{X_t} \log \P_{t| 0,T}(X_t | X_0, X_T), \label{eq: score decomp forward T}\\
        \nabla_{X_t} \log \P_{t| 0}(X_t | X_0) &= (1-\gamma(t)) \nabla_{X_t} \log \P_{t| 0,T}(X_t | X_0, X_T) - \frac{X_T - X_0}{\kappa(T)}. \label{eq: score decomp forward 0}
    \end{align}
\end{proposition}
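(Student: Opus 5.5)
The proof is a direct computation with the explicit Gaussian densities of the scaled Brownian reference process. I would first write down the three relevant scores. Since $\dd X_t = \sigma(t)\dd B_t$, we have $\P_{t|0}(X_t|X_0) = \mathcal{N}(X_t; X_0, \kappa(t) I)$ and $\P_{T|t}(X_T|X_t) = \mathcal{N}(X_T; X_t, (\kappa(T)-\kappa(t)) I)$. The bridge is the Gaussian already used in the proof of \Cref{TSI general coupling}, $\P_{t|0,T}(X_t|X_0,X_T) = \mathcal{N}\bigl(X_t; (1-\gamma(t))X_0 + \gamma(t)X_T,\ \kappa(T)\gamma(t)(1-\gamma(t)) I\bigr)$. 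Differentiating in $X_t$ gives
\begin{align*}
\nabla_{X_t}\log \P_{t|0} &= -\frac{X_t - X_0}{\kappa(t)}, \qquad \nabla_{X_t}\log \P_{T|t} = \frac{X_T - X_t}{\kappa(T)-\kappa(t)},\\
\nabla_{X_t}\log \P_{t|0,T} &= -\frac{X_t - (1-\gamma(t))X_0 - \gamma(t)X_T}{\kappa(T)\gamma(t)(1-\gamma(t))}.
\end{align*}
Using $\kappa(t) = \gamma(t)\kappa(T)$ and $\kappa(T)-\kappa(t) = (1-\gamma(t))\kappa(T)$, I would express everything in units of $\kappa(T)$ and abbreviate $\gamma = \gamma(t)$.

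For \eqref{eq: score decomp forward T}, multiply the bridge score by $\gamma$ to get $-\bigl(X_t - (1-\gamma)X_0 - \gamma X_T\bigr)/\bigl(\kappa(T)(1-\gamma)\bigr)$. Adding $(X_T - X_0)/\kappa(T) = (1-\gamma)(X_T - X_0)/\bigl(\kappa(T)(1-\gamma)\bigr)$ gives the numerator
\[
-X_t + (1-\gamma)X_0 + \gamma X_T + (1-\gamma)X_T - (1-\gamma)X_0 = X_T - X_t .
\]
This reproduces $\nabla_{X_t}\log\P_{T|t}$. For \eqref{eq: score decomp forward 0}, multiply the bridge score by $(1-\gamma)$ to get $-\bigl(X_t - (1-\gamma)X_0 - \gamma X_T\bigr)/(\kappa(T)\gamma)$. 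Subtracting $(X_T - X_0)/\kappa(T) = \gamma(X_T - X_0)/(\kappa(T)\gamma)$ gives the numerator $-(X_t - X_0)$, which matches $\nabla_{X_t}\log\P_{t|0}$.

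As an alternative route and a check, the bridge factorization $\P_{t|0,T} \propto \P_{t|0}\,\P_{T|t}$ used in the proof of \Cref{prop: uv general coupling} gives the sum identity $\nabla\log\P_{t|0,T} = \nabla\log\P_{t|0} + \nabla\log\P_{T|t}$. Adding the two claimed decompositions reproduces exactly this identity, since the boundary terms cancel and the coefficients $\gamma$ and $1-\gamma$ sum to one. It therefore suffices to verify one of the two decompositions; the other then follows.

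There is no real obstacle here. The only point needing care is the bookkeeping of the variances, namely recognizing the bridge variance as $\kappa(T)\gamma(1-\gamma)$, so that the coefficients $\gamma$ and $1-\gamma$ cancel precisely one factor of the denominator. The statement should also be restricted to $t\in(0,T)$, where both $\gamma$ and $1-\gamma$ are nonzero.
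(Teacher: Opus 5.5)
Your proposal is correct and takes essentially the same approach as the paper. Both write out the three Gaussian scores explicitly and check each decomposition by algebra on the numerators; the paper adds and subtracts $X_0$ (resp.\ $\gamma(t)X_T$) starting from the forward score, while you simplify the right-hand side back to it. Your extra observation is a consistency check the paper does not state: the two identities sum to the factorization $\nabla\log\P_{t|0,T} = \nabla\log\P_{t|0} + \nabla\log\P_{T|t}$, so either one implies the other.
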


\begin{proof}[Proof of \Cref{prop: score decompositions}]
For the reference process $X_t = \int_0^t \sigma(s) \dd B_s$, the forward and bridge transition densities are Gaussian. Their respective variances are given by $\operatorname{Var}(X_T | X_t) = \kappa(T) - \kappa(t) = \kappa(T)(1-\gamma(t))$, $\operatorname{Var}(X_t | X_0) = \kappa(t) = \kappa(T)\gamma(t)$, and $\operatorname{Var}(X_t | X_0, X_T) = \kappa(T)\gamma(t)(1-\gamma(t))$.
    
    The score of the bridge density with respect to $X_t$ is therefore
    \begin{equation}\label{eq: bridge score explicit}
        \nabla_{X_t} \log \P_{t| 0,T}(X_t | X_0, X_T) = \frac{-X_t + (1-\gamma(t))X_0 + \gamma(t)X_T}{\kappa(T)\gamma(t)(1-\gamma(t))}.
    \end{equation}
    
    To prove \eqref{eq: score decomp forward T}, we compute the score of the forward density $\P_{T| t}(X_T | X_t)$ as
    \begin{equation}
        \nabla_{X_t} \log \P_{T| t}(X_T | X_t) = \frac{X_T - X_t}{\kappa(T)(1-\gamma(t))}.
    \end{equation}
    By adding and subtracting $X_0$ in the numerator, we obtain
    \begin{subequations}
    \begin{align}
        \frac{X_T - X_t}{\kappa(T)(1-\gamma(t))} &= \frac{(X_T - X_0)(1-\gamma(t)) + (X_T - X_0)\gamma(t) + X_0 - X_t}{\kappa(T)(1-\gamma(t))} \\
        &= \frac{X_T - X_0}{\kappa(T)} + \gamma(t) \left( \frac{-X_t + (1-\gamma(t))X_0 + \gamma(t)X_T}{\kappa(T)\gamma(t)(1-\gamma(t))} \right).
    \end{align}
    \end{subequations}
    Substituting \eqref{eq: bridge score explicit} into the second term yields \eqref{eq: score decomp forward T}.
    
    To prove \eqref{eq: score decomp forward 0}, we compute the score of the forward density $\P_{t| 0}(X_t | X_0)$ as
    \begin{equation}
        \nabla_{X_t} \log \P_{t| 0}(X_t | X_0) = \frac{X_0 - X_t}{\kappa(T)\gamma(t)}.
    \end{equation}
    Similarly, by adding and subtracting $\gamma(t)X_T$ in the numerator, we find
    \begin{subequations}
    \begin{align}
        \frac{X_0 - X_t}{\kappa(T)\gamma(t)} &= \frac{-X_t + (1-\gamma(t))X_0 + \gamma(t)X_0 + \gamma(t)X_T - \gamma(t)X_T}{\kappa(T)\gamma(t)} \\
        &= \frac{-X_t + (1-\gamma(t))X_0 + \gamma(t)X_T}{\kappa(T)\gamma(t)} - \frac{\gamma(t)(X_T - X_0)}{\kappa(T)\gamma(t)} \\
        &= (1-\gamma(t)) \left( \frac{-X_t + (1-\gamma(t))X_0 + \gamma(t)X_T}{\kappa(T)\gamma(t)(1-\gamma(t))} \right) - \frac{X_T - X_0}{\kappa(T)}.
    \end{align}
    \end{subequations}
    Substituting \eqref{eq: bridge score explicit} into the first term yields \eqref{eq: score decomp forward 0}.
\end{proof}

\begin{proposition}[Optimal forward backward drifts]\label{prop: alternative optimal forward backward drifts}
    Let $\P$ be the reference process induced by scaled Brownian motion $\dd X_t = \sigma (t)\dd B_t$. Define $\kappa(t)\coloneqq\int_0^t\sigma^2(s)\dd s$ and $\gamma(t) \coloneqq \frac{\kappa(t)}{\kappa(T)}$. The optimal forward and backward drifts $u^*,v^*$ satisfy
    \begin{align}
        u^*(x,t) &= \sigma(t) \gamma(t)\nabla_x\log\Pi^*_t(x) -
        \E_{\Pi^*_{0,T}}\left[\nabla_{X_T} \log \P_{T| 0}(X_T | X_0)|X_t=x\right], \label{eq: optimal u forward}\\
         v^*(x,t) &= \sigma(t) (1-\gamma(t))\nabla_x\log\Pi^*_t(x) -
        \E_{\Pi^*_{0,T}}\left[\nabla_{X_0} \log \P_{T| 0}(X_T | X_0)|X_t=x\right], \label{eq: optimal v forward}
    \end{align}
    and therefore $u^* + v^* = \sigma\nabla\log\Pi^*$ since $\nabla_{X_T} \log \P_{T| 0} = - \nabla_{X_0} \log \P_{T| 0}$.
\end{proposition}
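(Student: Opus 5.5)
The plan is to start from the representations of \Cref{prop: uv general coupling}, which write $u^*$ and $v^*$ as conditional expectations of transition scores. I will rewrite these transition scores with the decompositions of \Cref{prop: score decompositions}. The conditional expectation of the bridge-score part is then identified with the marginal score through the denoising score identity (\Cref{app: prop: dsi gen}). The boundary term that remains is a score of the Gaussian kernel $\P_{T|0}$.

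For $u^*$, \Cref{prop: uv general coupling} gives
$u^*(x,t) = \sigma(t)\,\E_{\Pi^*_{0,T|t}}[\nabla_{X_t}\log\P_{T|t}(X_T|X_t)\,|\,X_t=x]$.
I will substitute \eqref{eq: score decomp forward T}, which turns the integrand into $\frac{X_T-X_0}{\kappa(T)} + \gamma(t)\nabla_{X_t}\log\P_{t|0,T}(X_t|X_0,X_T)$. Since $\gamma(t)$ is deterministic, linearity of conditional expectation applies. \Cref{app: prop: dsi gen} then gives
$\E_{\Pi^*_{0,T|t}}[\nabla_{X_t}\log\P_{t|0,T}(X_t|X_0,X_T)\,|\,X_t=x] = \nabla_x\log\Pi^*_t(x)$,
so the first term becomes $\sigma(t)\gamma(t)\nabla_x\log\Pi^*_t(x)$. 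For the boundary term I use $\P_{T|0}(X_T|X_0)=\mathcal{N}(X_T;X_0,\kappa(T)I)$. This gives $\nabla_{X_T}\log\P_{T|0}(X_T|X_0) = -\frac{X_T-X_0}{\kappa(T)}$, so $\frac{X_T-X_0}{\kappa(T)} = -\nabla_{X_T}\log\P_{T|0}(X_T|X_0)$, and this produces the second term of \eqref{eq: optimal u forward}.

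The argument for $v^*$ is symmetric. I will start from $v^*(x,t) = \sigma(t)\,\E_{\Pi^*_{0|t}}[\nabla_{X_t}\log\P_{t|0}(X_t|X_0)\,|\,X_t=x]$. I then enlarge the conditioning to $\Pi^*_{0,T|t}$, which is harmless because the integrand does not depend on $X_T$, and apply \eqref{eq: score decomp forward 0}. The bridge-score part again collapses, now to $\sigma(t)(1-\gamma(t))\nabla_x\log\Pi^*_t(x)$. The remaining term is $-\frac{X_T-X_0}{\kappa(T)}$, and this equals $-\nabla_{X_0}\log\P_{T|0}(X_T|X_0)$. Adding the two formulas, the boundary terms cancel because $\nabla_{X_T}\log\P_{T|0} = -\nabla_{X_0}\log\P_{T|0}$ for the translation-invariant Gaussian kernel. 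What remains is $\sigma(t)\nabla_x\log\Pi^*_t(x)$, which is consistent with Nelson's relation (\Cref{PropNelson}).

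No step here is deep, since everything reduces to linear algebra on Gaussian scores plus the DSI. The point I expect to need the most care is bookkeeping of the prefactor $\sigma(t)$. Multiplying through by $\sigma(t)$, the boundary term naturally appears as $-\sigma(t)\E[\nabla_{X_T}\log\P_{T|0}(X_T|X_0)\,|\,X_t=x]$. I will therefore read the displayed identities \eqref{eq: optimal u forward}--\eqref{eq: optimal v forward} with $\sigma(t)$ multiplying the whole right-hand side, or equivalently as statements about $\sigma(t)^{-1}u^*$ and $\sigma(t)^{-1}v^*$. Similarly, $\E_{\Pi^*_{0,T}}[\cdot\,|\,X_t=x]$ will be read as expectation under $\Pi^*_{0,T|t}$. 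The other technical point is justifying the interchange of gradient and integral inside the DSI. This is already covered by the assumptions invoked in the proof of \Cref{app: prop: dsi gen}.
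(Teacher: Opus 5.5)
Your proposal is correct and follows the paper's own one-line proof. That proof likewise starts from the representations in \Cref{prop: uv general coupling}, substitutes the decompositions of \Cref{prop: score decompositions}, collapses the bridge-score part with the denoising score identity, and rewrites $\frac{X_T-X_0}{\kappa(T)} = -\nabla_{X_T}\log\P_{T|0}(X_T|X_0) = \nabla_{X_0}\log\P_{T|0}(X_T|X_0)$.

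Your remark about the prefactor is also right. The derivation yields $-\sigma(t)\,\E[\nabla_{X_T}\log\P_{T|0}(X_T|X_0)\,|\,X_t=x]$, and analogously for $v^*$, so the displayed formulas are missing a factor $\sigma(t)$ on the boundary terms. This does not affect the concluding identity $u^*+v^*=\sigma\nabla\log\Pi^*$.
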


\begin{proof}[Proof of \Cref{prop: alternative optimal forward backward drifts}]
Directly follows from \Cref{prop: score decompositions} when using that $\nabla_x\log\Pi^*_t(x) = \E_{\Pi^*_{0,T|t}}[\nabla_{X_t} \log \P_{t| 0,T}(X_t | X_0, X_T)|X_t =x]$ and $\frac{X_T - X_0}{\kappa(T)} = -\nabla_{X_T} \log \P_{T| 0}(X_T | X_0) = \nabla_{X_0} \log \P_{T| 0}(X_T | X_0)$.
\end{proof}

\begin{proof}[Proof of \Cref{prop: alternative xi}]
The statement directly follows by combining the generalized target score identity and \eqref{eq: optimal u forward}.
\end{proof}

\section{Further algorithmic details}
\label{app: algorithmic details}
\subsection{Importance sampling and probability flow ordinary differential equation} \label{appendix: is and prob flow}
\paragraph{Importance sampling in path space.}
Our primary goal is to estimate the expectation of an observable $\Omega \in C(\R^d, \R)$ under a target distribution, denoted as $\E_{X_T \sim p_{\mathrm{target}}}\left[\Omega(X_T)\right]$. Since the optimized process $\P^{u}$ serves as an approximation to the true target dynamics (i.e., $\P^{u}_T \approx p_{\mathrm{target}}$), we can correct for the resulting bias by employing importance sampling.
While importance sampling is classically applied to probability densities, it extends naturally to path space \cite{berner2022optimal}, allowing for the reweighting of entire continuous trajectories of a stochastic process. We can express the target expectation as an integral over the proposal measure $\P^u$:
\begin{equation}
    \E_{X_T \sim p_{\mathrm{target}}}\left[\Omega(X_T)\right] = \E_{\P^u}\left[\frac{\dd \cev{\P}^v}{\dd \P^u}(X)\,\Omega(X_T)\right].
\end{equation}
Here, the Radon-Nikodym derivative (RND), $w(X) \coloneqq \frac{\dd \cev{\P}^v}{\dd \P^u}(X)$, is given by an extension of Girsanov's theorem \cite{nuesken2021solving,vargas2024transport,richter2023improved}:
\begin{equation}
\label{eq:RND}
     w(X) = \frac{p_{\mathrm{target}}(X_T)}{p_{\mathrm{prior}}(X_0)} \exp\left( \int_0^T \sigma^{-1}(v-u)(X_t,t) \cdot \dd X_t - \frac{1}{2}\int_0^T \left(\|v\|^2 - \|u\|^2\right)(X_t,t) \dd t\right).
\end{equation}
Practical implementation faces two distinct challenges. First, we typically only have access to $p_{\mathrm{target}}$ up to an intractable normalization constant $\Z$. We therefore resort to self-normalized importance sampling \cite{tokdar2010importance}:
\begin{equation}
\label{eq: snis}
    \frac{\E_{\P^u}\left[\widetilde w(X)\,\Omega(X_T)\right]}{\E_{\P^u}\left[\widetilde w(X)\right]},
\end{equation}
where $\widetilde w = w \Z$. 
The second challenge is that evaluating the RND requires access to both the forward and the backward drift $u,v$, respectively. However, our proposed fixed-point iteration yields only $u$, making the direct computation of \eqref{eq:RND} impossible. To resolve this, we leverage Nelson's identity \eqref{eq: nelsons identity} to formulate a regression problem for learning $v \approx v^*$ and the score $\mathrm{s} \approx \sigma\nabla\log\Pi^*$ separately, instead of $u$:
\begin{equation}
\label{eq:separate obj}
    \mathcal{L}(v,\mathrm{s}) \coloneqq \E_{\Pi^i} \left[ \int_0^T \tfrac{1}{2} \|\xi^v(X,t) - v(X_t,t)\|^2 \dd t \right] + \E_{\Pi^i} \left[ \int_0^T \tfrac{1}{2} \|\xi^{\mathrm{s}}(X,t) - \mathrm{s}(X_t,t)\|^2 \dd t \right],
\end{equation}
with
\begin{equation}
    v^*(x,t) = \E_{\Pi^*}\left[\xi^v(X,t)  | X_t=x\right] \quad \text{and} \quad \sigma(t)\nabla\log \Pi^*_t(x) = \E_{\Pi^*}\left[\xi^{\mathrm{s}}(X,t)  | X_t=x\right],
\end{equation}
where $\xi^v = \sigma \nabla\log \P_{t|0}$ and $\xi^{\mathrm{s}}$ is inferred via the generalized target score identity. 
Implementation-wise, it is possible to employ a shared neural backbone to approximate $v$ and $\mathrm{s}$ simultaneously, potentially improving parameter efficiency. However, we leave the exploration of such architectural optimizations for future work. Finally, the forward process \eqref{eq: forward process X} can be simulated as
\begin{equation}
    \dd X_t = \sigma(t)\left(\mathrm{s}-v\right)(X_t,t)\dd t + \sigma(t)\dd B_t, \quad X_0 \sim p_{\mathrm{prior}}.
\end{equation}

\paragraph{Probability flow ODE.}
An alternative approach to importance sampling is to leverage the Probability Flow ODE (PF-ODE) \cite{maoutsa2020interacting,song2020score,chen2021likelihood}. This deterministic process shares the same marginal densities $(\Pi^*_t)_{t \in [0,T]}$ as the optimally controlled stochastic process but follows a smooth trajectory defined by:
\begin{equation}
\label{eq: prob flow ODE}
    \dd X_t = \underbrace{\left(\sigma u^* - \tfrac{1}{2}\sigma^2\nabla\log\Pi^*_t\right)(X_t,t)}_{\coloneqq f(X_t, t)}\dd t.
\end{equation}
By leveraging Nelson's identity \eqref{eq: nelsons identity}, we can express the drift term $f$ in equivalent forms involving the backward control $v$, i.e.,
\begin{equation}
\label{eq: Nelson prob flow ODE}
    \sigma u^* - \tfrac{1}{2}\sigma^2\nabla\log\Pi^*_t = -\sigma v^* + \tfrac{1}{2}\sigma^2\nabla\log\Pi^*_t = \tfrac{1}{2}\sigma (u^* - v^*),
\end{equation}
suggesting that one needs two quantities among $u, v, \nabla\log \Pi^*$ to simulate \eqref{eq: prob flow ODE}. Consequently, one can use the objective \eqref{eq:separate obj} to learn $v$ and the score $\nabla\log\Pi^*$ separately. 
A key advantage of the ODE formulation is the ability to compute exact log-likelihoods via the \textit{instantaneous change of variables} formula \cite{chen2018neural}. The evolution of the log-density is governed by the divergence of the drift field:
\begin{equation}
    \log \P^u_T(X_T) = \log p_{\mathrm{prior}}(X_0) - \int_0^T \nabla \cdot f(X_t, t) \, \dd t,
\end{equation}
where $\nabla \cdot f = \Tr(\nabla f)$ denotes the divergence. While this allows for exact density estimation, computing the trace of the Jacobian $\nabla f$ is computationally expensive for high-dimensional problems. To mitigate this cost, one can trade exactness for efficiency by employing unbiased estimators, such as Hutchinson’s trace estimator \cite{hutchinson1989stochastic}. Having access to $\P^u_T$ allows for importance sampling via \eqref{eq: snis} with $w = { p_{\mathrm{target}}}/{\P^u_T}$.

\subsection{Variance reduction and control variates}
\label{appendix:control_variates_appendix_ctd}
The efficiency of learning the optimal drift $u^*$ is limited by the stochasticity of the target drift $\xi$. To formalize this, we observe that the matching objective decomposes into two distinct components via \Cref{OrthProj}:
\begin{equation}
\label{eq: bias variance decomp} 
\resizebox{0.95\textwidth}{!}{$ \displaystyle  \E _{\Pi^*}\left[\int_0^T \tfrac{1}{2} \|u(X_t,t) -\xi(X,t)\|^2\dd t\right]=  \underbrace{\E _{\Pi^i}\left[\int_0^T \tfrac{1}{2} \|u(X_t,t) -\E _{\Pi^*} \left[\xi(X,t)|X_t\right]\|^2\dd t\right]}_{\text{bias}} +  \underbrace{\E _{\Pi^i}\left[\int_0^T \tfrac{1}{2} \|\xi(X,t) -\E _{\Pi^*} \left[\xi(X,t)|X_t\right]\|^2\dd t\right]}_{\text{variance}}.$}
\end{equation}
While the variance term
    \begin{equation}
    \label{eq: cond var}
    \E _{\Pi^*}\left[\int_0^T \tfrac{1}{2} \|\xi(X,t) -\E  _{\Pi^i}\left[\xi(X,t)|X_t\right]\|^2\dd t\right] = \int_0^T \tfrac{1}{2} \E_{\Pi^*_t}\left[\V_{\Pi^*_{|t}}\left(\xi(X,t|X_t)\right)\right]\dd t,
\end{equation}
does not change the optimal drift $u^*$, it adds noise to the empirical gradient. To minimize its contribution, we first recall the definition of $\xi$:
\begin{equation}
\label{eq: xi c explicit}
    \sigma^{-1}(t)\xi^c(X,t) = \frac{1-c(t)}{1-\gamma(t)} \nabla_{X_0} \log \Pi^\ast_{0,T}(X_0, X_T)
+ \frac{c(t)}{\gamma(t)} \, \nabla_{X_T} \log \Pi^\ast_{0,T}(X_0, X_T) - \nabla_{X_t} \log \P_{t | 0}(X_t | X_0),
\end{equation}
where we make the dependence of $\xi$ on $c$ explicit. In what follows, we show that $c$ can be used to reduce the empirical variance.

\paragraph{Control variate interpretation of $c$.}
We start by making the following definitions:
\begin{align}
    G_0(X) \coloneqq \frac{1}{1-\gamma (t)}\nabla _{X_0} \log \Pi^*_{0,T}, \quad 
    G_T(X) \coloneqq \frac{1}{\gamma (t)} \nabla _{X_T}\log \Pi_{0,T}^*, \quad 
    G_v(X) \coloneqq  \nabla_{X_t} \log \P_{t | 0}.
\end{align}
Using these definitions, we can express the scaled target field as $\sigma^{-1}\xi^c = (1-c)G_0 + cG_T - G_v$.
Crucially, both $G_0$ and $G_T$ are unbiased estimators of the true score $\nabla _{x} \log \Pi_t^*(x)$ when conditioned on $X_t$. This implies:
\begin{align}
\label{eq: cv interpretation}
    \E_{\Pi^*}\left[\sigma^{-1}(t)\xi^c(X,t) \, |\, X_t=x\right] 
    & = \E_{\Pi^*}\left[(1-c(t))G_0 + c(t)G_T - G_v \, |\, X_t=x\right] \\
    & = \underbrace{\E_{\Pi^*}\left[G_0 - G_v \, |\, X_t=x\right]}_{=\sigma^{-1}(t)u^*(x,t)} - c(t)\underbrace{\E_{\Pi^*}\left[G_0-G_T \, |\, X_t=x\right]}_{=0}.
\end{align}
The term $\Delta G \coloneqq G_0 - G_T$ acts as a zero-mean control variate. We can therefore optimize $c$ to minimize the variance contribution in \eqref{eq: bias variance decomp} by solving:
\begin{equation}
\label{eq: CV objective}
    c^* = \argmin_{c} \int_0^T \tfrac{1}{2} \E_{\Pi^i_t}\left[\V_{\Pi^*_{|t}}\left(\xi^c(X,t|X_t)\right)\right]\dd t.
\end{equation}

\begin{proposition}[Optimal scalar-valued control variate]\label{Prop: Optimal scalar-valued cv}
    For any time $t \in [0, T]$, noting that the estimators are zero-mean, we define the scalar variance (equivalent to the trace of the covariance matrix) as $\V(A) \coloneqq \E_{\Pi^*}[\|A\|^2]$ and the scalar cross-covariance as $\mathrm{Cov}(A, B) \coloneqq \E_{\Pi^*}[\langle A, B \rangle]$. The optimal control variate $c^*(t)$ that minimizes the conditional variance of $\xi^c$ is given by:
    \begin{align}
        c^*(t) &= \frac{\V(G_0) - \mathrm{Cov}(G_0, G_T) - \mathrm{Cov}(G_0, G_v) + \mathrm{Cov}(G_T, G_v)}{\V(G_0) + \V(G_T) - 2\mathrm{Cov}(G_0, G_T)}.
    \end{align}
\end{proposition}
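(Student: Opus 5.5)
The plan is to treat the objective as a quadratic in the scalar $c(t)$, for each fixed $t$, and minimize it in closed form. Fix $t\in(0,T)$ and write $\sigma^{-1}(t)\xi^c = (G_0 - G_v) - c(t)\,\Delta G$ with $\Delta G \coloneqq G_0 - G_T$. The objective \eqref{eq: CV objective} integrates over $t$, and $c$ enters only pointwise in $t$. So it suffices to minimize, for each $t$, the expected conditional variance
\[
J(c) = \E_{\Pi^*}\big[\|\sigma^{-1}\xi^c - \E_{\Pi^*}[\sigma^{-1}\xi^c\,|\,X_t]\|^2\big].
\]
The factor $\sigma(t)^{-2}$ does not affect the minimizer.

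First, I use the generalized target score identity (\Cref{TSI general coupling}, specifically \eqref{eq: TSI 1} and \eqref{eq: TSI 2}). It gives $\E_{\Pi^*}[G_0|X_t]=\E_{\Pi^*}[G_T|X_t]=\nabla\log\Pi^*_t(X_t)$, so $\E_{\Pi^*}[\Delta G\,|\,X_t]=0$, as already observed in \eqref{eq: cv interpretation}. With $R \coloneqq (G_0-G_v) - \E_{\Pi^*}[G_0-G_v\,|\,X_t]$, this reduces the objective to $J(c)=\E\|R - c\,\Delta G\|^2$.

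Expanding gives $J(c) = \E\|R\|^2 - 2c\,\E\langle R,\Delta G\rangle + c^2\,\E\|\Delta G\|^2$. The cross term simplifies by the tower property. The conditional mean part of $R$ is $X_t$-measurable and pairs with $\Delta G$, which has zero conditional mean, so that contribution vanishes. This leaves $\E\langle R,\Delta G\rangle = \mathrm{Cov}(G_0-G_v,\,G_0-G_T)$, using the paper's convention $\mathrm{Cov}(A,B)=\E[\langle A,B\rangle]$. The function $J$ is a convex quadratic in $c$ provided $\V(\Delta G)>0$. Setting $J'(c)=0$ gives
\[
c^*(t)=\frac{\mathrm{Cov}(G_0-G_v,G_0-G_T)}{\V(G_0-G_T)}.
\]

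Finally, bilinearity yields the stated formula. The numerator expands to $\V(G_0)-\mathrm{Cov}(G_0,G_T)-\mathrm{Cov}(G_v,G_0)+\mathrm{Cov}(G_v,G_T)$, and the denominator to $\V(G_0)+\V(G_T)-2\mathrm{Cov}(G_0,G_T)$.

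The main subtlety I expect is the cross-term step. One has to justify that replacing $R$ by its uncentered version does not change the covariance, which is exactly the zero conditional mean of $\Delta G$. One must also note that the paper's "$\V$, $\mathrm{Cov}$" are uncentered second moments; this is legitimate precisely because $\Delta G$ has zero mean. Two further caveats belong in the proof. The minimizer need not lie in $(0,1]$, so the proposition should be read as the unconstrained optimum. The degenerate case $\V(\Delta G)=0$ must be excluded, since then every $c$ is optimal.
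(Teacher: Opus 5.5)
Your proposal is correct and follows essentially the same route as the paper's proof. Both arguments use the zero conditional mean of $G_0-G_T$ to reduce the objective to $J(c)=\E\|R-c\,(G_0-G_T)\|^2$, use the tower property to drop the centering in the cross term, solve the scalar quadratic, and expand by bilinearity. You also add two caveats the paper leaves implicit, and both are sound: the formula requires $\V(G_0-G_T)>0$, and $c^*(t)$ is an unconstrained optimum that need not lie in $(0,1]$.
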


\paragraph{Joint optimization of the drift and control variate.}
In practice, evaluating the exact variances and covariances in the optimal schedule $c^*(t)$ is intractable. However, we can circumvent this limitation and still achieve variance reduction by jointly learning the drift $u$ and the schedule $c$. This approach is formally justified by the following proposition, which demonstrates that minimizing the fixed-point diffusion matching objective with respect to $c$ exactly minimizes the conditional variance of the target.

\begin{proposition}\label{Prop: optimal c in appD}
    Let $\xi^c$ be defined as in \eqref{eq: xi c explicit}. It holds that 
    \begin{align}
        c^* &= \argmin_{c}\ \E _{\Pi^*}\left[\int_0^T \tfrac{1}{2} \|u(X_t,t) -\xi^c(X,t)\|^2\dd t\right] = \argmin_{c}\ \int_0^T \tfrac{1}{2} \E_{\Pi^*_t}\left[\V_{\Pi^*_{|t}}\left(\xi^c(X,t|X_t)\right)\right]\dd t.
    \end{align}
\end{proposition}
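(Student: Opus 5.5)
The plan is to reduce the claim to the bias--variance decomposition of \Cref{OrthProj}, applied pointwise in $t$ to $\xi^c$. I will also use the fact, already established in \eqref{eq: cv interpretation}, that the conditional mean of $\xi^c$ given $X_t$ does not depend on $c$. Throughout I fix the drift $u$, since the statement minimizes over $c$ only, and I treat $c$ as an arbitrary function in $C([0,T],(0,1])$.

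\textbf{Step 1: decompose the objective.} For each fixed $t$, I apply \Cref{OrthProj} with the path functional $\xi^c(X,t)$, the measure $\Pi^*$, and projection $\Phi^c(X_t)=\E_{\Pi^*}[\xi^c(X,t)\mid X_t]$. This gives
\begin{equation*}
\E_{\Pi^*}\bigl[\|u(X_t,t)-\xi^c(X,t)\|^2\bigr]=\E_{\Pi^*}\bigl[\|u(X_t,t)-\Phi^c(X_t)\|^2\bigr]+\E_{\Pi^*}\bigl[\|\xi^c(X,t)-\Phi^c(X_t)\|^2\bigr].
\end{equation*}
Integrating over $t\in[0,T]$ with the factor $\tfrac12$ is justified by Fubini, provided $\xi^c$ is square integrable under $\Pi^*$, which I take from the standing assumptions. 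The result is exactly \eqref{eq: bias variance decomp}, with $\Pi^*$ used consistently in both terms.

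\textbf{Step 2: the bias term does not depend on $c$.} By \eqref{eq: cv interpretation}, $\sigma^{-1}\Phi^c=\E_{\Pi^*}[G_0-G_v\mid X_t]-c(t)\,\E_{\Pi^*}[G_0-G_T\mid X_t]$. The last conditional expectation vanishes because $G_0$ and $G_T$ are both unbiased for the marginal score; this is precisely the content of \eqref{eq: TSI 1}--\eqref{eq: TSI 2} in the proof of \Cref{TSI general coupling}. Hence $\Phi^c=u^*$ for every $c$, and the first term equals $\tfrac12\int_0^T\E\|u-u^*\|^2\,\dd t$, which is independent of $c$.

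\textbf{Step 3: rewrite the variance term.} The second term is $\E_{\Pi^*}[\|\xi^c-\E[\xi^c\mid X_t]\|^2]=\E_{\Pi^*_t}[\V_{\Pi^*_{|t}}(\xi^c)]$, by the tower property, which is the identity \eqref{eq: cond var}. The two objectives therefore differ by an additive constant in $c$, so their argmin sets coincide; this holds for any fixed $u$.

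The main obstacle is the pointwise-in-$t$ validity of $\E[G_0-G_T\mid X_t]=0$. At $t=0$ and $t=T$ the coefficients $1/(1-\gamma)$ and $1/\gamma$ blow up. I would handle this by restricting to $t\in(0,T)$, which is a null set's complement and so leaves the time integral unchanged, and by invoking Assumption~\ref{Assumption(iv)} for the integration by parts. A secondary point is that the argmin need not be unique, so the equality should be read as equality of minimizer sets.
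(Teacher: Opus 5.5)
Your proposal is correct and follows the paper's own proof. The paper likewise splits the objective via \Cref{OrthProj} into the bias and variance terms of \eqref{eq: bias variance decomp}, notes that the bias term is independent of $c$ because $\E_{\Pi^*}[\xi^c(X,t)\mid X_t]$ does not depend on $c$ (the control variate $G_0-G_T$ has zero conditional mean), and identifies the remaining term with the expected conditional variance via \eqref{eq: cond var}.

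One correction to your integrability justification: your Fubini step assumes square integrability, but this does not follow from the standing assumptions, and it is in fact false. The Brownian-bridge noise inside $-\sigma(t)\nabla_{X_t}\log\P_{t|0}(X_t|X_0)$ gives $\xi^c$ a second moment of order $\sigma^2(t)/\kappa(t)$, which is not integrable at $t=0$. Instead, apply \Cref{OrthProj} at each fixed $t\in(0,T)$ and integrate the nonnegative terms by Tonelli; this yields the decomposition in $[0,\infty]$ and needs only a finite bias term. Under the paper's growth assumptions on $u^*$, the variance term is then $+\infty$ for every $c$, so the statement only has content once the time integral is truncated away from $t=0$.
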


 Because $c(t)$ strictly weights a zero-mean control variate, the conditional expectation of our target -- and therefore the optimal drift $u^*$ -- remains perfectly invariant to its value. Looking at the decomposition of our main matching loss, this implies that any gradient taken with respect to $c$ exclusively targets the irreducible conditional variance of $\xi^c$. Consequently, following \cite{ko2025latent,kahouli2025control,young2026diffusion} we can parameterize the control variate as a learnable function $c(t)$ (e.g., a lightweight neural network) and jointly train it alongside the drift network $u$, i.e.,
\begin{align}
\label{eq: joint optimization u c}
    u^{*}, c^{*} &= \argmin_{u,c}\ \E _{\Pi^*}\left[\int_0^T \tfrac{1}{2} \|u(X_t,t) -\xi^c(X,t)\|^2\dd t\right],
\end{align}
 using the exact same objective. By treating the variance reduction as an end-to-end learning problem, the model can dynamically discover the optimal interpolation schedule during optimization.

Note, that~\Cref{eq: cv interpretation} only holds if the expectation is taken with respect to $\Pi^*$. However, when performing our proposed fixed-point iteration \eqref{eq: fixed-point iteration}, we only have access to samples from $\Pi^i$. It is therefore unclear if a joint optimization \eqref{eq: joint optimization u c} with samples from $\Pi^i$ leads to variance reduction. We leave the numerical investigation for future work. For completeness, we provide a parameterization of the control variate $c$ that avoids numerical instabilities at the boundaries $t=0$ and $t=T$ below. 

While \Cref{eq: joint optimization u c} assumes access to the true coupling $\Pi^*$, our fixed-point iteration \eqref{eq: fixed-point iteration} relies on samples from the intermediate distribution $\Pi^i$. The extent to which joint optimization preserves its variance-reduction properties under this distributional shift remains an open question, which we leave for future numerical study. For completeness, we detail below a parameterization of $c$ designed to prevent numerical instabilities at the boundaries $t=0$ and $t=T$.

\paragraph{Parameterization of the learned control variate.}
A critical consideration when learning $c$ is the numerical stability of the target drift $\xi^c$ at the temporal boundaries. As defined in \eqref{eq: xi c explicit}, the terms weighting the boundary score estimators are scaled by $(1-\gamma(t))^{-1}$ and $\gamma(t)^{-1}$. Under standard interpolation schemes where $\gamma(0) = 0$ and $\gamma(T) = 1$, these fractions introduce division-by-zero singularities at $t=0$ and $t=T$, respectively. To prevent the empirical gradient from exploding, the control variate must strictly satisfy the boundary conditions $c(0) = 0$ and $c(T) = 1$. This ensures the numerators vanish at the boundaries, allowing the limits to remain finite. To enforce this constraint by design while still retaining full expressivity, we parameterize the learnable schedule as
\begin{align}
    c^{\phi}(t) &= \gamma(t) + \gamma(t)(1-\gamma(t)) \mathrm{NN}^{\phi}(t),
\end{align}
where $\mathrm{NN}^{\phi}(t)$ is a neural network with learnable parameters $\phi$. In this formulation, the base term $\gamma(t)$ provides a stable linear-equivalent baseline, while the multiplicative envelope $\gamma(t)(1-\gamma(t))$ acts as a gating mechanism. It zeroes out the network's contribution exactly at the boundaries, satisfying $c^\phi(0)=0$ and $c^\phi(T)=1$. Substituting $c^\phi(t)$ back into the coefficients of \eqref{eq: xi c explicit} yields
\begin{align}
    \frac{c^\phi(t)}{\gamma(t)} = 1 + (1-\gamma(t))\mathrm{NN}^\phi(t) \quad \mathrm{and} \quad
    \frac{1-c^\phi(t)}{1-\gamma(t)} = 1 - \gamma(t)\mathrm{NN}^\phi(t),
\end{align}
and therefore \eqref{eq: xi c explicit} becomes
\begin{align}
\label{eq: xi phi}
    \sigma^{-1}(t)\xi^\phi(X,t)  = &\left(1 - \gamma(t)\mathrm{NN}^{\phi}(t)\right) \nabla_{X_0} \log \Pi^\ast_{0,T}(X_0, X_T)
\\ &+  \left(1 + (1-\gamma(t))\mathrm{NN}^{\phi}(t)\right) \, \nabla_{X_T} \log \Pi^\ast_{0,T}(X_0, X_T)  - \nabla_{X_t} \log \P_{t | 0}(X_t | X_0).
\end{align}
Because both simplified coefficients are well-defined and bounded for all $t \in [0, T]$, this parameterization completely circumvents the boundary explosion, ensuring stable joint optimization of $u_\theta$ and $c_\phi$.

\subsection{Loss weighting and numerical stability} \label{appendix: loss weighting}

In this section, we analyze the variance properties of the matching objective and derive a weighting schedule to mitigate numerical instabilities near $t=0$. We introduce a time-dependent weighting function $\omega(t) > 0$ to the primary matching objective. This weighting balances the contribution of the loss over time without altering the optimal solution $u^*\in\mathcal{U}$ at the critical points of the objective. The weighted loss is given by
\begin{align}
    \mathcal{L}(u) = \mathbb{E}_{\Pi}\left[ \int _0^T \omega(t) \left \| u(X_t,t) - \xi(X, t) \right \|^2 \dd t \right].
\end{align}
The target vector field $\xi(X,t)$ typically includes the score of the forward transition kernel $\nabla \log \P _{t | 0}(X_t  | X_0)$, which dominates the variance as $t \to 0$. For a general Gaussian reference process with variance schedule $\kappa (t) = \int _0^t \sigma ^2(s) \dd s$, the transition kernel is given by $\P_{t | 0}(X_t  | X_0) = \mathcal{N}(X_t; X_0, \kappa (t) I)$. Consequently, the scaled score term is
\begin{align}
    \sigma(t)\nabla_{X_t} \log \P_{t|0}(X_t|X_0) = \sigma(t) \frac{X_0 - X_t}{\kappa(t)}.
\end{align}
Substituting the interpolant $X_t = \frac{\kappa (T) -\kappa(t)}{\kappa(T)} X_0 + \frac{\kappa(t)}{\kappa(T)}X_T + \left( B_{\kappa(t)} - \frac{\kappa(t)}{\kappa(T)}B_{\kappa(T)} \right)$ into the score expression allows us to isolate the stochastic component. Note that the stochastic part corresponds to a Brownian bridge with variance $\frac{\kappa (t) (\kappa (T)- \kappa (t))}{\kappa (T)}$ which can be represented as $\sqrt{\frac{\kappa (t) (\kappa (T)- \kappa (t))}{\kappa (T)}} \varepsilon$, where $\varepsilon \sim \mathcal{N}(0,I)$. The expansion of the score term yields
\begin{align}
    \sigma (t) \nabla_{X_t} \log \P_{t | 0}(X_t  | X_0) = \underbrace{\frac{\sigma (t)}{\kappa (T)}(X_0-X_T)}_{\text{drift}} - \underbrace{\sigma (t)\sqrt{\frac{\kappa (T)- \kappa (t)}{\kappa (T)\kappa (t)}}\varepsilon}_{\text{noise}}.
\end{align}
We now analyze the variance of the noise term, denoted by $\Sigma_{\text{noise}}^2(t)$, as $t \to 0$. Since $\kappa(0)=0$, the term $\kappa(t)$ in the denominator causes a singularity. Approximating $\kappa(T) - \kappa(t) \approx \kappa(T)$ for small $t$, the variance behaves as
\begin{align}
    \Sigma_{\text{noise}}^2(t) = \sigma^2(t) \frac{\kappa(T) - \kappa(t)}{\kappa(T)\kappa(t)} \approx \frac{\sigma^2(t)}{\kappa(t)}.
\end{align}
Assuming $\sigma(t)$ is roughly constant near $t=0$, $\kappa(t) \approx \sigma^2 t$, implying that the variance explodes as $\mathcal{O}(1/t)$. To enforce numerical stability, we select $\omega(t)$ to counteract this variance, ensuring the effective loss contribution remains bounded. The optimal choice, often referred to as the \textit{noise-prediction} weighting, is the inverse of the asymptotic variance factor:
\begin{align}
    \omega(t) = \frac{\kappa(t)}{\sigma^2(t)}.
\end{align}
Applying this weighting to the loss function neutralizes the singularity. By absorbing the weight into the norm, we obtain a numerically stable parameterization, namely
\begin{subequations}
\begin{align}
    \mathcal{L}(u) &= \mathbb{E}_{\Pi}\left[ \int_0^T \left\| \frac{\sqrt{\kappa(t)}}{\sigma(t)} u(X_t,t) - \frac{\sqrt{\kappa(t)}}{\sigma(t)}\xi(X,t) \right\|^2 \dd t \right] \\
    &= \mathbb{E}_{\Pi}\left[ \int_0^T \left\| \widehat u(X_t, t) - \left( \frac{\sqrt{\kappa(t)}}{\kappa(T)}(X_0 - X_T) - \sqrt{1 - \frac{\kappa(t)}{\kappa(T)}}\varepsilon + \sqrt{\kappa(t)}\nabla\log \Pi^*_t(X_t) \right) \right\|^2 \dd t \right],
\end{align}
\end{subequations}
with $\widehat u(x, t) = \frac{\sqrt{\kappa(t)}}{\sigma(t)} u(x,t)$. Here, the coefficient of the noise $\varepsilon$ becomes $\sqrt{1 - \kappa(t)/\kappa(T)}$, which approaches $1$ as $t \to 0$, resulting in a stable, unit-variance objective throughout training.
However, while the weighting $\omega(t)$ stabilizes training, it necessitates rescaling the drift of the forward SDE in \eqref{eq: forward process X} to recover the drift $u$.
Substituting $u(x, t) = \frac{\sigma(t)}{\sqrt{\kappa(t)}} \widehat u(x, t)$ back into the forward process gives
\begin{equation}
    \mathrm{d} X_t = \frac{\sigma^2(t)}{\sqrt{\kappa(t)}} \widehat u(X_t, t) \mathrm{d} t + \sigma(t) \mathrm{d} B_t.
\end{equation}
This formulation reveals a numerical challenge near the boundary $t=0$. Assuming $\sigma(t) \approx \sigma$ is constant near the origin, we have $\kappa(t) \approx \sigma^2 t$. Consequently, the effective drift coefficient scales as
\begin{align}
    \frac{\sigma^2(t)}{\sqrt{\kappa(t)}} \approx \frac{\sigma^2}{\sigma\sqrt{t}} \propto \frac{1}{\sqrt{t}}.
\end{align}
This $t^{-1/2}$ singularity implies that the drift velocity diverges as $t \to 0$. Standard fixed-step solvers (e.g., Euler-Maruyama) may exhibit instability or large discretization errors if evaluated strictly at $t=0$. In practice, this is mitigated by starting the integration at a small cutoff time, e.g., $t=10^{-3}$.

\begin{table}[t]
\caption{
    Overview: our framework considers target path measures of the form $\Pi^*=\Pi^*_{0,T}\P_{|0,T}$ with different couplings $\Pi^*_{0,T}$. We provide expressions for $\xi$ that satisfy $u^*(x,t) = \sigma(t)\E_{\Pi^*}\left[\xi(X,t)|X_t = x\right]$ and are used for the proposed fixed-point iteration scheme in~\Cref{alg:sampler}.
}
\vspace{-0.5em}
\begin{small}
\resizebox{\columnwidth}{!}
{
    \renewcommand{\arraystretch}{2}
    \setlength{\tabcolsep}{5pt}
    \begin{tabular}{@{} l l l}
        \toprule
        Coupling &  & Formula for $\sigma(t)^{-1}\xi(X)$ \\ 
         \midrule
        Bridge dep. \hspace{-0.3em} & 
        $ \Pi^*_{0,T}$ &
        $\frac{1-c(t)}{1-\gamma(t)} \nabla_{X_0} \log \Pi^\ast_{0,T}(X_0, X_T)
+ \frac{c(t)}{\gamma(t)} \, \nabla_{X_T} \log \Pi^\ast_{0,T}(X_0, X_T) - \nabla_{X_t} \log \P_{t | 0}(X_t | X_0)$
        \\
        SHB & 
        $\Pi^*_{0,T}=\P_{0}\otimes \Pi^*_{T}$ &
                $\frac{\gamma(t) - c(t)}{1-\gamma(t)}\nabla_{X_0} \log \P_0(X_0)+\frac{c(t)}{\gamma(t)} \nabla _{X_T} \log  \frac{\Pi_T^*(X_T)}{\P_T(X_T)} + \frac{\gamma(t) - c(t)}{\gamma(t) (1-\gamma(t))}\nabla _{X_0}\log \P_{0 | T} (X_0 | X_T)$
        \\
        SB & 
        $\Pi^*_{0,T} = \Pi^{\mathrm{SB}}_{0,T}$ &
                $\frac{\gamma(t) - c(t)}{1-\gamma(t)}\nabla_{X_0} \log \frac{\Pi_0^*(X_0)}{\pu_0(X_0)}+\frac{c(t)}{\gamma(t)} \nabla _{X_T} \log  \frac{\Pi_T^*(X_T)}{\pv_T(X_T)} + \frac{\gamma(t) - c(t)}{\gamma(t) (1-\gamma(t))}\nabla _{X_0}\log \P_{T | 0} (X_T| X_0)$
        \\
        Bridge indep. \hspace{-0.3em} & 
        $\Pi^*_{0,T}=\Pi^*_{0}\otimes \Pi^*_{T}$ &
                $\frac{1-c(t)}{1-\gamma(t)} \nabla_{X_0} \log \Pi^\ast_{0}(X_0)
+ \frac{c(t)}{\gamma(t)} \, \nabla_{X_T} \log \Pi^\ast_{T}(X_T) - \nabla_{X_t} \log \P_{t | 0}(X_t | X_0)$
        \\
        \bottomrule
    \end{tabular}
}
\end{small}
\label{tab:overview for xi}
\end{table}

\subsection{Reference process}
\label{sec: reference process}

In this work, we use a reference measure $\P$ induced by scaled Brownian motion. The dynamics are governed by the driftless SDE $\dd X_t = \sigma (t) \dd B_t$, where $\sigma (t)$ is a time-dependent noise schedule. To characterize the path properties, we define the cumulative variance schedule as $\kappa (t) := \int _0^t \sigma ^2(s) \dd s$. When conditioned on fixed boundary values $X_0 =x_0$ and $X_T=x_T$, the process admits a closed-form representation known as a stochastic interpolant \cite{albergo2025stochastic}: 
\begin{align}
\label{eq: bridge interpolant}
    X_t = \frac{\kappa (T) -\kappa(t)}{\kappa(T)} x_0 + \frac{\kappa(t)}{\kappa(T)}x_T + \left( B_{\kappa(t)} - \frac{\kappa(t)}{\kappa(T)}B_{\kappa(T)} \right).
\end{align}
Using $\gamma (t) := \frac{\kappa (t)}{\kappa (T)}$, we recover the simplified notation
\begin{align}
    X_t = (1-\gamma(t)) x_0 + \gamma(t) x_T + \mathcal{B}_t^\gamma,
\end{align}
where $\mathcal{B}_t^\gamma$ is a Gaussian process with mean zero and covariance $\kappa (T)\gamma(t)(1-\gamma(t))I$. This analytic form allows for the direct simulation of bridge states $X_t \sim \P_{t | 0,T}(\cdot  | x_0, x_T)$ without requiring numerical integration of the SDE. The relevant transition densities are given by
\begin{align}
      \P_{t|0,T}(x_t|x_0,x_T) &= \mathcal{N}\big(x_t|(1-\gamma(t))x_0 + \gamma(t)x_T, \ \kappa (T)\gamma(t)(1-\gamma(t)) I\big), \label{referenceprocess0T}
    \\
     \P_{t|0}(x_t|x_0) & =\mathcal{N}(x_t|x_0, \ \kappa (t)I),
    \\
    \P_{T|t}(x_T|x_t) &=\mathcal{N}\left( x_T|x_t, \ \kappa (T) \left( 1 - \gamma(t)\right) I \right).
\end{align}

\subsection{Alternative characterization of the non-Markovian drift $\xi$}

Here, we present an alternative characterization of the path dependent non-Markovian drift $\xi$ compared to the one in~\Cref{prop: general xi}. The expression is given in \Cref{prop: alternative xi}. 

\begin{proposition}[Alternative characterization of $\xi$]\label{prop: alternative xi}
    Let $\P$ be the reference process induced by scaled Brownian motion $\dd  X_t = \sigma (t) \dd  B_t$. Define $\kappa(t)\coloneqq\int_0^t\sigma^2(s)\dd s$ and $\gamma(t) \coloneqq \frac{\kappa(t)}{\kappa(T)}$. The path dependent drift $\xi$ satisfies 
    \begin{equation} \small
        \xi(X,t) = \sigma(t)\bigg (\frac {(1-c(t))\gamma(t)}{1-\gamma(t)}\nabla _{X_0}\log \Pi^*_{0,T}(X_0, X_T)  + c(t)\nabla _{X_T}\log \Pi^*_{0,T}(X_0, X_T) -\nabla_{X_T} \log \P_{T| 0}(X_T | X_0)\bigg), \label{eq: xi}
    \end{equation}
    with $u^*(x,t) = \E_{\Pi^*}[\xi(X)|X_t=x]$.
\end{proposition}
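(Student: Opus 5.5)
The plan is to start from the alternative representation of the forward drift in \Cref{prop: alternative optimal forward backward drifts}, namely \eqref{eq: optimal u forward}. It writes $u^*$ as a multiple $\gamma(t)$ of the marginal score plus a conditional expectation of the boundary term $-\nabla_{X_T}\log\P_{T|0}(X_T|X_0)$. The proof then substitutes the generalized target score identity (\Cref{TSI general coupling}) for $\nabla_x\log\Pi^*_t(x)$ and collects everything under a single conditional expectation given $X_t=x$. The integrand obtained this way is, by construction, a valid choice of $\xi$.

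First, I will recall \eqref{eq: optimal u forward}. It follows from \Cref{prop: score decompositions} by taking conditional expectations of \eqref{eq: score decomp forward T} under $\Pi^*_{0,T|t}$. This uses \Cref{prop: uv general coupling} for $u^*$ and the denoising score identity (\Cref{app: prop: dsi gen}) for the bridge-score term. While doing this I will track the factor $\sigma(t)$ carefully, since it multiplies both the score term and the boundary term. The correct reading is
\begin{equation*}
u^*(x,t)=\sigma(t)\Big(\gamma(t)\nabla_x\log\Pi^*_t(x)-\E_{\Pi^*}\big[\nabla_{X_T}\log\P_{T|0}(X_T|X_0)\,\big|\,X_t=x\big]\Big),
\end{equation*}
using $(X_T-X_0)/\kappa(T)=-\nabla_{X_T}\log\P_{T|0}(X_T|X_0)$.

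Second, I will multiply the identity of \Cref{TSI general coupling} by $\gamma(t)$. This gives
\begin{equation*}
\gamma(t)\nabla_x\log\Pi^*_t(x)=\E_{\Pi^*_{0,T|t}}\Big[\tfrac{(1-c(t))\gamma(t)}{1-\gamma(t)}\nabla_{X_0}\log\Pi^*_{0,T}(X_0,X_T)+c(t)\nabla_{X_T}\log\Pi^*_{0,T}(X_0,X_T)\,\Big|\,X_t=x\Big].
\end{equation*}
Inserting this into the display above and using linearity of conditional expectation yields $u^*(x,t)=\E_{\Pi^*}[\xi(X,t)\,|\,X_t=x]$, with $\xi$ exactly as in \eqref{eq: xi}. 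Finally, by \Cref{lem: markovian projection} any path functional whose conditional expectation given $X_t$ equals $u^*$ is a legitimate drift for the fixed-point iteration, which completes the argument.

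The main obstacle is justifying that the expectations over $\Pi^*_{0,T|t}$ and over $\Pi^*$ conditioned on $X_t$ may be merged. All integrands depend only on $(X_0,X_t,X_T)$, so both are expectations under the same conditional law of $(X_0,X_T)$ given $X_t$. The differentiation and integration-by-parts steps have already been justified in \Cref{TSI general coupling} under Assumption \ref{Assumption(iv)}. As a sanity check, I will verify that the result agrees with \Cref{prop: general xi} in conditional expectation. The difference between the two $\xi$'s is a combination of $\tfrac{1-c}{1-\gamma}\nabla_{X_0}\log\Pi^*_{0,T}$ and $\tfrac{c}{\gamma}\nabla_{X_T}\log\Pi^*_{0,T}$ terms plus the score decomposition \eqref{eq: score decomp forward 0}. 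Each of these has zero conditional mean by the TSI, and this serves as a consistency check on the signs.
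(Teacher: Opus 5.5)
Your proposal is correct and matches the paper's proof: the paper also inserts the generalized target score identity, multiplied by $\gamma(t)$, into \eqref{eq: optimal u forward} and reads off the integrand as $\xi$. You are right that $\sigma(t)$ must also multiply the boundary term $\E_{\Pi^*}[\nabla_{X_T}\log\P_{T|0}(X_T|X_0)\,|\,X_t=x]$; the displayed version of \eqref{eq: optimal u forward} drops it, and this corrected form is what justifies the overall factor $\sigma(t)$ in \eqref{eq: xi}.
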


The main difference between~\Cref{prop: alternative xi} and~\Cref{prop: general xi} is that the former does not suffer from the numerical instability at $t=0$ that arises through the term $\P_{t| 0}$ as explained in~\Cref{appendix: loss weighting}. We leave the numerical evaluation of this alternative expression for furture work.

\subsection{Detailed algorithmic description} We provide a detailed description of our framework in~\Cref{alg:alg detailed}.

\begin{figure}[t]
\vspace{-0.5em}
\begin{algorithm}[H]
\caption{\small Damped Fixed-Point Diffusion Matching (detailed version)}
\label{alg:alg detailed}
\small
\begin{algorithmic}
\REQUIRE Neural network $u_\theta$ with parameters $\theta$, diffusion schedule $\sigma$, target path measure $\Pi^*$, buffer size $K$, Number of outer steps $I$,
number of gradient steps $M$ per outer step, damping $\eta$
\STATE Initialize $i=0$
\FOR{$i=0,\dots,I-1$}
\STATE Set $u_i=u_\theta$ (detached)
\STATE \emph{Simulation:} Simulate $K$ trajectories $\big\{ X^{(k)}\sim\P^{u_i} \big\}_{k=1}^K$ using $u_i$
\vspace{1.5pt} \\
\STATE \emph{Coupling:} Let $\Pi^i_{0,T} = \begin{cases} \P^{u_i}_{0,T}, & \text{ASBS\tiny~\cite{liu2025adjoint}} \\
\P_0\otimes\P^{u_i}_{T}, &\text{AS\tiny~\cite{havens2025adjoint}}
\\
\P^{u_i}_{0}\otimes\P^{u_i}_{T}, &\text{BMS (ours)}
\end{cases}$
\vspace{1.5pt} 
\STATE \emph{Buffer:} Initialize buffer $\mathcal{B} = \big\{ (X_0^{(k)}, X_T^{(k)})\sim \Pi^i_{0,T} \big\}_{k=1}^K$ 
\FOR{$m=1,\dots,M$}
    \STATE  Estimate $\mathcal{L}(\theta) = \E_{(X_0, X_T) \sim \mathcal{B}, \ X_t \sim \P_{t|0,T}} \big[ \tfrac{1}{2}\|
 \xi(X,t) - u_\theta(X_t,t) \|^2 +  \tfrac{\eta}{2}\|
 u_i(X_t,t) - u_\theta(X_t,t) \|^2  \big] $
\STATE Perform a gradient-descent step on $\mathcal{L}(\theta)$ 
\ENDFOR
\ENDFOR
\STATE \textbf{Return} control $u_{\theta}$ with $\P^{u_{\theta}}_t \approx \Pi^*_t$ for all $t\in[0,T]$
\end{algorithmic}
\end{algorithm}
\vspace{-1.5em}
\end{figure}

\section{Experimental setup}
\label{appendix:setup}
\subsection{Gaussian mixture models} \label{app: gmms}
\paragraph{Target density specifications.}
We consider a \textit{Gaussian mixture model (GMM)} target of the form
\begin{equation}
p_{\mathrm{GMM}}(x) = \sum_{k=1}^K \pi_i \  \mathcal{N}(x  | \mu_i, \Sigma_i),
\end{equation}
where $\mu_i \in \mathbb{R}^d$, $\Sigma_i \in \mathbb{R}^{d \times d}$, $\pi_i \geq 0$, and $\sum_{k=1}^K \pi_i = 1$. For our experiments, we set $\pi_i = 1/K$, $\Sigma_i=I$. The means $\mu_i$ are uniformly sampled from $[-K,K]^d$.

\paragraph{Evaluation criteria.}

The \textit{mode TVD} metric is proposed by \citet{blessing2025trust} and inspired by recent work \cite{blessing2024beyond, grenioux2025improving}. It uses the ground truth mixing weights $\pi_i$, $k \in \{1, \ldots, K\}$, along with a partition $\{S_1, \ldots, S_i\}$ of $\mathbb{R}^d$, where each region $S_i \subset \mathbb{R}^d$ corresponds to the $k$-th mode of the target distribution given as
\begin{equation}
    S_i = \{x \in \R^d | \argmax_j \ \pi_j \mathcal{N}(x|\mu_j, \Sigma_j)=k\}.
\end{equation}
We estimate the empirical mixing weights using
\begin{equation}
\label{eq: empirical mix weights}
    \widehat{\pi}_i = \frac{\E_{\P^u}\left[\mathbbm{1}_{S_i}(X_T)\right]}{\sum_{k'=1}^K \E_{\P^u}\left[\mathbbm{1}_{S_{k'}}(X_T)\right]}.
\end{equation}
Using these estimates, we compute the total variation distance (TVD) between the empirical and true mode weights as
\begin{equation}
\text{mode TVD} = \sum_{k=1}^K \left| \pi_i - \widehat{\pi}_i \right|,
\end{equation}
where we approximate the expectation in \eqref{eq: empirical mix weights} using $2k$ samples. Note that $\text{mode TVD}\in[0,1]$ where a value of 0 indicates that the model learned the same mixture weights as the target and therefore did not suffer from any mode-collapse.
\\
Additionally, we use \textit{sliced TVD} to assess the accuracy of the density fitting by comparing empirical densities against the analytically available marginals. For a set of model samples $\{x_n\}_{n=1}^N$ and target samples $\{y_m\}_{m=1}^M$, we project the data onto $P$ random unit vectors $\{\theta _p\}_{p=1}^P$ sampled uniformly from the unit sphere $\mathbb{S}^{d-1}$. For each projection, we compute the 1D total variation distance between the resulting histograms:
\begin{align}
    \text{sliced TVD} = \frac{1}{P} \sum _ {p=1}^P \text{TVD}(\text{hist}(\{x_n \cdot \theta_p \}), \text{hist}(\{y_m \cdot \theta _p\})).
\end{align}
In our experiments, we use $P=100$ projections and $50$ bins to estimate these 1D distributions. 
Lastly, we compare the Wasserstein-2 ($W_2$) optimal transport distance between $2k$ model and target samples using the POT library \cite{flamary2021pot}.

\begin{figure*}[tbp]
    \centering
    \begin{subfigure}[b]{0.49\textwidth}
        \centering
        \includegraphics[width=0.6\linewidth]{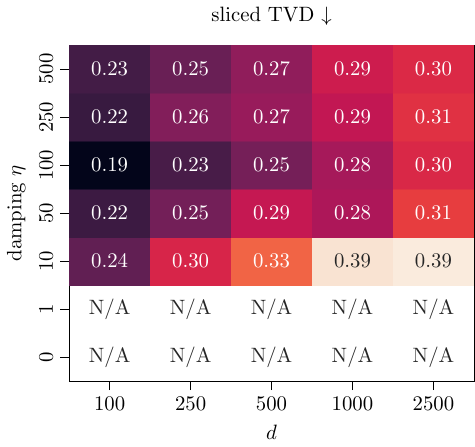}
        \caption{ASBS}
        \label{fig: sliced tvd gmm asbs}
    \end{subfigure}\hfill
    \begin{subfigure}[b]{0.49\textwidth}
        \centering
        \includegraphics[width=0.6\linewidth]{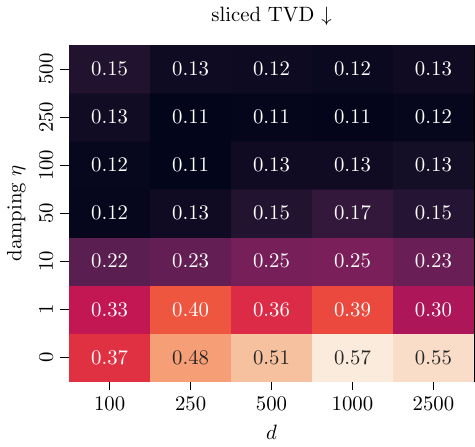}
        \caption{BMS}
        \label{fig: sliced tvd gmm bms}
    \end{subfigure}\hfill
\caption{Comparison of sliced TVD values on Gaussian mixture models across varying dimensions $d$ and damping values $\eta$. 
    \Cref{fig: sliced tvd gmm asbs} shows that ASBS exhibits instability at low damping values (N/A) and performance degradation as dimensionality increases. 
    \Cref{fig: sliced tvd gmm bms} demonstrates that our method maintains stable training and consistently avoids mode collapse up to $d=2500$, significantly scaling beyond the $d=50$ range common in recent literature. All values are averaged over three random seeds.}
    \label{fig: sliced tvd gmm}
\end{figure*}

\paragraph{Algorithm specifications.}
For the all GMM experiments we used the same setting for ASBS and BMS: A residual network with $6$ layers and $512$ hidden units and Fourier time embedding is used. The source distribution is modeled as a normal distribution which is adapted to the support of the target by setting $\mathcal{N}(0, K^2)$.
For optimization, the Adam optimizer \cite{kingma2014adam} with a learning rate of $10^{-4}$ and (value) gradient clipping at $1$ is used. Furthermore, we leverage a batchsize of $1024$ and a buffer size of $30k$. We perform $1k$ outer steps where each outer step entails $1k$ gradient steps on the current buffer. The diffusion process is integrated using the Euler-Maruyama method with 100 discretization steps. For the GMM experiments, we used a constant noise scheduler, i.e. $\sigma(t) =\sigma$. For both ASBS and BMS, we tested values in $\{1.5,2.5,4\}$ and found that $\sigma=2.5$ works best for both methods. In our attempts to compare to adjoint sampling \cite{havens2025adjoint} we required larger values for $\sigma$ in order to roughly cover the support of the target. Particularly, we set $\sigma=K$, as the means are sampled from $[-K,K]^d$. We conjecture that these large $\sigma$ values caused the instabilities in training. For all algorithms, we used $c=\gamma$ when computing $\xi$.

\subsection{$n$-body identical particle systems}
\paragraph{Target density specifications.} Following the configurations in \citet{kohler2020equivariant} and \citet{akhound2024iterated}, we evaluate our method on two primary particle systems defined by their potential energy functions $E$ as
\begin{equation}
    p_{\mathrm{target}}(x) = \frac{e^{-E(x)}}{\Z}.
\end{equation}
The \textit{Double Well Potential (DW-4)} models a system of $n=4$ particles in $\mathbb{R}^2$ ($d=8$). The energy is defined by a pairwise distance potential
\begin{equation}
    E_{\mathrm{DW}}(x) = \frac{1}{\tau}\sum_{i < j} \left[ a (d_{ij} - d_0) + b (d_{ij} - d_0)^2 + c (d_{ij} - d_0)^4 \right],
\end{equation}
where $d_{ij} = \lVert x_i - x_j \rVert_2$. We set parameters $a=0, b=-4, c=0.9$, and temperature $\tau=1$.

The \textit{Lennard-Jones (LJ)} potential describes the intermolecular interactions of $n$ particles in $\mathbb{R}^3$, where we consider systems of $n=13$ particles ($d=39$) and $n=55$ particles ($d=165$). The total energy $E_{\mathrm{Tot}}(x)$ is the sum of the LJ pairwise potential and a harmonic oscillator potential $E_{\mathrm{osc}}$ used to prevent system drift
\begin{equation}
    E_{\mathrm{Tot}}(x) = \frac{\varepsilon}{\tau} \sum_{i < j} \left[ \left(\frac{r_m}{d_{ij}}\right)^{12} - 2\left(\frac{r_m}{d_{ij}}\right)^6 \right] + c_{\mathrm{osc}} \sum_i \| x_i - x_{\mathrm{COM}} \|^2,
\end{equation}
where $x_{\mathrm{COM}}$ denotes the center of mass. Consistent with \citet{kohler2020equivariant} and \citet{akhound2024iterated}, we use $r_m = 1$, $\tau = 1$, $\varepsilon = 1$, and an oscillator scale $c_{\mathrm{osc}} = 1.0$.

\paragraph{Evaluation criteria.} For the evaluation criteria, we refer to \citet{havens2025adjoint} (Appendix E.4 Reported Metrics) for detailed information on the evaluation criteria.

\begin{figure}[tbp]
    \centering
        \begin{subfigure}[b]{0.33\textwidth}
        \centering
        \includegraphics[width=\linewidth]{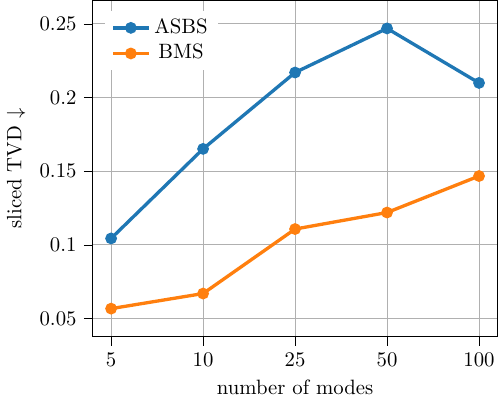}
    \end{subfigure}\hfill
    \begin{subfigure}[b]{0.33\textwidth}
        \centering
        \includegraphics[width=\linewidth]{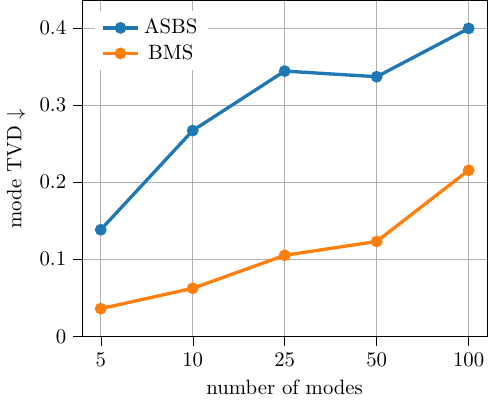}
    \end{subfigure}\hfill
    \begin{subfigure}[b]{0.33\textwidth}
        \centering
        \includegraphics[width=\linewidth]{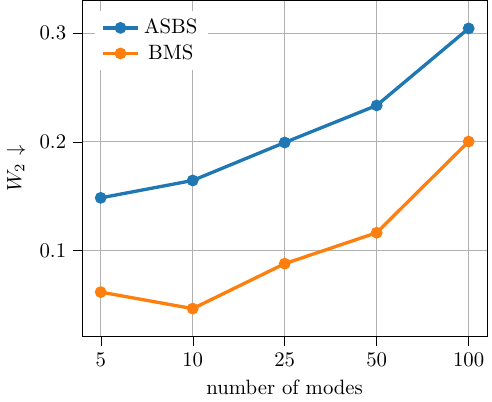}
    \end{subfigure}\hfill
\caption{Comparison of \textit{mode TVD}, \textit{sliced TVD} and Wasserstein-2 $W_2$ on a Gaussian mixture model target in $d=100$ with different number of modes. All values are averaged over three random seeds.}
    \label{fig:mode abl}
\end{figure}

\begin{figure*}[tbp]
    \centering
    \begin{subfigure}[b]{0.33\textwidth}
        \centering
        \includegraphics[width=1.1\linewidth]{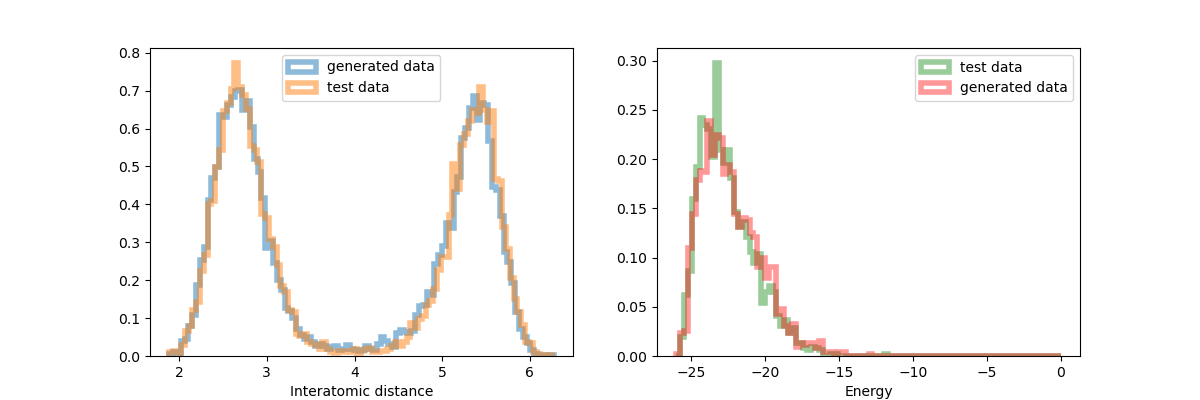}
        \caption{DW-4}
    \end{subfigure}\hfill
    \begin{subfigure}[b]{0.33\textwidth}
        \centering
        \includegraphics[width=1.1\linewidth]{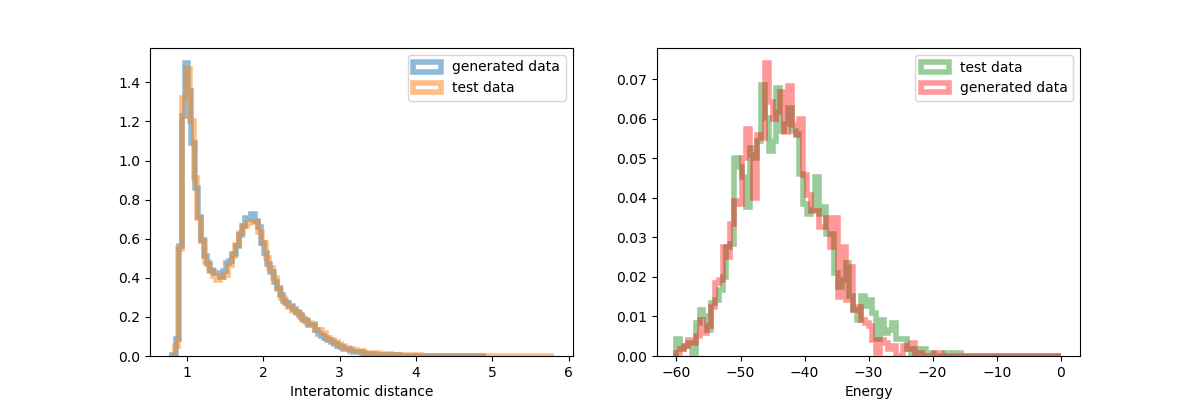}
        \caption{LJ-13}
    \end{subfigure}\hfill
    \begin{subfigure}[b]{0.33\textwidth}
        \centering
        \includegraphics[width=1.1\linewidth]{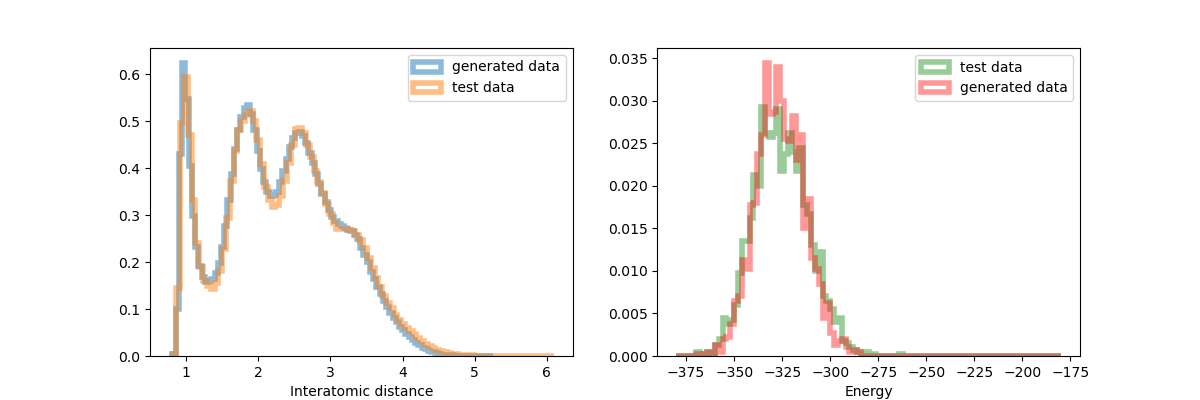}
        \caption{LJ-55}
    \end{subfigure}\hfill
\caption{Visualization of interatomic distances and energy histograms of different $n$-body identical particle systems. The blue and red curves are generated from our BMS method, and compared to the orange and green curves obtained from a long run MCMC.}
    \label{fig:energy histogramms}
\end{figure*}

\paragraph{Algorithm specifications.} We integrated our method into the code base of \citet{liu2025adjoint} (see \href{https://github.com/facebookresearch/adjoint\_samplers}{github.com/facebookresearch/adjoint\_samplers}) and re-used their hyperparameter settings and model architecture. As such, we refer to \citet{liu2025adjoint} for a detailed specification of architectures and parameters. For tuning our method, we varied the scale of the initial distribution $\mathcal{N}(0,\sigma^2_{\mathrm{init}}I)$ and $\sigma_{\mathrm{max}}$ of the geometric noise scheduler, i.e.,
\begin{equation}
    \sigma(t) = \sigma_{\mathrm{min}}\left(\frac{\sigma_{\mathrm{max}}}{\sigma_{\mathrm{min}}}\right)^{1-t}\sqrt{2\log\frac{\sigma_{\mathrm{max}}}{\sigma_{\mathrm{min}}}}.
\end{equation}
Specifically, we tested $\sigma_{\mathrm{init}} \in \{0.5,1,2\}$ and $\sigma_{\mathrm{max}}\in \{0.5,1,1.5,2\}$. We obtained the following parameter, which led to the best performance: For DW-4 we got $\sigma_{\mathrm{init}}=2;\sigma_{\mathrm{max}}=1.5$, for LJ-13 $\sigma_{\mathrm{init}}=1;\sigma_{\mathrm{max}}=1$ and LJ-55 $\sigma_{\mathrm{init}}=0.5;\sigma_{\mathrm{max}}=0.5$. Lastly, for all algorithms, we used $c=\gamma$ when computing $\xi$. The results reported are without using damping, i.e., $\eta$ is set to zero.

\subsection{Molecular benchmarks: Peptide systems}
\paragraph{Target density specifications.}
The target density for the Alanine dipeptide and tetrapeptide systems follow the configuration in \citet{nam2025enhancing} and are defined over the configuration space of their 22 and 42 constituent atoms, respectively. In this work, the model is trained directly on the raw Cartesian coordinates of the molecule purely from energy evaluations. The distribution of molecular states is governed by the Boltzmann distribution, which defines the unnormalized target density as
\begin{equation}
    p_{\mathrm{target}}(x) \propto \exp\left( -\frac{E(x)}{k_B \tau} \right),
\end{equation}
where $E(x)$ represents the potential energy of the molecular configuration, $k_B$ is the Boltzmann constant, and $\tau$ is the thermodynamic temperature which is set to $\tau = 300K$. The potential energy of the system uses the Amber ff96 force field \cite{kollman1997development} with GBSA-OBC implicit solvation \cite{onufriev2004exploring} and uses OpenMM \cite{eastman2023openmm}. Moreover, we follow \citet{nam2025enhancing} to constrain the chirality of the generated molecules. 

\begin{figure*}[tbp]
    \centering
    
    \begin{minipage}[b]{0.48\textwidth}
        \centering
        \includegraphics[width=0.48\linewidth]{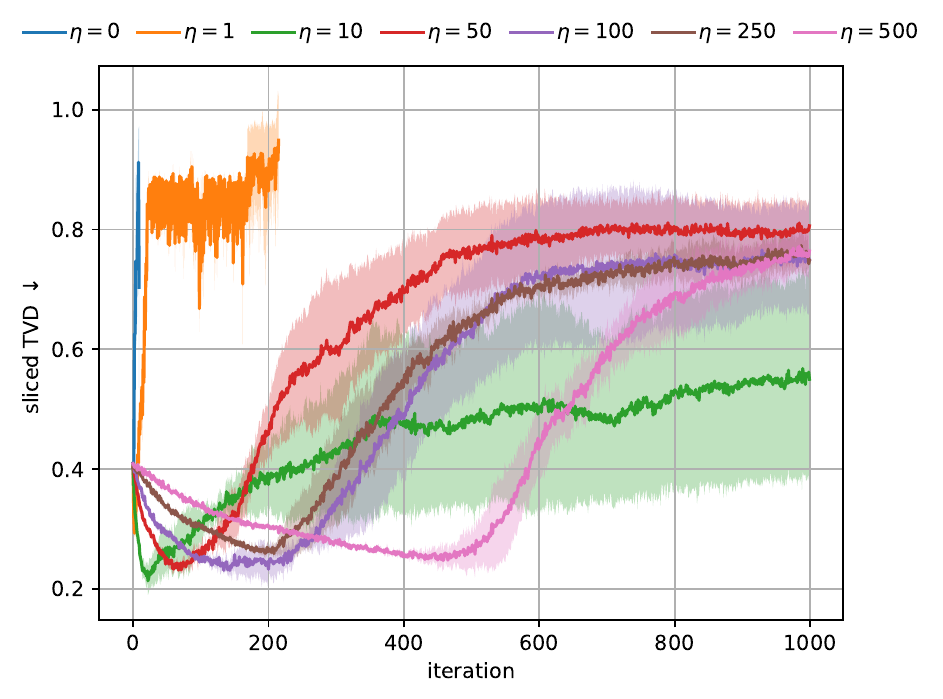}
        \hfill
        \includegraphics[width=0.48\linewidth]{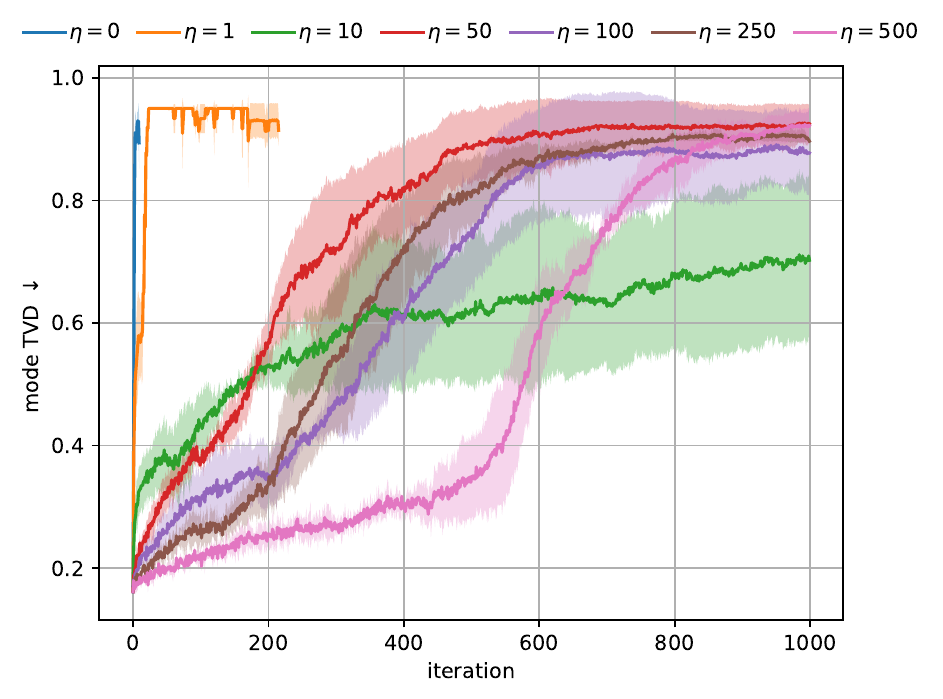}
        
        \subcaption{ASBS ($d=250$) }
    \end{minipage}
    \hfill 
    \begin{minipage}[b]{0.48\textwidth}
        \centering
        \includegraphics[width=0.48\linewidth]{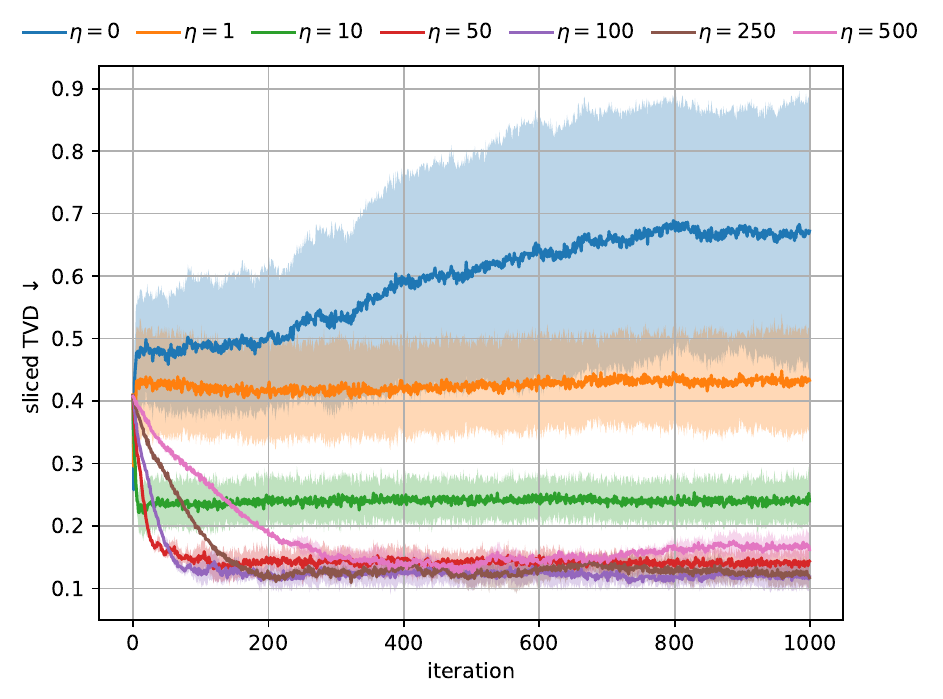}
        \hfill
        \includegraphics[width=0.48\linewidth]{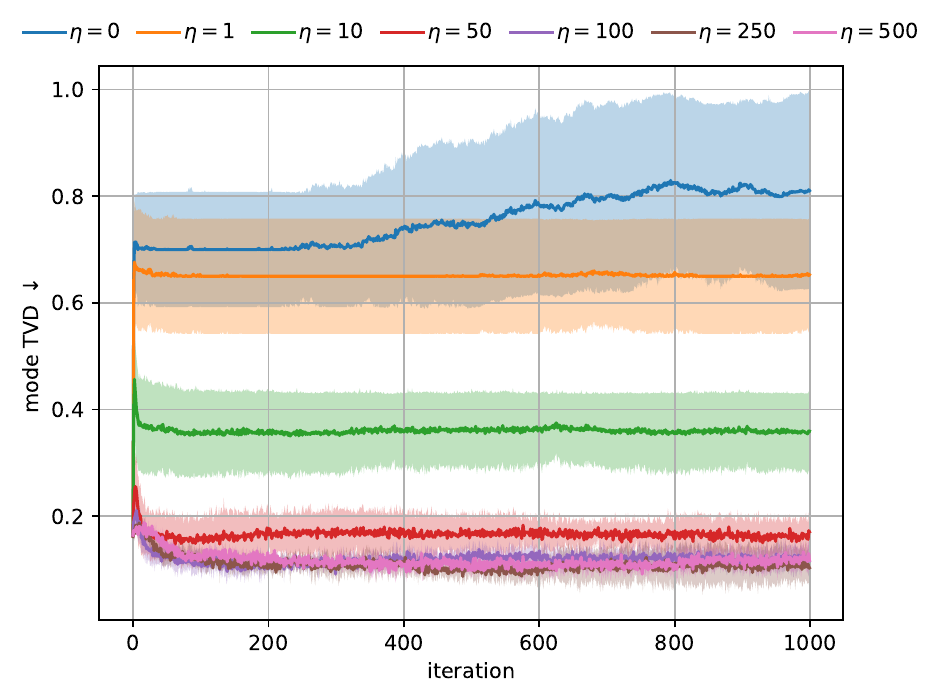}
        
        \subcaption{BMS ($d=250$) }
    \end{minipage}

        \begin{minipage}[b]{0.48\textwidth}
        \centering
        \includegraphics[width=0.48\linewidth]{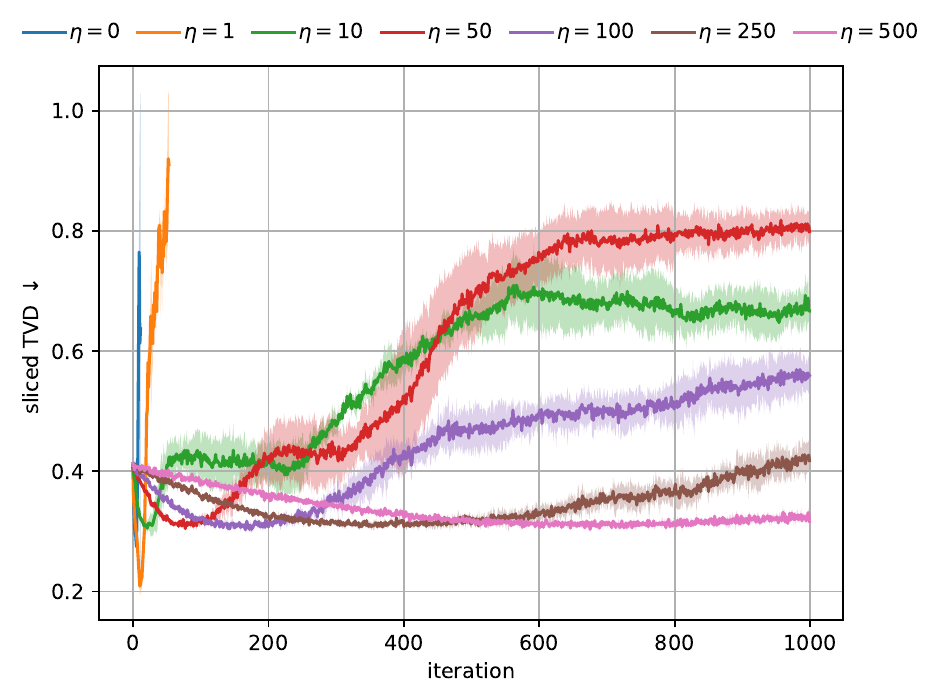}
        \hfill
        \includegraphics[width=0.48\linewidth]{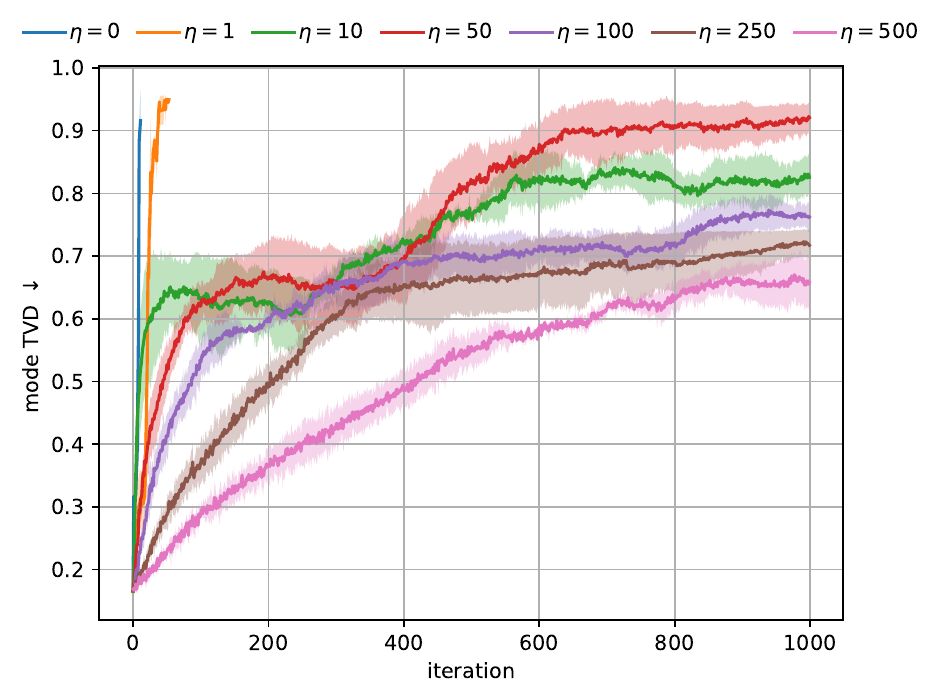}
        
        \subcaption{ASBS ($d=2500$) }
    \end{minipage}
    \hfill 
    \begin{minipage}[b]{0.48\textwidth}
        \centering
        \includegraphics[width=0.48\linewidth]{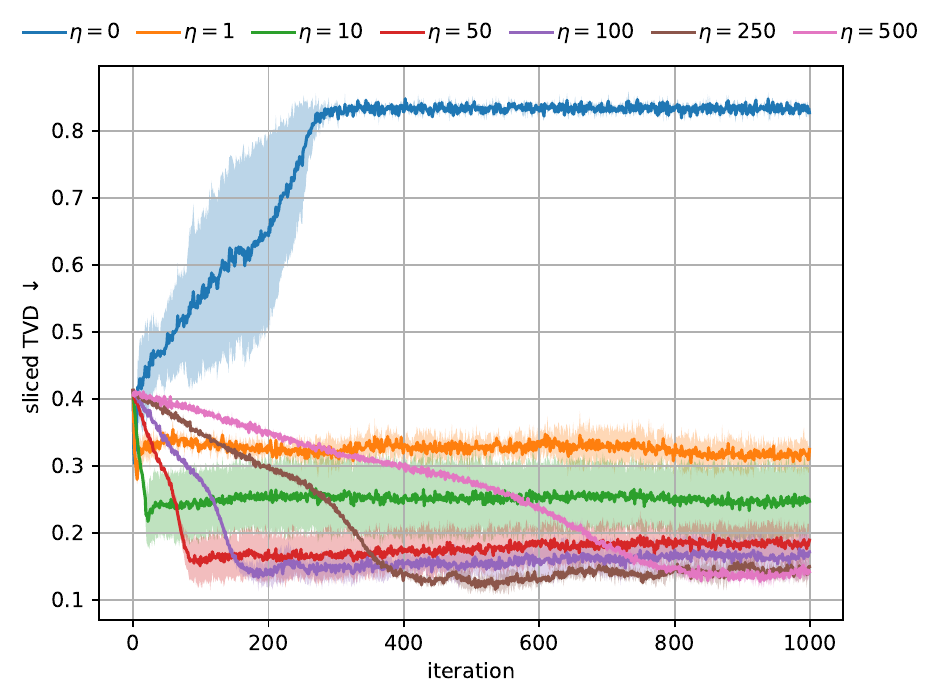}
        \hfill
        \includegraphics[width=0.48\linewidth]{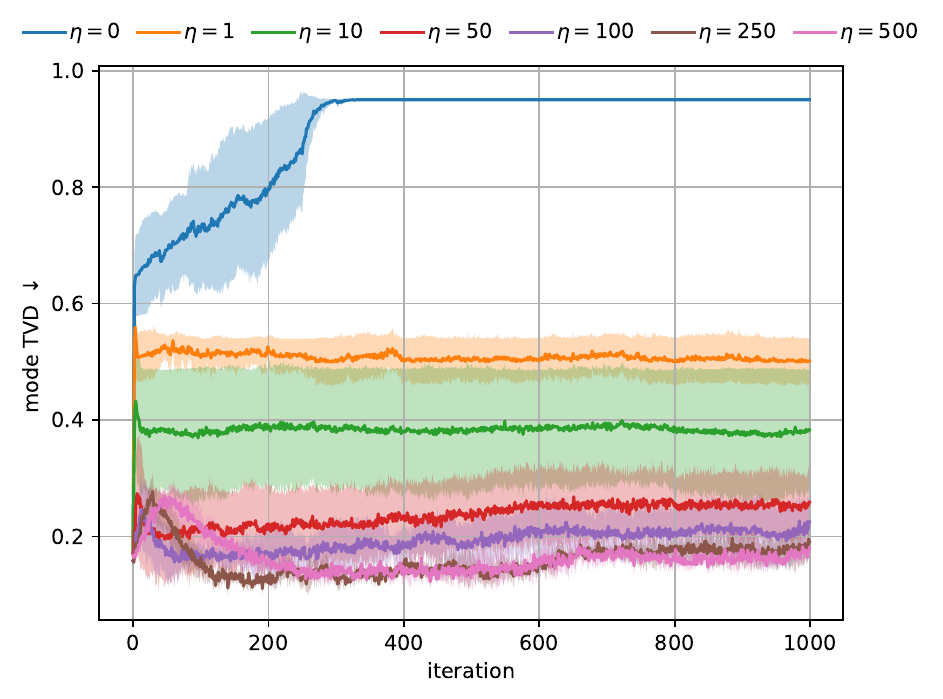}
        
        \subcaption{BMS ($d=2500$) }
    \end{minipage}

\caption{
    Comparison of \textit{mode TVD} and \textit{sliced TVD} on Gaussian mixture models across varying damping values $\eta$ for $d=250$ and $d=2500$ visualized over training iterations. All values are averaged over three random seeds.
}
    \label{fig:gmm training}
\end{figure*}

\paragraph{Evaluation criteria.}
To quantify the performance of our sampler on the Alanine dipeptide and tetrapeptide systems, we evaluate the learned density against ground-truth Molecular Dynamics (MD) data using Ramachandran plots, which serve as a standard visualization in structural biology to represent the joint distribution of the backbone dihedral angles $\phi$ and $\psi$ \cite{noe2019boltzmann}.
To measure the local backbone structural fidelity, we compared the probability distributions of the backbone dihedral angles, $\phi$ and $\psi$. For each residue $i$, we computed the 2D empirical probability density functions, $P_i(\phi, \psi)$ for the reference MD data and $P^{\mathrm{MD}}_i(\phi, \psi)$ for the generated samples. These distributions were discretized onto a $50 \times 50$ grid spanning $[-\pi, \pi] \times [\pi, \pi]$.
The similarity between the two distributions for the $i$th residue was quantified using the Jensen-Shannon divergence, defined as
\begin{equation}
    D^i_{\mathrm{JS}}(P_i | P^{\mathrm{MD}}_i) = \frac{1}{2} D_{\mathrm{KL}}(P_i | M_i) + \frac{1}{2} D_{\mathrm{KL}}(P^{\mathrm{MD}}_i | M_i),
\end{equation}
where $M_i = \frac{1}{2}(P_i + P^{\mathrm{MD}}_i)$ is the mixture distribution, and $D_{\mathrm{KL}}$ denotes the Kullback-Leibler divergence. The final reported metric is the average Jensen-Shannon divergence across all $N_{\mathrm{res}}$ residues:
\begin{equation}
    (\phi, \psi)-{D}_{\mathrm{JS}} \coloneqq \frac{1}{N_{\mathrm{res}}} \sum_{i=1}^{N_{\mathrm{res}}} D_{\mathrm{JS}}(P_i | Q_i).
\end{equation}
While $D_{\mathrm{JS}}$ measures the divergence of probability distributions, we also evaluated the thermodynamic alignment by comparing the implied Free-Energy Surfaces (FES) of the backbone dihedrals. Using the same $50 \times 50$ discrete probability distributions $P_i(\phi, \psi)$ obtained above, the free energy $\mathrm{F}_i$ for a given bin $b$ is estimated via the Boltzmann inversion
\begin{equation}
    \mathrm{F}_i(b) \coloneqq -k_B T \log(P_i(b) + \varepsilon),
\end{equation}
where $k_B T \approx 0.596$ kcal/mol (assuming $T = 300$ K) and $\varepsilon = 10^{-10}$ is a small constant added for numerical stability. To account for relative free-energy differences rather than absolute values, each surface was zero-centered by subtracting its global minimum, i.e., $    \mathrm{F}'_i(b) = \mathrm{F}_i(b) - \min_{b} \mathrm{F}_i(b).
$
To avoid noise from highly under-sampled, high-energy regions, we restricted our comparison to a valid subspace $\mathcal{M}_i$. A bin was included in $\mathcal{M}_i$ if its probability in \textit{either} the reference or the generated ensemble exceeded $10^{-4}$. The Root-Mean-Squared Error for residue $i$ is then computed as
\begin{equation}
    \mathrm{RMSE}_{\Delta \mathrm{F}, i} = \sqrt{ \frac{1}{|\mathcal{M}_i|} \sum_{b \in \mathcal{M}_i} \left( \mathrm{F}'_{\mathrm{ref}, i}(b) - \mathrm{F}'_{\mathrm{gen}, i}(b) \right)^2 }.
\end{equation}
The overall FES discrepancy is reported as the mean $\mathrm{RMSE}_{\Delta \mathrm{F}, i}$ across all residues.
To capture global conformational dynamics and long-range structural correlations, we evaluated the generated ensembles in a reduced, kinetically relevant latent space. We employed Time-lagged Independent Component Analysis (TICA), which identifies the slowest collective motions in a molecular system by finding projections that maximize the autocorrelation of the time-series MD data at a given lag time. We refer to \citet{tan2025amortized} for a more detailed description.

All results for $D_{\mathrm{JS}}$ and $\mathrm{RMSE}_{\Delta \mathrm{F}}$ except those presented in \Cref{fig:alanine jsd} are computed using $10^6$ model and MD data samples. The latter are taken from \citet{schopmans2025temperature}. Due to computational constraints, we used $10^4$ samples for computing the TICA-$W_2$ metric. Lastly, the values in  \Cref{fig:alanine jsd} are computed during training on the replay buffer and therefore use $\approx16k$ samples.

\begin{figure*}[tbp]
    \centering
        \includegraphics[width=\linewidth]{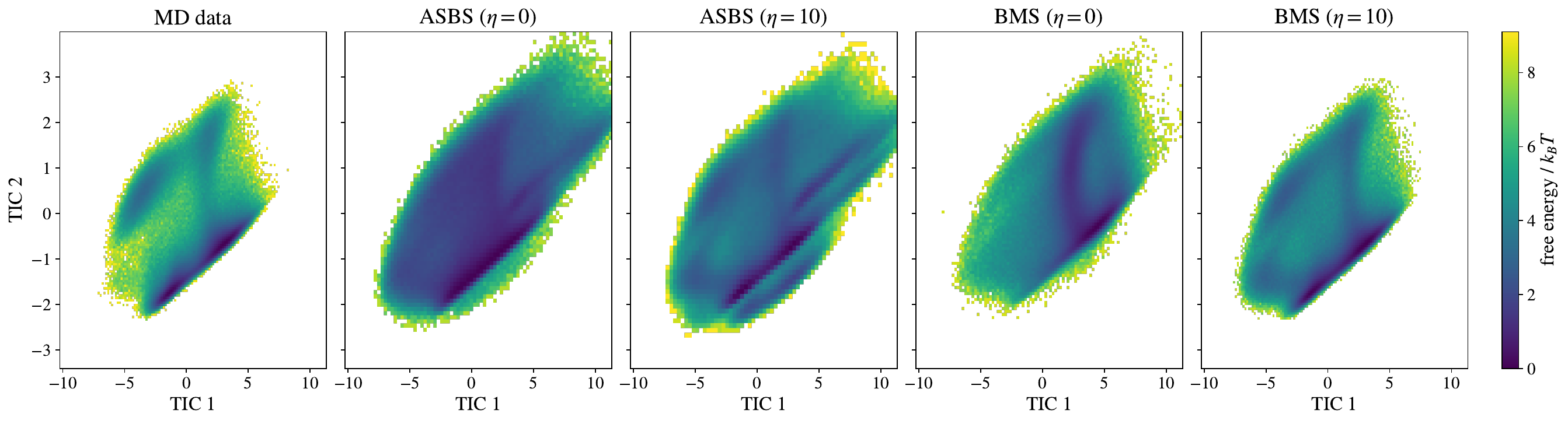}
        
\caption{
TICA plots with $10^6$ samples for Alanine dipeptide for Molecular Dynamics (MD) data, ASBS \cite{liu2025adjoint}, and our proposed method, BMS, for damping values $\eta\in \{0,10\}$.\vspace{-0.5cm}}
    \label{fig:ala2 tica}
\end{figure*}

\paragraph{Algorithm specifications.} Our implementation builds on the codebase by \citet{nam2025enhancing} (see \href{https://github.com/facebookresearch/wt-asbs}{github.com/facebookresearch/wt-asbs}), where we removed the collective variables as we are interested in the performance without using prior knowledge. We use the same hyperparameter setting for ASBS and our method, as detailed below.

For the model architecture, the PaiNN architecture \cite{schutt2021equivariant} was used, an E(3)-equivariant graph neural network, modified to include time-conditioning inputs and vector product layers \cite{schreiner2023implicit} to break parity symmetry, effectively restricting the model to SE(3)-equivariance \cite{liao2023equiformerv2}. To improve training stability, equivariant layer normalization is applied. For ALA2 with $\eta=0$, we follow the setting of \cite{nam2025enhancing} and employ 4 layers, a hidden dimension of 128 and $250$ diffusion steps, as we were facing numerical instabilities when using larger architectures. For $\eta=10$, we used 6 layers, a hidden dimension of 256, and $50$ diffusion steps, training stably. For ALA4, we also use 6 layers, a hidden dimension of 256, and $50$ diffusion steps. For the damped version we set $\eta=50$. A radial cutoff of \qty{8.0}{\angstrom} is used for graph construction for all methods and systems. For all algorithms, we used $c=\gamma$ when computing $\xi$.
The noise schedule is defined by the EDM variance exploding schedule \cite{karras2022elucidating}, i.e.,
\begin{equation}
\sigma(t) = \left[(1-t)\sigma_{\max}^{1/\rho} + t\sigma_{\min}^{1/\rho}\right]^{\rho}, \label{eq:45} \\    
\end{equation}
with a minimum noise level $\sigma_{\min} = \qty{0.001}{\angstrom}$, a maximum noise level $\sigma_{\max} = \qty{6.0}{\angstrom}$, and an exponent $\rho = 3.0$. The source distribution is modeled as a normal distribution $\mathcal{N}(0, \qty{1}{\angstrom}^2)$.
For optimization, the AdamW optimizer \cite{loshchilov2017decoupled} with a weight decay of $10^{-3}$. We employ a cosine annealing learning rate schedule, decaying from an initial learning rate of $1 \times 10^{-4}$ to a final learning rate of $3 \times 10^{-5}$. To stabilize training, energy gradients are clipped at a maximum value of $100.0$.
The training utilizes a replay buffer with a capacity of 16,384 samples. At each outer step, we draw 2,048 samples from the buffer and perform $L=100$ gradient updates. The diffusion process is integrated using the Euler-Maruyama method with 250 discretization steps.
For ALA2 and ALA4, we trained for $5$k and $8k$ outer steps, respectively. For ASBS, we follow \cite{nam2025enhancing} and perform $500$ corrector matching steps after every $1000$ outer steps. Note that we do not count these corrector matching steps towards the outer iterations, meaning that we provide more computational budget to ASBS. For model selection, we always picked the last checkpoint.

\section{Further numerical results}
\label{appendix:results}
Here, we provide additional results that complement the results in~\Cref{sec:experiments}.

\begin{figure*}[tbp]
    \centering
    \begin{subfigure}[b]{\textwidth}
        \centering
        \includegraphics[width=\linewidth]{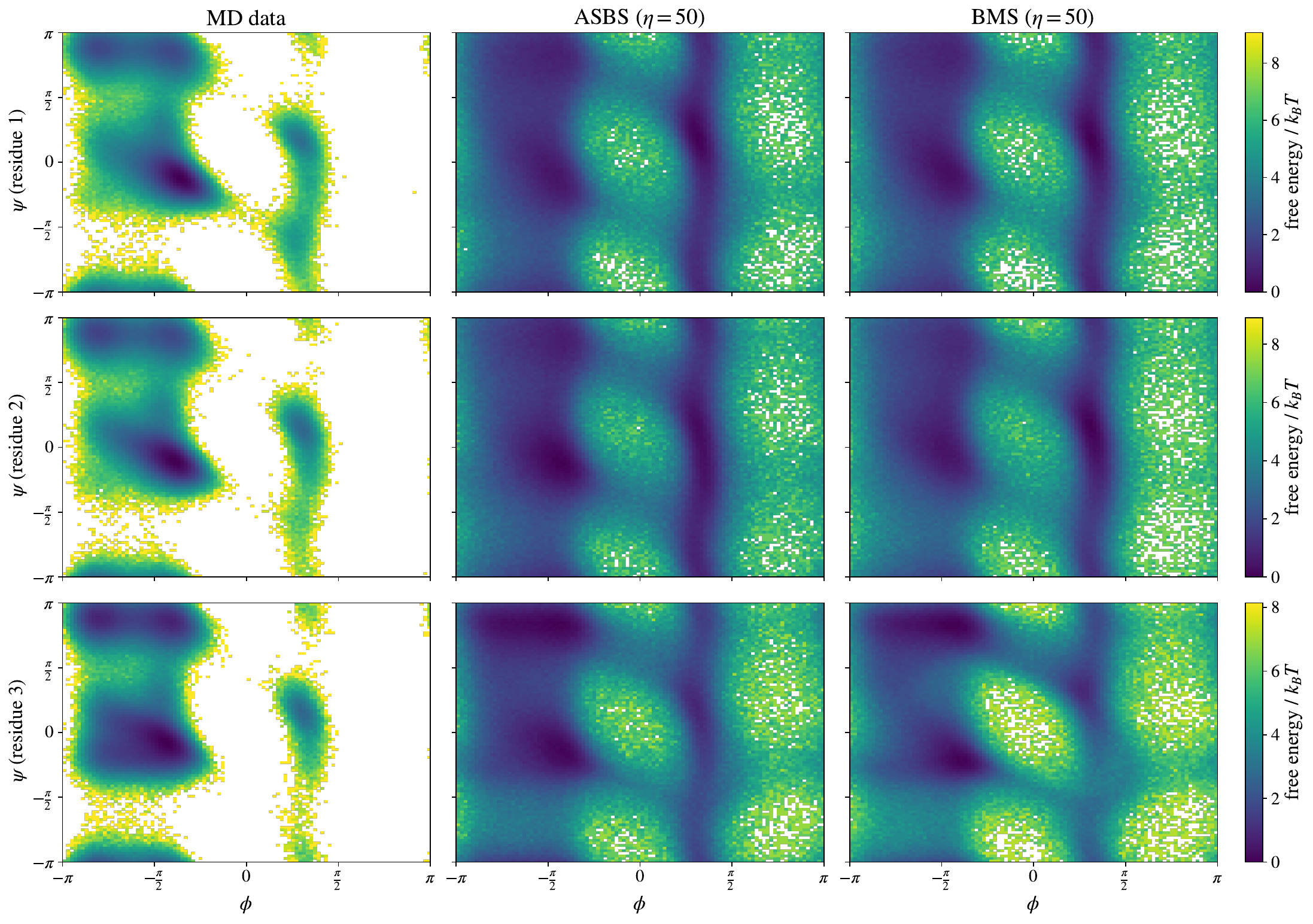}
        \caption{Ramachandran visualization}
        \label{fig:ala4 ram}
    \end{subfigure}\hfill
    \begin{subfigure}[b]{\textwidth}
        \centering
        \includegraphics[width=\linewidth]{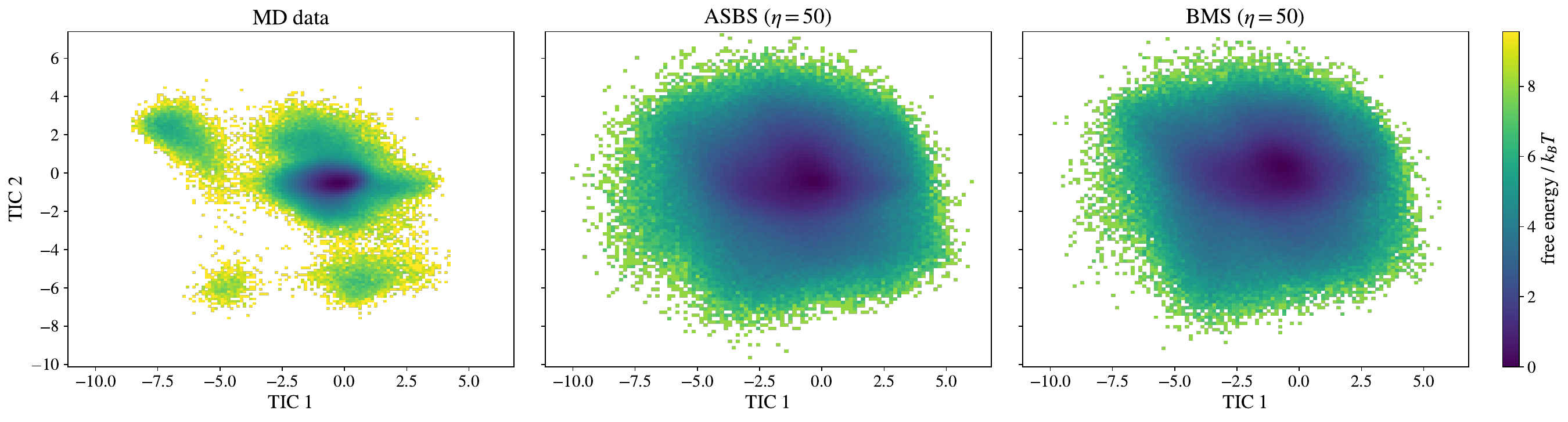}
        \caption{TICA visualization}
        \label{fig:ala4 tica}
    \end{subfigure}\hfill
\caption{
Ramachandran (\Cref{fig:ala4 ram}) and TICA (\Cref{fig:ala4 tica}) plots with $10^6$ samples for Alanine Tetrapeptide for molecular dynamics (MD) data, ASBS \cite{liu2025adjoint}, and our proposed method, BMS, for damping values $\eta=50$.
}
    \label{fig:ala4}
\end{figure*}

\paragraph{Gaussian mixture models.} For Gaussian mixture models, we additionally report the sliced total variation distance (TVD); see~\Cref{app: gmms} for further details. The results are shown in~\Cref{fig: sliced tvd gmm} and are consistent with the findings in~\Cref{sec:experiments}. Additionally, we test ASBS and our method on an increasing number of modes $K$ sampled from $[-K,K]^d$. The results are shown in~\Cref{fig:mode abl} and indicate that our method's performance is less degrading when increasing the number of modes.

\paragraph{$n$-body identical particle systems.} For reference, we provide visualizations of the energy histograms and interatomic distances in~\Cref{fig:energy histogramms}. Moreover, we visualize the performance over training iterations in~\Cref{fig:gmm training}

\paragraph{Molecular benchmarks: Peptide systems.} Here, we provide additional visualization for peptide systems. Specifically,~\Cref{fig:ala2 tica} shows the TICA plot for ALA2 and~\Cref{fig:ala4} shows the TICA and Ramachandran plots for ALA4.

\end{document}